\definecolor{SAEblue}{rgb}{0, .62, .71}
\definecolor{bulgarianrose}{rgb}{0.28, 0.02, 0.03}
\definecolor{uablue}{rgb}{0.0, 0.2, 0.67}
  \newcounter{cenumi}
  \newcounter{cenumisaved}
  \newcommand{\labelcenumi}{\arabic{cenumi}.}
    {\begin{list}{\labelcenumi}{\usecounter{cenumi}\partopsep=4pt\topsep=4pt\itemsep=4pt\parsep=4pt}%
    \setcounter{cenumi}{\value{cenumisaved}}}%
      {\setcounter{cenumisaved}{\value{cenumi}}%
    \end{list}}
  \newcounter{cenumii}
  \newcommand{\labelcenumii}{(\alph{cenumii})}
    {\begin{list}{\labelcenumii}{\usecounter{cenumii}\partopsep=0pt\topsep=0pt\itemsep=0pt\parsep=0pt}%
    }%
      {%
    \end{list}}
\newcommand{\BlackBox}{\rule{1.5ex}{1.5ex}}  
    \renewenvironment{proof}{\par\noindent{\bf Proof\ }}{\hfill\BlackBox\\[2mm]}
    \newenvironment{proof}{\par\noindent{\bf Proof\ }}{\hfill\BlackBox\\[2mm]}
\newtheorem{example}{Example} 
\newtheorem{theorem}{Theorem}
\newtheorem{lemma}[theorem]{Lemma} 
\newtheorem{proposition}[theorem]{Proposition} 
\newtheorem{remark}[theorem]{Remark}
\newtheorem{corollary}[theorem]{Corollary}
\newtheorem{definition}[theorem]{Definition}
\newcommand\colorAutoref[1]{{\hypersetup{linkcolor=blue}\autoref{#1}}}  
\Crefname{algocf}{Algorithm}{Algorithms}
\newcommand{\captionfonts}{\normalsize}
\long\def\@makecaption#1#2{%
  \vskip\abovecaptionskip
  \sbox\@tempboxa{{\captionfonts \textbf{#1}: #2}}%
  \ifdim \wd\@tempboxa >\hsize
    {\captionfonts \textbf{#1}: #2\par}
  \else
    \hbox to\hsize{\hfil\box\@tempboxa\hfil}%
  \fi
  \vskip\belowcaptionskip}
\begin{document}
\hspace{13.9cm}1

\begin{center}
\vspace{20mm}

\begin{spacing}{1.5}
    {\Large Associative Memory and Generative Diffusion in the Zero-noise Limit} \\[4mm]
\end{spacing}

{\large  Joshua Hess$^{1,2}$, Quaid Morris$^{1}$} \\[2mm]
\end{center}
$^1$Computational \& Systems Biology, 
       Sloan Kettering Institute, Memorial Sloan Kettering Cancer Center, New York, NY, USA. \\
$^2$Tri-Institutional Training Program in Computational Biology \& Medicine, \\
       New York, NY, USA. \\
\ \\[-2mm]
\noindent
{\bf Keywords:} associative memory, diffusion models, Morse-Smale dynamical systems, zero-noise limits, small random perturbations

\thispagestyle{empty}
\markboth{}{NC instructions}
\ \vspace{-0mm}\\
%
\begin{center} {\bf Abstract} \end{center}

This paper shows that generative diffusion processes converge to associative memory systems at vanishing noise levels and characterizes the stability, robustness, memorization, and generation dynamics of both model classes. Morse-Smale dynamical systems are shown to be universal approximators of associative memory models, with diffusion processes as their white-noise perturbations. The universal properties of associative memory that follow are used to characterize a generic transition from generation to memory as noise diminishes. Structural stability of Morse-Smale flows -- that is, the robustness of their global critical point structure -- implies the stability of both trajectories and invariant measures for diffusions in the zero-noise limit. The learning and generation landscapes of these models appear as parameterized families of gradient flows and their stochastic perturbations, and the bifurcation theory for Morse-Smale systems implies that they are generically stable except at isolated parameter values, where enumerable sets of local and global bifurcations govern transitions between stable systems in parameter space. These landscapes are thus characterized by ordered bifurcation sequences that create, destroy, or alter connections between rest points and are robust under small stochastic or deterministic perturbations. The framework is agnostic to model formulation, which we verify with examples from energy-based models, denoising diffusion models, and classical and modern Hopfield networks. We additionally derive structural stability criteria for Hopfield-type networks and find that simple cases violate them. Collectively, our geometric approach provides insight into the classification, stability, and emergence of memory and generative landscapes.

\newpage
\setcounter{tocdepth}{2}
\hypersetup{linkcolor=black} 
\tableofcontents
\hypersetup{linkcolor=blue} 

\subsection*{Terminology}
\label{subsection:terminology}
By \textit{manifold} we mean a closed smooth finite-dimensional manifold $M$. By \textit{smooth} we mean $C^{\infty}$ unless otherwise indicated. The boundary of $M$ is denoted by $\partial M$. By $\textit{closed}$ manifold we mean that $\partial M$ is empty. The space of $C^r$ vector fields on $M$ is denoted by $\chi^r(M)$ and the space of $C^r$ diffeomorphisms by $\text{Diff}^r(M)$. A subset of these spaces being \textit{open} or \textit{dense} is understood with respect to the compact-open $C^r$ topology. Vector fields and flows are assumed at least $C^1$ and are defined on closed manifolds or compact subsets $D \subset \mathbb{R}^n$ diffeomorphic to a closed $n$-dimensional disc with flows pointing inwards and intersecting the boundary transversally. Real-valued functions are assumed at least $C^2$ unless specified. The tangent bundle on a manifold $M$ is denoted $TM$ and the tangent space at a point $x\in M$ by $T_xM$. 

Let $(M,g)$ be a Riemannian manifold and $(M, \mathcal{B}, \mu)$ a Borel probability space. We take the natural Borel $\sigma$-field $\mathcal{B}$ on $(M,g)$ generated by Borel sets $A \in \mathcal{B}$ that coincide with the open sets of $M$. Borel probability measures are assumed absolutely continuous with respect to the volume element $d\text{vol}_g = \sqrt{|g|} \, dx^1 \wedge ...\wedge dx^m$, where $|g| = \text{det}(g_{ij})$ is the determinant of the metric tensor. This does not apply to atomic measures.

\newpage
\linespread{1}
\section{Introduction}
\label{section:Introduction}

Associative memory models have been used as abstractions of neural memory for over fifty years. These models, originally inspired by Hebbian learning theory, recall through energy descent attractor dynamics (\cite{dayan2005theoretical}), capturing key features of biological memory retrieval, which operates robustly despite intrinsic noise and variability. They were popularized in a binary state setting with limited storage capacity (\cite{hopfield1982neural}; however, recent models, like dense associative memory or modern Hopfield networks, have removed this limitation (\cite{krotov2016dense, demircigil2017model}). They are now central to modern deep learning (\cite{krotov2023new}), with equivalent formulations to the attention mechanism of the transformer, an architecture behind state-of-the-art natural language processing (\cite{vaswani2017attention,ramsauer2020hopfield}), extensions to full transformer blocks (\cite{hoover2023energy}) and biologically plausible implementations (\cite{kozachkov2025neuron}). Meanwhile, diffusion models have become best-in-class generative models for many tasks (\cite{rombach2022high,ramesh2022hierarchical,cao2024survey}), but their history dates to the Boltzmann machine (\cite{ackley1985learning}), a probabilistic version of the Hopfield model. Although analogies between these model classes are known, interest in their mathematical connections has highlighted a deeper continuum between memory and generation (\cite{hoover2023memory, ambrogioni2024search}).

Continuous-state associative memory and diffusion models define deterministic and random dynamical systems. Recent work suggests that critical point and bifurcation analysis can identify memorization and generation phases in these models, as well as transitions between them, as noise diminishes or as memory load exceeds a critical threshold. Spurious memories arise in the latter case as unintended fixed points. Their behavior is well studied probabilistically (e.g., \cite{gayrard2025mixed}), and \cite{pham2025memorization} identified their appearance as an intermediate phase in a memory-to-generation transition in the infinite memory limit. Analogously, symmetry breaking and memorization phases have been identified during generation in diffusion models (\cite{raya2023spontaneous, biroli2024dynamical}), which proceeds through progressive noise titration. It is known that metastable states emerge and disappear in random systems, and random-to-deterministic transitions are similarly well studied at vanishing noise levels, owing largely to the Freidlin-Wentsell (FW) theory of random dynamical systems (\cite{ventsel1970small}), among others (e.g., \cite{bovier2001metastability,bovier2004metastability}). 

These results are rigorous but largely local -- derived in neighborhoods of critical points (symmetry breaking), concerned with transitions between them (FW theory), or, in the generation-to-memory transition at infinite memory, are not \textit{generic}, or typical, in parameter space due to nongeneric, degenerate critical manifolds that appear. Global descriptions of the robustness, stability, learning and generation dynamics, and orbit structure of these models are less developed. All of these topics are aligned with the notion of genericity. A generic or universal property holds for "almost all" systems --  here, on a dense open subset of an appropriate function or parameter space (\Cref{subsection:genericity-morse-smale-models-associative-memory}). A property holding on a dense subset reflects its approximation power\footnote{See, e.g., classic work in \cite{cybenko1989approximation}, Theorem 1 for a proof of universal approximation using density explicitly; see also \cite{funahashi1989approximate}, Theorems 1 and 2 for proofs implicitly using density.}; a property holding on an open subset reflects its robustness and stability. 

Search for generic properties historically progressed both geometric and probabilistic approaches to dynamical systems. On the geometric side, determining whether global notions of stability are generic stimulated work, largely driven by Smale's school, which laid the basic theory of the subject. Major results pertain to hyperbolic (e.g., Axiom A) and \textit{structurally stable} systems (\cite{smale1967differentiable}). A special subset satisfying the \textit{Morse-Smale} conditions are closely related to the topology of manifolds (\cite{smale1960morse}), making the generic properties of these models well suited for geometric and topological arguments. A probabilistic approach to studying a system's asymptotic behavior, rooted in ergodic theory, developed in parallel to counterexamples showing that structurally stable systems are not generic (e.g., \cite{newhouse1970nondensity}; see \cite{palis2000global, palis2005global}). Here, the geometric and probabilistic approaches are combined with the FW theory to globally detail the generic properties of these models, a generation-to-memory transition at diminishing noise levels, and sequences of bifurcations associated with the loss of structural stability that characterize their learning and generation processes. 

Specifically, Morse-Smale gradient systems are shown to be a generic class of associative memory models and diffusions are described as their white-noise perturbations. Zero-noise-limiting descriptions of invariant measures and trajectories of diffusion models are shown to converge to those of associative memory systems within this class, and universal properties of associative memory implied by the Morse-Smale conditions are combined with results from Morse theory to describe their stability and robustness in the zero-noise limit. These conditions are equivalent to the existence of a neighborhood of a gradient field in which all nearby models are topologically equivalent to it (structural stability). As a result, all nearby models have the same critical point structure and global connections between them, which can be organized into invariant directed acyclic graphs (DAGs) that characterize each memory landscape. Although associative memory models naturally correct noisy and error-prone initial conditions, the theory takes this local robustness to the global topological structure of the system and its invariant measures under small stochastic or deterministic perturbations.

Breakdowns of the Morse-Smale conditions mark bifurcations that change a model's topological type. The learning and generation processes of these models appear as one- and two-parameter families of gradients, which have well-studied, enumerable sets of generic bifurcations that we detail. Along with additional local bifurcations, global bifurcations occur that are not described by local theories. While local bifurcations describe the creation and destruction of fixed points, global bifurcations alter connections between fixed points and therefore the trajectories of recall and generation. Sequences, or cascades of generic bifurcations thus globally characterize memory formation and the learning and generation dynamics of diffusion models, and can be visualized as discrete graph edits between DAGs. Examples from Hopfield networks, energy-based models, and denoising diffusion models are given to verify the theory, and conditions under which Hopfield-type networks satisfy the Morse-Smale assumptions are derived. Our results are summarized in \Cref{subsection:overview}.


\subsection{Dynamical systems and associative memory}
Associative memories are often understood as attractors of a dynamical system with basins of attraction containing points that evolve asymptotically to them. Memory recall consists of evolving the network to these fixed points given initial conditions. Therefore, small perturbations within a neighborhood of a memory decay over time, making these models capable of pattern completion and error correction. 

Widely known models, including all models of Hopfield type, define gradient dynamical systems on compact subsets of $\mathbb{R}^n$. More generally, they can be defined on a manifold $M$. If $V \in C^2(M, \mathbb{R})$ is a smooth "energy" function and $M$ is given a Riemannian metric $g = \sum g_{ij} \, dx^i \otimes dx^j$, the derivative $DV(x): T_xM \rightarrow \mathbb{R}$ defines a linear function on the tangent space $T_xM$ at $x \in M$ and hence a smooth 1-form on $M$. The (inverse) metric converts this 1-form to a vector field and defines a gradient system $X := - \nabla_g V$. This vector field generates a flow $\phi^X_t: \mathbb{R} \times M \rightarrow M$ with $t \in \mathbb{R}$ (\Cref{subsection:general-aspects-of-associative-memory}), and in this formulation, memories are \textit{asymptotically stable} points $p \in M$, meaning that there is a neighborhood $U(p)$ of $p$ such that for all $q \in U$, $\phi_t^X(q) \rightarrow p$ as $t \rightarrow \infty$. Asymptotic stability is an implicit defining property that can be found with the introduction of the classic models in \cite{hopfield1982neural,hopfield1984neurons}. 

It can be checked that off critical points, where $X(p) = 0$, the energy $V\left(\phi_t^X(p)\right)$ decreases as $t$ increases. Therefore, if a memory $p \in M$ is an isolated minima of $V$, then it is asymptotically stable. Consequently, gradient systems are natural models of associative memory, and memory storage consists of finding an energy function whose attractors are the desired memory states.


\subsection{Generative diffusion models}
Generative models aim to learn a probability distribution ${\mu_{\text{data}}(dx) = p_{\text{data}}(x) \, dx}$ over a set of data in $\mathbb{R}^n$, with density \( p_{\text{data}}(x) \) absolutely continuous with respect to Lebesgue measure $dx$, by inferring an optimal set of parameters \( \theta^{*} \) for a family of distributions ${\mu_{\theta}(dx) = p_{\theta}(x) \, dx}$ (\cite{cotler2023renormalizing}). Rather than explicitly defining distributions, diffusion models are defined implicitly as solutions to stochastic differential equations. Consider random perturbations to a dynamical system $\dot{x_t} = b(x_t)$ in $\mathbb{R}^n$:
\begin{equation}
\label{equation:stochastic-ode}
    d{x_t^{\epsilon}} = b(x_t^{\epsilon}) \,dt + \epsilon \sigma(x_t^{\epsilon}) \,dw_t \, ,
\end{equation}
where $b$ is a vector field, $w_t$ is the standard $n-$dimensional Wiener process (Brownian motion), $\sigma(x)$ is an $n\times n$ matrix, and $\epsilon>0$ is a scalar noise level. 

Solutions to \eqref{equation:stochastic-ode} define a diffusion Markov process $\mathcal{X}^{\epsilon}=\left\{x^{\epsilon}_t \right\}^T_{t=0}$, with $t \in [0,T]$, given by transition probabilities $\{p^{\epsilon}(\cdot \, | \, x) : x\in \mathbb{R}^n\}$ that satisfy the Chapman-Kolmogorov equation,
\begin{equation*}
\label{equation:transition-probabilities-diffusion-markov}
    p^{\epsilon}(y, s  \, | \, x,t_0) = \int_{\mathbb{R}^n} p^{\epsilon}(y_1, t_1  \, | \, x,t_0) \, p^{\epsilon}(y, s \, | \, y_1,t_1 ) \, dy_{1} \, ,
\end{equation*}
for $t_0 < t_1 < s$ and intermediate states $y_1$. A diffusion process like $\mathcal{X}^{\epsilon}$ is associated with a second-order differential operator, called the backward Kolmogorov operator, given by
\begin{equation}
\label{equation:Euclidean-backward-Kolmogorov}
    \mathcal{L}^{\epsilon} = \sum b^i \frac{\partial}{\partial x^i} + \frac{\epsilon^2}{2}\sum a^{ij} \frac{\partial^2}{\partial x^i \partial x^j} \, ,
\end{equation}
where $a^{ij} = \sigma(x)\sigma^*(x)$ and $\sigma^*(x)$ is the adjoint of $\sigma (x)$ (\cite{ventsel1970small}). The forward Kolmogorov operator, denoted $\mathcal{L}^{*\epsilon}$, is the adjoint of $\mathcal{L}^{\epsilon}$. We consider its action on probability densities, giving the Fokker-Planck equation,
\begin{equation}
\label{equation:Euclidean-Fokker-Planck-equation}
    \frac{\partial p(x,t)}{\partial t} = \sum \frac{\partial}{\partial x^i}\left[b^i p(x,t)\right] + \frac{\epsilon^2}{2}\sum \frac{\partial^2}{\partial x^i \partial x^j}\left[a^{ij} p(x,t)\right] \, ,
\end{equation}
which induces a flow $\partial_t p_t = \mathcal{L}^{*\epsilon}p_t$ solved by the transition density (given initial conditions)\footnote{We assume $b(\cdot)$ and $\sigma(\cdot)$ satisfy regularity and growth conditions to ensure a unique strong solution to \eqref{equation:stochastic-ode} and for transition densities to satisfy the forward and backward Kolmogorov equations; see, e.g., \cite{anderson1982reverse}, Section 3 for such conditions.}. 

Diffusion models sample from $p_{\text{data}}$ using the flow $\partial_t p_t$ induced by \eqref{equation:stochastic-ode} and \eqref{equation:Euclidean-Fokker-Planck-equation}. One class of models, which includes energy-based models and Boltzmann machines, uses forward diffusion processes that evolve an initial density $p_0$, to a density $p_T$ with $0<T\leq\infty$ and $p_{\text{data}} \approx p_{T}$. Another class generates data by learning the reverse of a diffusion process that iteratively adds noise to data, motivated by the main theorem in \cite{anderson1982reverse}, which shows that diffusions admitting a transition density also admit a reverse-time process, which induces a flow from $p_T$ to $p_0 := p_{\text{data}}$. Both model classes are determined by fixed points of the operator $\mathcal{L}^{*\epsilon}$. 

Let $\mathcal{M}$ be the set of Borel probability measures on $\mathbb{R}^n$ with the weak topology. 

\begin{definition}[Stationary and equilibrium distributions]
\label{definition:stationary-or-invariant-measures}
    A probability measure $\mu^{\epsilon} \in \mathcal{M}$ is a \textbf{\textit{stationary}} or \textbf{\textit{invariant measure}} for $\mathcal{X}^{\epsilon}$ if for each Borel set $A$ and $x \in \mathbb{R}^n$: ${\mu^{\epsilon}(A) = \int p^{\epsilon}(A, t \, | \, x) \, \mu^{\epsilon}(dx)}$ with $t>0$; that is, if it is a fixed point of $\mathcal{L}^{*\epsilon}$. It is called an \textbf{\textit{equilibrium distribution}} if ${\int \mu^{\epsilon}(dx)=1}$ and ${p^{\epsilon}(A, t \, | \, x)\rightarrow \mu(A)}$ as $t\rightarrow \infty$.
\end{definition}

If an equilibrium distribution exists, it is unique. Stationary distributions are also equilibrium distributions up to a constant factor; see \cite{kent1978time}, Section 5. 

More generally, the support of the density \( p(x) \) is a Riemannian $n-$manifold $(M,g)$. In local coordinates $(x^1,...,x^n)$ on $M$, equations like \eqref{equation:stochastic-ode} become
\begin{equation}
\label{equation:sde-on-manifold}
    d{x_t^{\epsilon}} = b^{\epsilon}(x_t^{\epsilon}) \,dt + \epsilon \sigma(x_t^{\epsilon}) \,dw_t \, ,
\end{equation}
where $b^{\epsilon}(x) = b(x) + \frac{\epsilon^2}{2}\widetilde{b}(x)$, with
\begin{equation*}
    \widetilde{b}^i = \sqrt{\det(g^{ij})}\sum_j\frac{\partial}{\partial x^j}\left [ g^{ij}\sqrt{\det(g_{ij})} \right]
\end{equation*}
and $g^{ij} = \sigma(x)\sigma^*(x)$, which represents a drift correction taking into account the local curvature of $M$. The operator generating a diffusion process $\mathcal{X}^{\epsilon}$ solving \eqref{equation:sde-on-manifold} is
\begin{equation*}
\label{equation:perturbations-morse-smale-gradient-operator}
    L^{\epsilon}f(x) = \frac{\epsilon^2}{2}\Delta f(x) + \left<b(x), \nabla_g f(x) \right > \, ,
\end{equation*}
where $f \in C^{2}(M)$, $\Delta$ is the Laplace-Beltrami operator corresponding to the metric $g$, and $\nabla_g$ is the Riemannian gradient. 

It is shown in \Cref{section:applications-examples} that the drift fields of many diffusion models are gradients. That is, the term $b(x_t^{\epsilon})$ in \eqref{equation:sde-on-manifold} is the gradient of a function $V\in C^2(M, \mathbb{R})$, so
\begin{equation}
\label{equation:riemannian-small-random-perturbation-equation}
    d{x_t^{\epsilon}} = -\nabla_g V ^{\epsilon}(x_t^{\epsilon}) \, dt + \epsilon \sigma(x_t^{\epsilon}) \, dw_t.
\end{equation}
The notation $\nabla_g V^{\epsilon}$ emphasizes that the drift is the Riemannian gradient of $V$ with drift correction. It is known that if diffusions solving \eqref{equation:riemannian-small-random-perturbation-equation} do not escape to infinity (i.e., they are \textit{nonexplosive}), which is ensured if $M$ is compact\footnote{See also \cite{kent1978time}, Theorem 4.2 for $\mathcal{X}^{\epsilon}$ being reversible, which implies the same result.}, then their stationary measures are Boltzmann-Gibbs distributions:
\begin{equation}
    \mu(dx)=\frac{1}{Z} e^{\frac{-V(x)}{\epsilon^2}} d\text{vol}_{g}(x) \hspace{1em} \text{with } \hspace{1em} Z = \int_M e^{\frac{-V(x)}{\epsilon^2}} d\text{vol}_{g}(x) \, ,
\end{equation}
where $\mu(dx)$ is absolutely continuous with respect to Lebesgue measure and $Z$ ensures that the density $p(x) = {Z}^{-1}\cdot e^{{-V(x)}/{\epsilon^2}}$ is normalized with $\int_M p(x) \, dx = 1$.

\subsection{Outline and overview of main results}
\label{subsection:overview}

Exact samples can be drawn from stationary distributions of diffusion models as solutions to stochastic differential equations like \eqref{equation:riemannian-small-random-perturbation-equation} when $t \rightarrow \infty$. Likewise, associative memory models evolve network states to fixed points in this limit. Like in the stochastic case, this asymptotic behavior actually encodes information about invariant probability measures. Moreover, when $\epsilon = 0$, \eqref{equation:sde-on-manifold} is the equation of a dynamical system and \eqref{equation:riemannian-small-random-perturbation-equation} a gradient system. Consequently, it is of interest to study the limiting behavior of diffusion processes with gradient drift as noise levels vanish, $\epsilon \rightarrow 0$, and when time reaches $\pm \infty$. The trajectories of diffusions with vanishing, but small positive noise are studied when $\epsilon \rightarrow 0$, while their invariant measures are studied when $t\rightarrow \pm\infty$. 

\paragraph{Trajectories and stationary measures.} 
Given an initial condition $x = x_0$, the forwards Kolmogorov equation implies that the transition density of a diffusion process is increasingly governed by the drift field as $\epsilon \rightarrow 0$, making diffusion models with gradient drift \textit{small random perturbations} of associative memory models:

\begin{definition}[Small random perturbation, adapted from \cite{cowieson2005srb}]
\label{definition:small-random-perturbation}
    Let $\phi: M \rightarrow M$ be a $C^{r}$ diffeomorphism for $r \geq 1$. A one-parameter family of Markov chains $\{\mathcal{X}^{\epsilon}\}_{\epsilon>0}$, parameterized by $\epsilon$ and given by transition probabilities $\{p^\epsilon(\cdot \, | \, x): x \in M\}$ is a \textbf{\textit{small random perturbation}} of $\phi$ if $p^\epsilon(\cdot \, | \, x) \rightarrow \delta_{\phi(x)}$ uniformly as $\epsilon \rightarrow 0$. 
\end{definition}

The flow $\phi_t^X$ of a gradient system $X = -\nabla_g V$ is downhill off its set of critical points, since $DV_pX(\cdot) = -|| X(\cdot) ||^2$. Consequently, it has no \textit{closed orbits}, i.e., trajectories diffeomorphic to the $1$-sphere, $S^1$, and its only critical elements are singularities. By the Poincaré recurrence theorem (\cite{sinai1989dynamical}, Theorem 2.1), any probability measure invariant under $\phi_t^X$ must assign zero measure to Borel sets $A$ with $\phi^X_t(A) \cap A = \emptyset$ when $t\rightarrow \infty$, implying invariant measures for gradient flows concentrate on their singularities; however, they are not absolutely continuous with respect to Lebesgue measure, in contrast to invariant measures for diffusions -- the two are weakly related:

\begin{definition}[Zero-noise limit, \cite{cowieson2005srb}]
    Let $M$ be a Riemannian manifold and $\mathcal{M}$ the set of Borel probability measures on $M$. A probability measure $\mu \in \mathcal{M}$ is a \textbf{\textit{zero-noise limit}} of a small random perturbation $\mathcal{X}^{\epsilon}$ of $\phi: M \rightarrow M$ if $\mu$ is a limit point of the sequence $\{\mu^{\epsilon}\}$ as $\epsilon \rightarrow 0$ in the weak topology on $\mathcal{M}$, where $\mu^{\epsilon}$ are stationary measures of $\mathcal{X}^{\epsilon}$. That is, $\int_M f(x) \,d\mu^{\epsilon}(x) = \int_M f(x) \,d\mu(x)$ as $\epsilon \rightarrow 0$ for any continuous bounded function $f \in C^0_b(M)$.
\end{definition}

\paragraph{Stability and universal properties.}
The elements of a generic set of \textit{Morse-Smale} gradient systems have a finite number of critical points and satisfy two conditions that describe their behavior at critical points (Morse condition) and off critical points (Smale condition). That these systems are generic implies that they are universal approximators for gradient-based associative memory models (\Cref{subsection:genericity-morse-smale-models-associative-memory}). Several universal properties of associative memory are derived from the Morse-Smale conditions in \Cref{subsection:structural-stability-models-associative-memory}, and they are used to study a generic generation-to-memory transition through the zero-noise limiting behavior of invariant measures and trajectories of small random perturbations of gradient flows in \Cref{section:diffusion-zero-noise-limit,section:generic-arcs}.

Local and global stability are two properties that follow from the Morse-Smale conditions. The Morse condition implies that the critical points of these systems are nondegenerate and isolated, and that each only displays three possible types of behavior characterized by the \textit{index}, or number of negative eigenvalues of the Hessian matrix of the potential at that point. An index $0$ critical point is a local minimum and acts as an \textit{attractor} (i.e., a memory), while a point of maximal index is a \textit{repellor}. Critical points of intermediate index are \textit{saddles}, with both stable and unstable directions. The asymptotic stability of attractors follows from these properties, making the Morse condition fundamental for associative memory systems (\Cref{subsubsection:asymptotic-stability}).

The Smale condition is necessary for a global form of stability, called \textit{structural stability}, that is relevant to describe robustness to model parameter perturbation. Two vector fields $X,Y \in \chi^r(M)$ are \textit{\textbf{topologically equivalent}} if there exists a homeomorphism $h: M \rightarrow M$ that maps orbits of $X$ to orbits of $Y$ preserving orientation. That is, there exist times $t_1, t_2 >0$ such that $h(\phi^X_{t_{1}}(p)) = \phi^Y_{t_{2}}\left( h(p)\right)$ for any $p\in M$. If $h$ preserves the time parametrization, it is called a \textit{conjugacy}. Topologically equivalent systems necessarily have the same critical points (modulo homeomorphism) whose indices agree. Consequently, topological equivalence captures when associative memory models are qualitatively the same. Structural stability captures the notion that small perturbations to a dynamical system should not change its topological type. 

Let $X$ be a $C^r$ vector field on a manifold $M$. A vector field $\delta X$ is called a \textit{${C^k}$ perturbation of size ${\epsilon}$} for $k \leq r$ if the difference between $X$ and $\delta X$, and their $k^{\text{th}}$ order partial derivatives are uniformly less than $\epsilon$ at all points $p \in M$ (\cite{guckenheimer2013nonlinear}, Chapter 1.7)\footnote{A general definition can be made with respect to the compact-open topology on the set $\chi^r(M)$ of $C^r$ vector fields on a manifold $M$.}.

\begin{definition}[Structural stability]
\label{definition:structural-stability}
    A vector field $X \in \chi^r(M)$ is \textbf{\textit{structurally stable}} if there exists an $\epsilon>0$ so that $C^1$ perturbations of size less than or equal to $\epsilon$ are topologically equivalent to $X$. That is, there is a neighborhood $U$ of $X$ in $\chi^r(M)$ such that all $Y \in U$ are topologically equivalent to $X$.
\end{definition}

Unlike asymptotic stability, structural stability is global, defined by the dynamical system itself. "Robustness" of a vector field means that all nearby systems (in a neighborhood) are isomorphic to it. Importantly, a gradient system is structurally stable exactly when it is Morse-Smale. 

The following example illustrates small random perturbations and zero-noise limits using a bistable, Morse-Smale memory model. Examples throughout are motivated by normal form polynomials (those with minimal parameters) for illustrative purposes, and they are vector fields and small random perturbations on globally attracting regions diffeomorphic to the closed $n$-dimensional disc $D \subset \mathbb{R}^n$; that is, the set $\{ x \in \mathbb{R}^n \, : \, ||x|| \leq 1 \}$ with a spherical boundary $\partial D \cong \{ x \in \mathbb{R}^n \, : \, ||x|| = 1 \}$ that intersects these vector fields transversally. The transversality condition is generic (\Cref{subsubsection:assumption-boundary}).

\begin{example}[Bistable associative memory, \colorAutoref{fig:figure-intuition-example-zero-noise}]
\label{example:bistable-memories}
Consider storing two patterns $\beta_1, \beta_2$ using a dual-well potential in two variables $v_1, v_2$ with quadratic terms in all other variables. Add quartic terms on $v_1,v_2$ to bound the dynamics with repellors at $\pm \infty$ so that the flow points inward. Neuronal states evolve via gradient dynamics: 
\begin{equation*}
\label{equation:example-bistable-associative-memory-dual-well}
    {\dot{v^i}} = - \sum_{j=1}^{n} \delta^{ij} \frac{\partial V(v)}{\partial v^i} \hspace{1em} 
    \mathrm{with} \hspace{1em}  V(v) = \dfrac{1}{4} v_{1}^{4} + \dfrac{1}{4} v_{2}^{4} - v_{1}^{2} + v_{2}^2 + \sum_{i=3}^{n}v_{i}^{2} \,.
\end{equation*}
Given a noise level $\epsilon > 0$, a diffusion process $\mathcal{X}^{\epsilon}=\left\{v^{\epsilon}_t \right\}^T_{t=0}$ with $t \in [0,T]$ can be constructed as solutions to \eqref{equation:riemannian-small-random-perturbation-equation}. As $\epsilon \rightarrow 0$, the trajectories of $\mathcal{X}^{\epsilon}$ converge to those of the deterministic gradient flow. This intuition is captured by saying that $\{\mathcal{X}^{\epsilon}\}_{\epsilon>0}$ is a small random perturbation of the gradient flow $\phi_t^X$. Invariant measures of $\mathcal{X}^{\epsilon}$ are Boltzmann-Gibbs distributions, $\mu^{\epsilon}(dv)=({Z^{\epsilon}})^{-1} \cdot e^{{-V(v)}/{\epsilon^2}} d\mathrm{vol}_{g}(v)$ with ${Z^{\epsilon} = \int_{D} e^{{-V(v)}/{\epsilon^2}} d\mathrm{vol}_{g}(v)}$.

\begin{figure}[t!]
        \centering
        \includegraphics[width=1\textwidth]{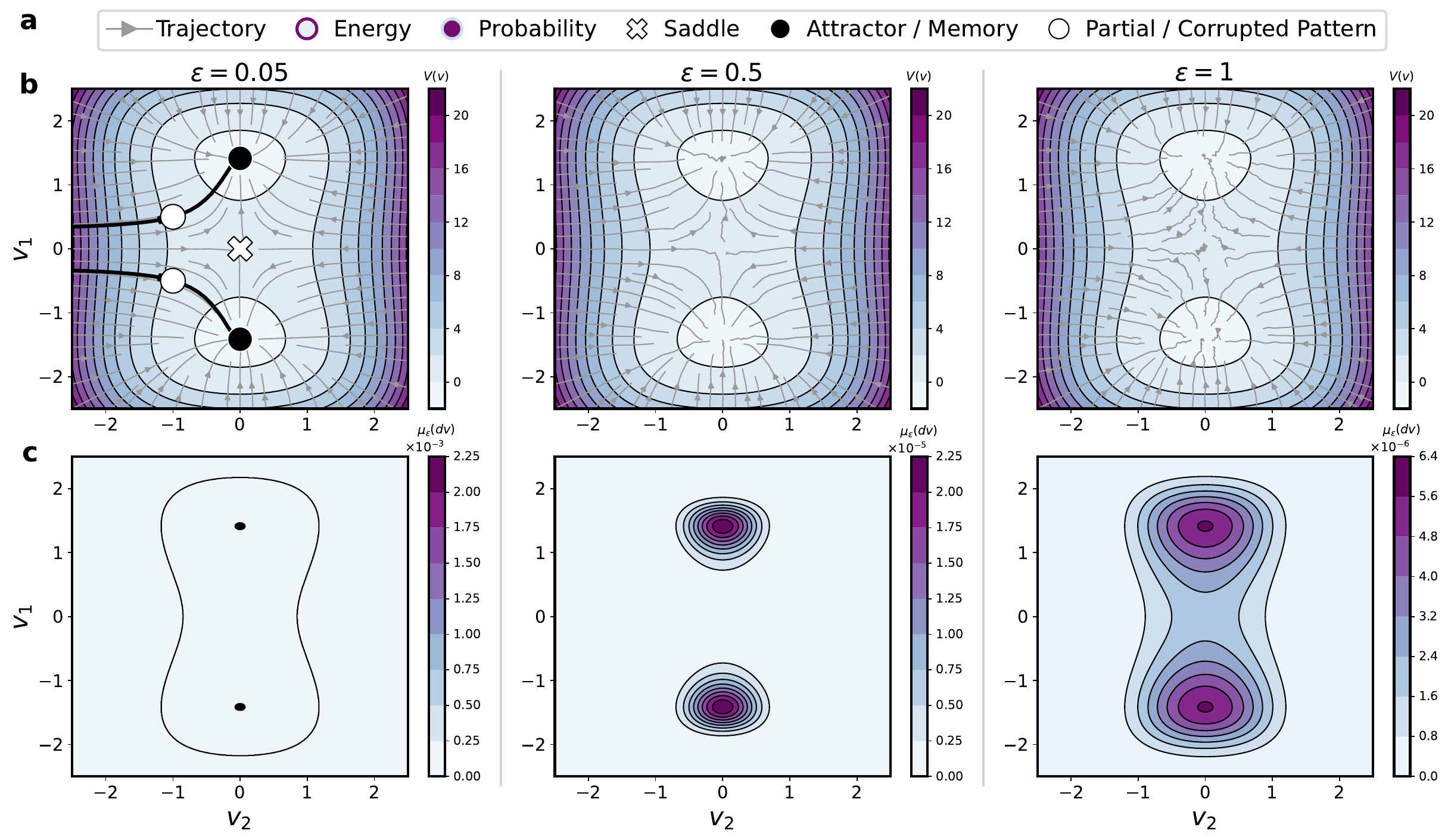}
        \caption{\textbf{\textit{Small random perturbations} and \textit{zero-noise limits}.} {\textbf{(a)} Symbols representing phase portraits and probability measures throughout. \textbf{(b)} Trajectories (grey) of diffusion processes $\mathcal{X}^{\epsilon}$ at varying noise levels (left to right) overlayed on the energy surface (light blue interior, magenta exterior) of the associative memory model in \Cref{example:bistable-memories}. As $\epsilon \rightarrow 0$ trajectories approach those of the deterministic system -- $\{\mathcal{X}^{\epsilon}\}$ are \textit{small random perturbations} of the gradient flow. Attractors are memories (black circles). Recall from partial/corrupted patterns (white circles) is given by the asymptotic behavior of trajectories (black lines). \textbf{(c)} Invariant measures $\mu^{\epsilon}(dv)$ of $\mathcal{X}^{\epsilon}$ are Boltzmann-Gibbs distributions. As $\epsilon \rightarrow 0$, the sequence $\{ \mu^{\epsilon}\}$ converges to the \textit{zero-noise limit} of $\{\mathcal{X}^{\epsilon}\}$}.}
        \label{fig:figure-intuition-example-zero-noise}
\end{figure}
\end{example}


\paragraph{Learning and memory consolidation.}  Let $(M,g)$ be a Riemannian manifold and $X = -\nabla_g V$ an associative memory model. In applications, the potential and the metric can be parameterized by neural networks with parameters $\theta_V$ and $\theta_g$. Denote the parameter set by $\theta = \{ \theta_g, \theta_V\}$. Let $\mathcal{L}: \theta \rightarrow \mathbb{R}$ be a smooth loss function, and denote by $\theta^*$ a parameter set that solves $\min_{\theta} \mathcal{L}(\theta)$ by, for example, gradient descent. It is natural to ask whether this learning process can be characterized.

A \textbf{\textit{$\bm{k-}$parameter family of gradients}} is a family of vector fields $\{ X_{\eta_1,...,\eta_k} \}$ on $M$ with $\eta \in N^k$, where $N$ is a $k-$manifold, for which there is a family of functions $\{V_{\eta_1,...,\eta_k} \}$ and family of metrics $\{ g_{{\eta_1,...,\eta_k}}\}$ with $X_{\eta_1,...,\eta_k} = -\nabla_{g_{\eta_1,...,\eta_k}} V_{\eta_1,...,\eta_k}$. Memory consolidation, or storing new memories, appears as a one-parameter family, $\{X_{\eta}\}_{\eta \in \mathbb{R}}$, where $\eta \in \mathbb{R}$ is a time parameter of the learning process. Similarly, let $X_t = -\nabla_{g_{t}} V_t$ with $t\in \mathbb{R}$ be a time-varying model with potential and metric that smoothly vary with $t$. Gradient descent then produces a two-parameter family. 

Shortly after the structural stability theory for gradients was established, Thom asked about classifying bifurcations associated with the loss of stability of their parameterized families (\cite{thom1969topological,thom1972stabilite}; see \Cref{subsubsection:universal-unfoldings}). In contrast to Thom's local approach, which focused on parametrized potential functions, additional global bifurcations can occur, making the classification of bifurcations for arbitrary $k-$parameter families of gradients difficult. However, the one- and two-parameter cases are relatively simple (\cite{palis1983stability,dias1989bifurcations}). Only two bifurcations must be considered for one-parameter families; the two-parameter case is more complicated, but the bifurcations are still enumerable (\Cref{subsubsection:one-parameter-families-gradients,subsubsection:two-parameter-families-gradients}). 

Importantly, bifurcations occur at isolated parameter values, which implies that memory formation is characterized by ordered sequences of these topology-changing moves. Moreover, associated with each Morse-Smale system is an invariant directed acyclic graph (DAG) that globally organizes the rest points of the system and the connections between them (\Cref{subsubsection:DAG}). Sequential bifurcations are visible as successive alterations to these DAGs.

\paragraph{Trajectories at vanishing noise levels.}
Let $\mathcal{X}^{\epsilon}$ be a small random perturbation of $\phi_t^X$ by white-noise perturbations as in \eqref{equation:riemannian-small-random-perturbation-equation}. Likewise, let $\mathcal{X}^{\epsilon,t}$ be obtained from a stochastic differential equation with time-varying drift. A similar question as above is: \textit{Can the generation and learning process of diffusion models be characterized?}

The bifurcations of one- and two-parameter families topologically characterize memory formation. However, topological equivalence of flows and structural stability do not apply to diffusions in the vanishing noise limit. An analogue of topological equivalence for diffusions in this limit is defined in \Cref{subsection:paths-large-deviations} to address this shortcoming.

\paragraph{Zero-noise limits.}
The hope is that the Boltzmann-Gibbs distributions of diffusion models encode the asymptotic behavior of their corresponding associative memory model in the zero-noise limit. Generically, this is not the case (\Cref{subsubsection:issues-with-globally-defined-measures}). However, gradients satisfying the Morse condition have well-defined \textit{stable} and \textit{unstable manifolds}. Given a critical point $p \in M$, the sets
\begin{align*}
    & W^s(p) = \{x \in M \, : \, \phi^X_t(x) \rightarrow p \textnormal{ as }t \rightarrow \infty \} \hspace{1em} \text{and} \\
    & W^u(p) = \{x \in M \, : \, \phi^X_t(x) \rightarrow p \textnormal{ as }t \rightarrow -\infty \}
\end{align*} 
that consist of those points that flow to $p$ as $t \rightarrow \pm\infty$, are called the \textbf{\textit{stable manifold}} and \textit{\textbf{unstable manifold}} of $p$, respectively. By focusing on stable manifolds, the inconsistency above can be relieved, and some information is retained (\Cref{subsection:robustness-diffusion-models}).

The zero-noise limits of families of diffusions with Morse-Smale gradient drift show more interesting behavior. By structural stability, the image of these measures can be studied globally under homeomorphisms between topologically equivalent systems. We detail the behavior of zero-noise limits under these maps, as discussed below. 

\vspace{1em}
\noindent
The main contributions of this paper can now be stated. In addition to the generic Morse and Smale conditions, one assumption and a generic condition (\Cref{section:assumptions}) are imposed to obtain results (ii)-(iv). An additional generic condition (\Cref{subsubsection:trajectory-spaces-negative-gradients}) is assumed in applications to obtain (v):
\begin{enumerate}[label=(\roman*), itemsep=0pt]

    \item Morse-Smale dynamical systems are universal associative memory models, and this class is natural for reliable and structurally stable memory (\Cref{subsection:general-aspects-of-associative-memory}). Associated with each is an invariant DAG that classifies the memory landscape.
    
    \item If $V \in C^2(M,\mathbb{R})$ is a Morse function, then on stable manifolds of critical points, zero-noise limits of families $\{ \mathcal{X}^{\epsilon}\}_{\epsilon >0}$ vary continuously with the weak topology on Borel probability measures and the compact-open $C^r$ topology on real-valued functions (\Cref{subsection:robustness-diffusion-models}, \Cref{lemma:zero-noise-continuous-dependence}). That is, on these invariant sets, zero-noise limits of diffusions with gradient drift are robust to perturbations to drift terms.
    
    \item If $V \in C^2(M,\mathbb{R})$ is a Morse-Smale function, regions of convergence of zero-noise limits of families $\{ \mathcal{X}^{\epsilon}\}_{\epsilon >0}$ vary continuously with the weak topology on Borel probability measures and the compact-open $C^1$ topology on flows (\Cref{subsection:robustness-diffusion-models}, \Cref{proposition:triangle-commutativity-zero-noise}). This is a global form of (i) obtained from the map between measures induced by a homeomorphism between topologically equivalent systems. Caveats on the absolute continuity of the induced measure, or image under this map, must be considered. Necessary conditions for this measure to also be a zero-noise limit are given.

    \item Sequences of bifurcations of one- and two-parameter families of gradients describe memory formation and the learning and generation dynamics of diffusion models in the zero-noise limit (\Cref{section:generic-arcs}). These bifurcations correspond to sequential graph edits that offer a combinatorial language for the emergence and restructuring of memory landscapes, and we detail all generic bifurcations that can occur in parameter space. Along with additional local bifurcations, global bifurcations appear that are not described by local bifurcation theory.
    
    \item The generality of the framework is verified with examples from energy-based models, Hopfield networks and Boltzmann machines, modern Hopfield networks, and denoising diffusion models. Structural stability criteria for classic and modern Hopfield networks are also derived (\Cref{section:applications-examples}).
\end{enumerate}


\paragraph{Outline.}
The outline of this paper is as follows. \Cref{subsection:general-aspects-of-associative-memory} states the assumptions made for stable and reliable associative memory, followed by the universal approximation of these models by Morse-Smale gradients in \Cref{subsection:genericity-morse-smale-models-associative-memory}. Generic properties of associative memory that follow from the Morse-Smale assumptions are described in \Cref{subsection:structural-stability-models-associative-memory}. In \Cref{section:diffusion-zero-noise-limit}, generic zero-noise limits of diffusions with gradient drift are studied along with their stability. In \Cref{section:generic-arcs} trajectories in the zero-noise limit are studied along with parameterized families of gradients and their bifurcations. \Cref{section:applications-examples} verifies the theory with examples. Finally, \Cref{section:discussion-and-conclusion} is a discussion and summary.

\section{Associative memory and Morse-Smale gradients}
\label{section:Morse-Smale-gradient-assocaiative-memory}

The assertion that Morse-Smale gradients universally approximate energy-based associative memory models is now detailed, followed by generic properties of associative memory that follow from this characterization. First, it is shown that the Morse-Smale conditions also arise naturally from basic properties that reliable and stable associative memory models should satisfy.

\subsection{Stable associative memory}
\label{subsection:general-aspects-of-associative-memory}

Let $M$ be a smooth $n$-manifold. In the dynamical systems view of associative memory, any point $x \in M$ is an instantaneous condition of a dynamical system $X$ defined by a set of differential equations. Suppose that an item $p\in M$ is stored in memory. The following are natural to impose for stable, reliable memory recall: 
\begin{enumerate}[itemsep=0pt]
    \item The flow $\phi_t^X$ generated by $X$ is globally defined for all time;
    \item If $p \in M$ is a memory and $U \ni p$ is an open neighborhood of $p$, then for any $q\in U$, the orbit $\phi_t^X(q)$ should intersect $U$ for all time $t > 0$. In other words, the dynamics are asymptotically confined to $U$ if $p$ is a stored memory;
    \item The number of neurons and memories are finite (applicable even to models with large storage capacity, e.g., \cite{krotov2016dense,ramsauer2020hopfield}); 
    \item The critical elements and trajectories of $X$ are stable under small perturbations;
    \item Memories are singular points.
\end{enumerate}

\paragraph{Compactness (Condition 1).}
The time integration of a differential equation on a compact manifold produces a one-parameter set of diffeomorphisms $\phi^X: \mathbb{R} \times M \rightarrow M$, or flow, $\phi_t^X: M \rightarrow M$ given by a left action of the additive group of real numbers sending $p \mapsto \phi_t^X(p)$. Given a time $t \geq 0$, the map $\phi^X_t(p)$ solves the differential equation
\begin{equation}
\label{equation:vector-field-ode-definition}
    \frac{d}{dt}\phi^X_t(p) = X\left(\phi^X_t(p) \right), \hspace{0.5em}  \text{with }\phi^X_0(p)=p \,.
\end{equation}
Since $M$ is compact, by (1), the vector field $X$ is \textit{complete}\footnote{If $M$ is compact, any smooth vector field is complete; see, e.g., \cite{lee2013smooth}, Theorem 9.16 or \cite{palis2012geometric}, Proposition 1.3.}; that is, $\phi^X_t(M)$ is globally defined for all time $t\in \mathbb{R}$. Without a loss of generality\footnote{Recall that any smooth manifold admits a Riemannian metric; see, e.g., \cite{lee2006riemannian}, Chapter 3.}, $M$ is given the additional structure of a Riemannian metric $g = \sum g_{ij} \, dx^i \otimes dx^j$; the pair is denoted $(M,g)$.

\paragraph{Non-wandering sets (Condition 2).}
This is understood more precisely as follows:

\begin{definition}[Non-wandering set]
    A point $p \in M$ is called a \textbf{\textit{non-wandering point}} if there exists a neighborhood $V \ni p$ for which $\phi_t^X(V) \cap V \neq \emptyset$ for $|t|> t_0$. The set of non-wandering points is denoted $\Omega(X)$.
\end{definition}

Condition (2) states that the non-wandering set, $\Omega(X)$, should contain stored memories. These need not be the only elements of $\Omega(X)$. It is reasonable to assume that any point $p \in \Omega(X)$ that is \textit{not} a stored memory is a critical point. This forbids complicated orbits, such as quasiperiodic motions that may have unpredictable dynamics.

\paragraph{Finiteness (Condition 3).}
By (2), the non-wandering set $\Omega(X)$ reduces to the set of critical elements of a vector field $X$, and by (3), the number of all such elements is finite\footnote{Compactness implies that a Morse function has a finite number of critical points, so (1) and (3) are not independent; see \Cref{subsubsection:asymptotic-stability}.}. Therefore, dynamical systems that serve as models of associative memory are restricted -- they are those with finite numbers of critical elements.

\paragraph{Structural and $\bm{\Omega}$-stability (Condition 4).}
For systems satisfying conditions (1)-(3), there are a well-studied set of conditions to ensure that $X$ is structurally stable, satisfying (4). These systems are known as Morse-Smale (\cite{smale1960morse}). The set of all Morse-Smale systems on $M$ is denoted by $S(M)$. 

Requiring that critical elements are stable to perturbation is evidently weaker than demanding stability of trajectories. This weaker notion is well-captured by \textit{${\Omega}-$stability}. Let $X \in \chi^r(M)$ be a $C^r$ vector field and $\Omega(X)$ its non-wandering set. The flow $\phi_t^X$ is called $\mathbf{\Omega}-$\textbf{stable} if there is a neighborhood $\mathcal{U}$ of $X$ such that for all $Y \in \mathcal{U}$, the restriction $\phi_t^X|_{\Omega(X)}$ of $\phi_t^X$ to its non-wandering set is topologically equivalent to $\phi_t^Y|_{\Omega(Y)}$. It is generally clear that structural stability implies $\Omega$-stability.

\paragraph{Memories as singularities (Condition 5).}
The non-wandering set $\Omega(X)$ of a Morse-Smale vector field $X$ consists of a finite number of critical elements; however, these can be singularities or closed orbits. It is difficult to imagine a widely applicable scenario where closed orbits are useful for reliable associative recall. Such behavior would introduce ambiguity, as the dynamics would not converge to a single point. 

Reliable, structurally stable associative memory models are therefore Morse-Smale systems whose critical elements are a finite set of singularities. These vector fields, referred to as \textit{Morse-Smale gradient fields}, and denoted $G(M)$, are defined as follows:

\begin{definition}[Morse-Smale gradient]
\label{definition:Morse-Smale-vector-field}
A vector field $X \in G(M)$ and is called \textbf{Morse-Smale gradient field}\footnote{The definition of $S(M)$ allows $X$ to also have a finite number of closed orbits.} if it satisfies the following:
    \begin{enumerate}[itemsep=0pt]
    \item $\Omega(X)$ has finite cardinality and is equal to the union of the critical elements of $X$, each of which is a singular point;
    \item (Morse condition) the critical elements of $X$ are all hyperbolic;
    \item (Smale condition) if $\beta_1$ and $\beta_2$ are critical elements of $X$, then $W^u(\beta_1)$ is transverse to $W^s(\beta_2)$.
\end{enumerate}
\end{definition}
\noindent

Given a vector field $X \in \chi^r(M)$, a critical point $p \in M$ is a \textit{hyperbolic singularity} if the spectrum of $DX_p: T_pM \rightarrow T_pM$ is disjoint from the imaginary axis. Let $X = -\nabla_g V$ be a gradient field. Since gradient fields have no closed orbits, this condition means that $DX_p$ does not have a zero eigenvalue. This condition also implies that the potential function $V \in C^2(M,\mathbb{R})$ is \textit{Morse}, i.e., all its critical points are nondegenerate -- equivalently, hyperbolic. That the intersection of stable and unstable manifolds of distinct critical elements is transversal means that their tangent spaces span the tangent space of $M$ at their intersection. That is, if $x \in W^u(\beta_i) \, \cap \,W^s(\beta_j)$ for $\beta_i, \beta_j \in \Omega(X)$ critical elements, then $T_xM = T_x W^u(\beta_i) + T_x W^s(\beta_j)$.

\subsection{Universal approximation}
\label{subsection:genericity-morse-smale-models-associative-memory}
\Cref{subsection:general-aspects-of-associative-memory} indicates that Morse-Smale gradients are stable, reliable models of associative memory. The assertion that they are generic in the compact-open topology, or universal, is now detailed. This topology captures the notion that two vector fields are close if the vector fields and their $r^{\text{th}}$ order partial derivatives are close at all points $x \in M$ on compact sets. The following definition is from \cite{hirsch2012differential}, Chapter 2.1.

\begin{definition}[Compact-open $C^r$ topology]
\label{definition:compact-open-topology}
    Let $M,N$ be $C^r$ manifolds with $0 \leq r < \infty$ and $C^r(M,N)$ the space of $C^r$ maps from $M$ to $N$. The \textbf{\textit{compact-open $\mathbf{C^r}$ topology}} on $C^r(M,N)$ has as a basis the following elements. Let $(\phi,U),(\psi,V)$ be charts on $M$ and $N$ and $f \in C^r(M,N)$; let $K\subset U$ be a compact subset with $f(K) \subset V$ and $0\leq \epsilon < \infty$. A weak subbasic neighborhood $\mathcal{N}^r(f;(\phi,U),(\psi,V),K,\epsilon)$ is defined to be a set of $C^r$ maps $g:M \rightarrow N$ such that $g(K) \subset V$ and
\begin{equation*}
    || D^k(\psi \circ f \circ \phi^{-1})(x) - D^k(\psi\circ g\circ \phi^{-1})(x) || < \epsilon
\end{equation*}
for all $x \in \phi(K)$ and $k = 0,...,r$. A basis for $C^{\infty}$ is obtained by taking the union of the topologies induced by the inclusions $C^{\infty}(M,N) \rightarrow C^{r}(M,N)$.
\end{definition}

The set of $C^r$ diffeomorphisms, $\text{Diff}^r(M)\subset C^r(M,M)$ form an open subset of the space of $C^r$ maps from $M$ to itself and are given the subspace topology. Likewise, a vector field $X \in \chi^r(M)$ is a $C^r$ map $X: M \rightarrow TM$ which assigns to each point $p \in M$ a vector $X(p) \in T_pM$, so \Cref{definition:compact-open-topology} applies to vector fields as maps ${X \in C^r(M,TM)}$. The set $C^{r}(M,\mathbb{R})$ of real-valued functions can also be given the compact open topology. 

It is a classic result that Morse functions form a dense open subset of $C^{r}(M,\mathbb{R})$ for $r \geq 2$; see \Cref{appendix:Morse-theory}. Given any Riemannian metric on a compact $n-$manifold $M$, Smale showed that $S(M) \, \cap \, \text{Grad}(M) = G(M)$ forms a dense open set in the space of gradient vector fields, $\text{Grad}(M)$, with the $C^1$ topology (\cite{smale1961gradient,PALIS1969385}). In fact, the Morse-Smale gradients form a generic subset of $\text{Grad}^r(M)$ for any $r \geq 1$\footnote{See \cite{banyaga2004lectures}, Theorem 6.6 or \cite{audin2014morse}, Theorem 2.2.5 for accessible proofs that the Morse-Smale gradients are generic.}. 

Let $(U,h)$ be a coordinate chart at a point $p \in M$. In local coordinates $(x^1,...,x^n)$, these systems are written as an inverse metric $g^{ij} = (g)^{-1}_{ij}$ times the gradient of a Morse potential function $V\in C^2(M,\mathbb{R})$: $\dot{x^{i}} = -\sum_{j=1}^{n} g^{ij}\frac{\partial V}{\partial x^j}$. This equation is standard for computing Riemannian gradients. The inverse metric establishes an isomorphism $TM \cong T^*M$ between the tangent bundle $TM$ and its dual $T^*M$. The last term is a covector, ${\partial V}/{\partial x^j} \in T^*_xM$. Contracting it with the inverse metric ensures that the left-hand side is defined independent of coordinates. Consequently, the flows defined by generic associative memory models are captured by a metric and a potential.

\subsection{Generic properties}
\label{subsection:structural-stability-models-associative-memory}
A number of generic properties of associative memory follow from the Morse-Smale conditions, which are now described -- we are not aware of such a discussion elsewhere. 

\subsubsection{Asymptotic stability}
\label{subsubsection:asymptotic-stability}
In \cite{hopfield1982neural}, asymptotic stability appears as a defining property of associative memory -- if a sufficiently small perturbation is applied to neuronal states $x \in U$, where $U \ni p$ a neighborhood of a stored memory $p \in M$, the network evolves to restore $p$ as $t\rightarrow \infty$. Given a Morse function $V \in C^2(M, \mathbb{R})$ and a nondegenerate critical point $p \in M$, the Morse lemma implies that, at a nondegenerate critical point $p \in M$, there is a coordinate chart $(U,h)$, called a Morse chart, with $U \ni p$ and $h(p)=0$, so that 
\begin{equation*}
    (V \, \circ h^{-1})(x_1,...,x_n) = V(p) - x_1^2 - x_2^2... - x_{\lambda}^2 + x_{\lambda+1}^2 + ... + x_n^2
\end{equation*}
for $x\in U$, where $h(x) = (x_1,...,x_m) \in h(U)$ and $\lambda$ is the index of $V$ at $p$ (see, e.g., \cite{milnor2016morse}, Lemma 2.2).
In these coordinates, the gradient of $V$ has a unique zero at $p$. Consequently, the critical points of Morse functions are isolated, and the attractors of these gradient systems are asymptotically stable.

\subsubsection{Structure of invariant manifolds} 
\label{subsubsection:structure-invariant-manifolds}
The stable and unstable manifolds of gradients of Morse functions have two key properties that are used to describe generic zero-noise limits of small random perturbations of associative memory models. First, they provide a finite decomposition of a compact manifold into their disjoint union. In addition, they have simple characterizations as embedded discs that vary continuously with perturbations to the Morse potential.

\paragraph{Decomposition.} Let $X = -\nabla_g V$ be a gradient of a Morse function $V \in C^2(M, \mathbb{R})$. Since $M$ is compact and the critical elements of $V$ are isolated, there are finitely many of them: $\text{Crit}(V) = \bigcup_{i=1}^n\beta_i$ for $n>0$ finite. Moreover, the energy $V(\phi^X_t(x))$ is strictly decreasing off critical points, and by compactness, it is bounded below. Therefore, the $\omega-$limit set of each $x\in M$, that is, the set $\omega(x)=\{y \in M \, | \, \phi_t^X(x) \rightarrow y \text{ as }t\rightarrow \infty \}$, is a single point. 

Consequently, every point $x\in M$ lies on the stable manifold of a unique critical point, and the union of these stable manifolds cover $M$. By uniqueness, any two stable manifolds are disjoint: $W^s(\beta_i) \cap W^s(\beta_j) = \emptyset$ for distinct critical points $\beta_i,\beta_j \in \text{Crit}(V)$. It follows that $M$ decomposes into a disjoint union $M = \bigcup_{i=1}^n W^s(\beta_i)$ (\colorAutoref{fig:figure-generic-properties}). The analogous result holds for unstable manifolds by considering asymptotic orbits as $t\rightarrow -\infty$.

\paragraph{Stable Manifold Theorem.} By the Hartman-Grobman theorem, if $p \in M$ is a hyperbolic fixed point of a vector field $X$, then $X$ is locally (topologically) equivalent to its linearization $DX_p$ (\cite{palis2012geometric}, Chapter 2.4). That is, $DX_p$ defines a hyperbolic linear vector field $L(T_pM)$. Any such vector field induces a splitting of the tangent space $T_pM$ at $p$ into stable and unstable directions, $T_pM = E^s \oplus E^u$ where the stable directions have eigenvalues with negative real part and the unstable directions have eigenvalues with positive real part (\cite{palis2012geometric}, Proposition 2.15).

Denote the splitting of the tangent space at $p$ under $L(T_pM)$ by $T_pM = E^s \oplus E^u$. Then for any $q \in E^s$, $L^t(q) \rightarrow 0$ as $t \rightarrow \infty$ and for $q \in E^u$ as $t \rightarrow -\infty$. That is, there is an adapted norm with $\max{\{|| L|_{E^s}||,|| L^{-1}|_{E^u}|| \}}<1$ so that $L|_{E^s}$ is contracting and $L|_{E^u}$ is expanding. For any other $q \notin E^s \cup E^u$, the magnitude $|| L^t(q) || \rightarrow \infty$, so $W^s(0) = E^s$ and $W^u(0)=E^u$. 

The local version of the \textit{Stable Manifold Theorem} says that this behavior of $DX_p$ is captured in a neighborhood of the critical point $p$. That is, for a sufficiently small $\delta > 0$, there are \textit{local stable and unstable manifolds of $p$ of radius $\delta$} defined by the sets
\begin{equation*}
    \begin{split}
    W_{B_{\delta}}^s(p) &= \{x \in B(p,\delta) \, : \, \phi^X_t(x) \in B(p,\delta) \textnormal{ for }t \geq 0 \}  \hspace{1em }\text{and}\\
    W_{B_{\delta}}^u(p) &= \{x \in B(p,\delta) \, : \, \phi^X_{-t}(x) \in B(p,\delta) \textnormal{ for }t \geq 0 \} \, ,
\end{split}
\end{equation*}
\noindent
where $B(p,\delta)$ is a ball of radius $\delta$ centered at $p$, for which the following hold (adapted from \cite{palis2012geometric}, Chapter 2.6):
\begin{enumerate}
    \item $W_{B_{\delta}}^s(p) \subset W^s(p)$ and $W_{B_{\delta}}^u(p) \subset W^u(p)$ are embedded topological discs in $M$ with the same regularity as $X$ whose dimension is equal to the dimensions of the stable and unstable subspaces, $E^s$ and $E^u$, respectively;
    \item $W^s(p) = \bigcup_{t\geq0}\phi_{-t}^X(W_{B_{\delta}}^s(p))$ and $W^u(p) = \bigcup_{t\geq0}\phi_{t}^X(W_{B_{\delta}}^u(p))$. Therefore, there exists injective topological immersions $\varphi_s:E^s \rightarrow M$ and $\varphi_u:E^u \rightarrow M$ whose images are $W^s(p)$ and $W^u(p)$, respectively. 
\end{enumerate}
A standard proof of the global Stable Manifold Theorem establishes the first property and extends the results using the second. It is long but well-documented (e.g., \cite{palis2012geometric} and \cite{shub2013global}); see \cite{smale1967differentiable} for more references. 

If $X$ is the gradient of a Morse function, the global version can be sharpened. Suppose that $\delta > 0$ is such that the local stable manifold theorem holds. Then for any $x \in W_{B_{\delta}}^s(p)$, the set $\bigcup_{t\geq0}\phi_{t}^X(x)$ contains no critical points except for $p$. A similar result holds for the local unstable manifold $W_{B_{\delta}}^u(p)$. A classic result in Morse theory implies that these sets extend smoothly to all of $W^s(p)$ and $W^u(p)$. Moreover, this extension, and thus $W^s(p)$ and $W^u(p)$, vary continuously in the $C^r$ topology with the same regularity as $X$. These results give the following theorem.

\begin{theorem}[Stable Manifold Theorem for Morse functions]
\label{theorem:stable-manifold-theorem-morse}
    Let $X = -\nabla_g V$ be a gradient field on a Riemannian $n$-manifold $(M,g)$ with $V \in C^{r+1}(M,\mathbb{R}), r \geq 1$ a Morse function. Let $p \in M$ be a hyperbolic fixed point of $V$ with index $\lambda_p$, and $E^s$ (resp. $E^u$) the stable (resp. unstable) subspace of $DX_p = L$. Then the following hold,
    \begin{enumerate}
        \item $W^s(p)$ is an embedded differentiable manifold in $M$ with the same regularity as $\phi^X_t$, and the tangent space $T_p(W^s(p))$ at $p$ is $E^s$; hence, 
        $W^s(p)$ and $W^u(p)$ are embedded (open) topological discs with dimensions $n-\lambda_p$ and $\lambda_p$, respectively.
        \item Let $D \subset W^s(p)$ be an embedded disc containing $p$. Then there is a neighborhood $\mathcal{N} \subset \text{Diff}^r(M)$ in which all $g \in \mathcal{N}$ have a unique hyperbolic fixed point $p_g$ contained in a neighborhood $U \ni p$. Hence, for any $\epsilon > 0$, there is a neighborhood $\tilde{\mathcal{N}} \subset \mathcal{N}$ of $\phi_t^X$ such that, for each $g \in \tilde{\mathcal{N}}$, there exists a disc $D_g \subset W^s(p_g)$ that is $\epsilon$-$C^r$ close to $D$.
    \end{enumerate}
\end{theorem}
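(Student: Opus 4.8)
The plan is to bootstrap the global statement from the local stable manifold theorem recalled just above, using the gradient structure (strict decrease of $V$ along orbits) to upgrade ``injectively immersed'' to ``embedded'' in part (1), and combining persistence of hyperbolic fixed points with continuous dependence of \emph{local} stable manifolds and of finite-time flow maps on the system in part (2).

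For part (1), I would first apply the local stable manifold theorem at the hyperbolic critical point $p$. Since $\lambda_p$ is the number of negative eigenvalues of the Hessian $\nabla^2 V(p)$ and $DX_p = -\nabla^2 V(p)$, the unstable subspace has $\dim E^u = \lambda_p$ and $\dim E^s = n - \lambda_p$; the local theorem produces a $C^r$ embedded disc $W^s_{B_\delta}(p)$ of dimension $n-\lambda_p$ with $T_p W^s_{B_\delta}(p) = E^s$ (and symmetrically $W^u_{B_\delta}(p)$ of dimension $\lambda_p$). By property (2) of the local theorem, $W^s(p) = \bigcup_{t\ge 0}\phi^X_{-t}(W^s_{B_\delta}(p))$ is a nested increasing union of $C^r$ embedded discs, each $\phi^X_{-t}$ being a $C^r$ diffeomorphism, hence a $C^r$ injectively immersed submanifold of dimension $n-\lambda_p$ with $T_pW^s(p)=E^s$. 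The genuine content is embeddedness, and this is where the Morse/gradient hypothesis enters: because $DV_q(X(q)) = -\|X(q)\|^2 < 0$ off critical points, for any $x \in W^s_{B_\delta}(p)$ the forward orbit meets no critical point other than $p$, so the flow cannot re-enter a small ball about $p$ from ``$V$-below,'' and $W^s(p)$ cannot self-accumulate inside $M$. Concretely, if $q_n \to q$ with $q_n \in W^s(p)$, the limit set $\omega(q)$ is a single critical point by the decomposition argument of \cref{subsubsection:structure-invariant-manifolds}; continuity of $V$ along orbits and the local uniqueness of $W^s_{B_\delta}$ force $\omega(q)=p$, hence $q\in W^s(p)$, and a full intrinsic neighborhood of $q$ in $W^s(p)$ absorbs the tail of $(q_n)$. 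This is the classical fact that descending/ascending manifolds of Morse gradient flows are embedded open discs; applying the argument to $-X$ gives the statement for $W^u(p)$.

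For part (2), I would use that hyperbolicity of $p$ makes $DX_p$ invertible, so the eigenvalues of $D\phi^X_1(p) = e^{DX_p}$ avoid the unit circle; the standard persistence of hyperbolic fixed points under $C^1$-small perturbation then yields a neighborhood $\mathcal{N}\subset \text{Diff}^r(M)$ of the time-one map of $X$ and a neighborhood $U \ni p$ so that each $g\in\mathcal{N}$ has a unique, necessarily hyperbolic, fixed point $p_g\in U$ with indices matching $p$. Given an embedded disc $D\subset W^s(p)$ through $p$, compactness of $D$ together with the nested cover $W^s(p)=\bigcup_{t\ge 0}\phi^X_{-t}(W^s_{B_\delta}(p))$ gives a finite $T\ge 0$ with $D_0 := \phi^X_T(D)\subset W^s_{B_\delta}(p)$, a compact disc in the local stable manifold. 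The local stable manifold theorem is uniform in the system (Hirsch--Pugh--Shub): $W^s_{B_\delta}(p_g)$ depends $C^r$-continuously on $g$, so one may select a disc $D_{0,g}\subset W^s_{B_\delta}(p_g)$ that is $C^r$-close to $D_0$, and set $D_g := g^{-N}(D_{0,g})$ for the number $N$ of iterates comparable to $T$; then $D_g\subset W^s(p_g)$ since $g^{-n}(W^s_{B_\delta}(p_g))\subset W^s(p_g)$. Finally, to obtain the $\epsilon$-$C^r$ estimate I would compose two continuous dependences: over the fixed finite time (resp.\ $N$ iterates), $\phi^X_{-T}$ and hence $g^{-N}$ depend $C^r$-continuously on the system (smooth dependence of ODE solutions on parameters over compact time intervals, plus continuity of composition), so for fixed $\epsilon>0$ one first fixes $T,\delta,D_0$ from $D$, then shrinks $\mathcal{N}$ to $\tilde{\mathcal{N}}$ so that simultaneously $D_{0,g}$ is within $\epsilon/2$ of $D_0$ and $g^{-N}$ is within $\epsilon/2$ of $\phi^X_{-T}$ in $C^r$ on a fixed compact neighborhood; the triangle inequality gives $D_g$ within $\epsilon$ of $D$.

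I expect two places to carry the real weight. In part (1), the upgrade from injectively immersed to embedded cannot come from the abstract stable manifold theorem and must be extracted from the no-recurrence property of gradient flows; making the self-accumulation argument airtight — ruling out $W^s(p)$ accumulating on itself through lower-$V$ critical points — is the crux, though it is classical. In part (2), the delicate step is that it is the \emph{local stable manifold}, not merely the fixed point, that varies $C^r$-continuously with the system; this is the substantive theorem to invoke, and it must be stitched to finite-time continuous dependence through the compactness reduction that writes $D$ inside a finite flow-out of the local piece.
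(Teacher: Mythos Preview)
Your outline for part (1) and the paper's proof share the same skeleton — local stable manifold plus backward-time flow extension — but the specific embeddedness argument you sketch is incorrect as written. You claim that if $q_n \to q$ with $q_n \in W^s(p)$ then $\omega(q) = p$ and hence $q \in W^s(p)$; this would make $W^s(p)$ closed in $M$, which it is not (take $p$ a sink in a dual-well potential and $q$ the saddle on $\partial W^s(p)$). What embeddedness actually requires is: assuming $q \in W^s(p)$, a tail of $(q_n)$ lies in an intrinsic neighborhood of $q$. The paper sidesteps this by working in Morse charts and using the level-set diffeomorphism $M^a \cong M^b$ (Milnor's Theorem 3.1): writing $M^b_s = W^s(p) \cap V^{-1}(-\infty,b]$ and showing each $M^b_s$ is the image under $\phi^X_{-t}$ of the embedded local piece $W^s_{B_\beta}(p)$, so that $W^s(p) = \bigcup_{b>V(p)} M^b_s$ is an increasing union of embedded discs filtered by the values of $V$. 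This filtration by sublevel sets is what replaces your self-accumulation argument.

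For part (2) your route is genuinely different from the paper's, and arguably cleaner. The paper works at the level of potentials: it invokes Milnor's lemmas (\Cref{lemma:good-critical-points-milnor}, \Cref{lemma:density-critical-points-milnor}) to show that a $C^{r+1}$-neighborhood of $V$ consists of Morse functions, builds perturbed potentials $V_1 = V + \Lambda \cdot L \circ h$ via bump functions and linear maps, and then argues that closeness of potentials forces closeness of gradient flow lines and hence of stable manifolds. Your approach — persistence of the hyperbolic fixed point, $C^r$-continuous dependence of the \emph{local} stable manifold on the system (Hirsch--Pugh--Shub), a compactness reduction $D_0 = \phi_T^X(D) \subset W^s_{B_\delta}(p)$, and finite-time continuous dependence to transport $D_{0,g}$ back — is the standard dynamical-systems argument and matches the theorem's stated generality ($g \in \text{Diff}^r(M)$, not merely gradient perturbations). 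The paper's approach is tailored to the downstream applications (\Cref{lemma:zero-noise-continuous-dependence}), where perturbations are explicitly of the potential in $C^{r+1}$; yours would prove the statement as written but imports a heavier external theorem.
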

By \textit{\textbf{$\bm{\epsilon}$-$\bm{C^r}$ close}}, we mean that there exists a $C^r$ diffeomorphism $h:D \rightarrow D_g \subset M$ such that $i_g \circ h$ is contained within an $\epsilon$-neighborhood of $i$ in the $C^r$ topology where $i: D \hookrightarrow M$ and $i_g:D_g \hookrightarrow M$ are inclusions. 

\Cref{theorem:stable-manifold-theorem-morse} is applied in \Cref{section:diffusion-zero-noise-limit}, and a few lemmas, adapted from \cite{milnor2025lectures}, Section 2, are used to prove it in \Cref{appendix:Morse-theory}. Our proof of (1) is motivated by \cite{cohennotes}, Chapter 6.3. Refer to \cite{banyaga2004lectures}, Chapter 4 for a detailed account of (1). The proof of (2) essentially shows that Morse functions are generic, as in \cite{milnor2025lectures}, Theorem 2.7.

\subsubsection{Structural and $\Omega-$stability}
\label{subsubsection:structural-and-omega-stability}
The introduction of Morse-Smale systems in \cite{smale1960morse} was accompanied by a conjecture that its elements were structurally stable. This is proven in the main theorem of \cite{smale1970structural} and applies to Morse-Smale gradients. In particular, \textit{a gradient vector field is structurally stable if and only if it is Morse-Smale}. 

From the definition of non-wandering set, a homeomorphism $h:M \rightarrow M$ establishing a topological equivalence between orbits of $X,Y\in \chi^r(M)$ restricts to a homeomorphism between non-wandering sets: $h|_{\Omega(X)} = \Omega(Y)$. It is instructive to see that, for Morse-Smale systems, since $\Omega(X)$ is the union of the set of critical elements, denoted $\text{Crit}(X)$, the map $h$ restricts to a homeomorphism, $h|_{\text{Crit}(X)} = \text{Crit}(Y)$. This restriction is used in \Cref{subsection:robustness-diffusion-models}

\begin{figure}[t!]
        \centering
        \includegraphics[width=1\textwidth]{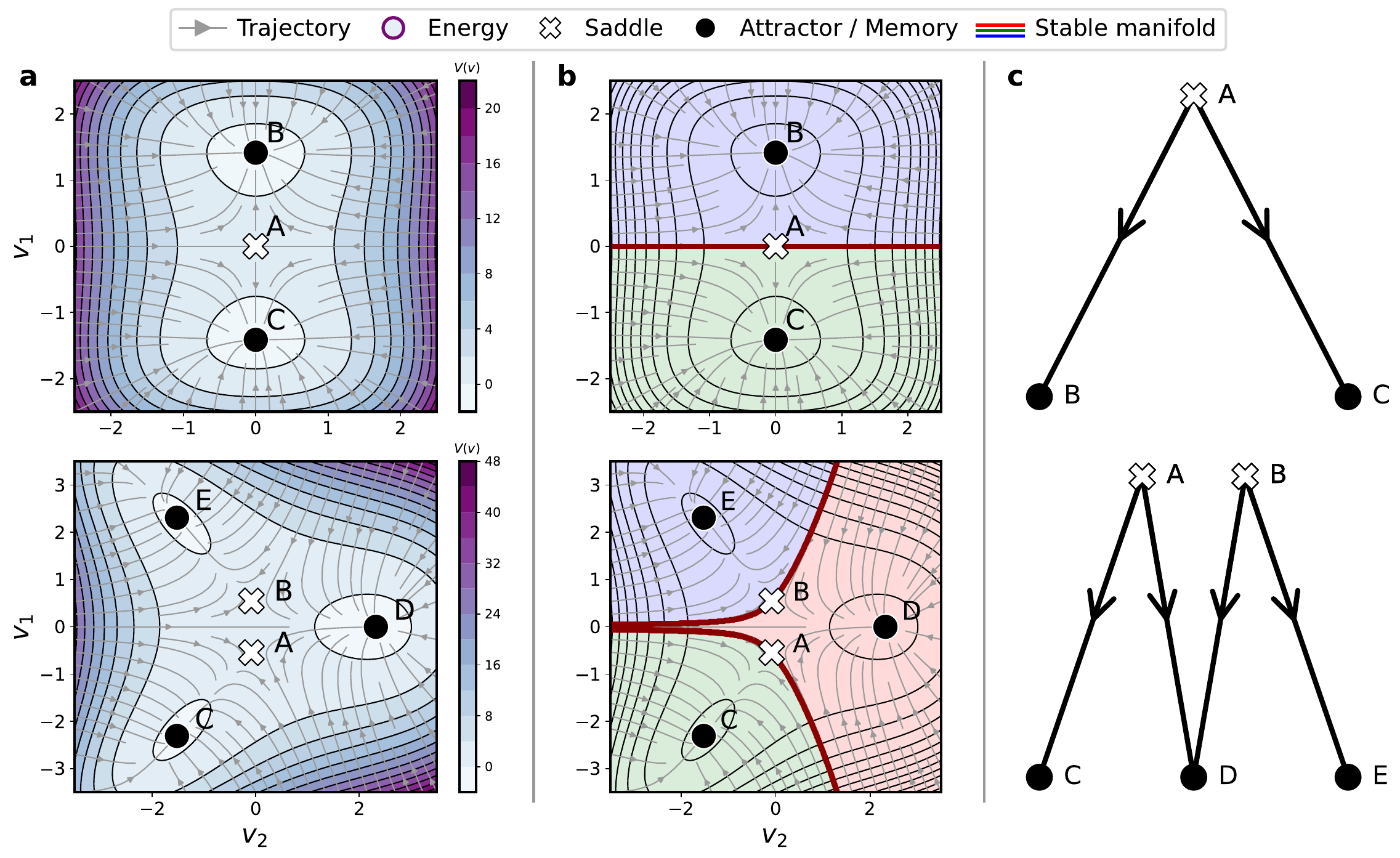}
        \caption{\textbf{Generic phase space decomposition and hierarchical organization (DAG) of associative memory.} {\textbf{(a)} Phase portraits of Morse-Smale gradients overlayed on their energy surfaces. The two attractor system (top) is the dual-well model of \Cref{example:bistable-memories}. A dual cusp geometry describes the three attractor system (bottom) and corresponds to the potential $V= \frac{1}{10} \left( v_1^4 + v_2^4 - 3v_2^3 + 7v_2v_1^2 + \frac{1}{10}v_2^2 - 2v_2 \right)$. Saddles (white crosses) and attractors (black circles) are labeled alphabetically. \textbf{(b)} Decomposition of the phase space into disjoint stable manifolds of each critical element (red, green, and blue shades). \textbf{(c)} Invariant DAGs of the dual-well model (top) and dual cusp model (bottom). Nodes correspond to critical elements and are orded by their index. Top layer nodes correspond to index 1 saddle points with edges to index 0 attractors (memories).}}
        \label{fig:figure-generic-properties}
\end{figure}

\subsubsection{Hierarchical organization of associative memory}
\label{subsubsection:DAG}
Finally, associated with a Morse-Smale system is a directed acyclic graph (DAG) whose vertices are critical elements $\beta_1,...,\beta_n$. Two vertices are connected by a directed edge, denoted $\beta_i \succ \beta_j$, if there is a trajectory from $\beta_i$ to $\beta_j$ (\colorAutoref{fig:figure-generic-properties}). Equivalently, $\beta_i \succ \beta_j$ if $ W^u(\beta_i) \cap W^s(\beta_j) \neq \emptyset$. The DAG defines a partial order on the set of critical elements with the following properties (\cite{meyer1968energy,smale1960morse}): 
\begin{enumerate}[itemsep=0pt]
    \item[\textbullet] (No self-connections) $\beta_i \nsucc \beta_i$;
    \item[\textbullet] (Transitive edges) if $\beta_i \succ \beta_j$ and $\beta_j \succ \beta_k$, then $\beta_i \succ \beta_k$;
    \item[\textbullet](Ordered by index) if $\beta_i \succ \beta_j$, then $\text{dim}(W^u(\beta_i)) \geq \text{dim}(W^u(\beta_j))$. Equality can occur only if $\beta_j$ is a closed orbit, which is excluded for gradient flows.
\end{enumerate}

Maximal-index repellors occupy the top level of the DAG, which connects hierarchically to intermediate-index saddle points and finally to minimal-index attractors (memories). These DAGs are topological invariants in the following sense. Denote the partial order of $X \in S(M)$ by $P(X)$ and define a \textit{diagram isomorphism} as a bijective, order- and index-preserving map $\rho: P(X) \rightarrow P(Y)$. If two Morse-Smale flows are topologically equivalent, then they are diagram isomorphic -- their stable and unstable manifolds agree (modulo homeomorphism), and their critical elements are in one-to-one correspondence with the same indices. In addition, $P(X)$ is stable up to diagram isomorphism under small $C^r$ perturbations to $X$ (\cite{PALIS1969385}).

\section{Zero-noise limits and stability}
\label{section:diffusion-zero-noise-limit}
Attention is now turned to generic zero-noise limits of small random perturbations of gradient flows like \eqref{equation:riemannian-small-random-perturbation-equation}. The potential will always be assumed to be a Morse function. The Smale condition is not always needed, but it is used to obtain global results on the stability of zero-noise limits in \Cref{subsection:robustness-diffusion-models}.

\subsection{Generic zero-noise limits}
\label{subsection:diffusion-models-zero-noise-limits}
A simple hypothesis, apparent in \colorAutoref{fig:figure-intuition-example-zero-noise}, is that zero-noise limits of a family $\{\mathcal{X}^{\epsilon}\}_{\epsilon>0}$ concentrate on attractors of $X$. This is essentially correct, but more can be said. If $V$ is Morse and $M$ is compact, then $M$ decomposes into a union $M = \bigcup_{i=1}^n W^s(\beta_i)$, where $\{\beta_1,...,\beta_n\} = \Omega(X)$ are the isolated critical elements of $X$. Moreover, if $\mu$ is $\phi_t^X$-invariant, then it must concentrate on the non-wandering set $\Omega(X)$ by Poincaré recurrence, i.e., $\mu(\Omega(X))=1$. Since $\Omega(X)$ is a discrete set, $\mu$ should be a weighted sum of Dirac delta functions centered on $\{\beta_1,...,\beta_n\}$. This is the content of \Cref{proposition:zero-noise-limit-MS}.

\begin{proposition}[Generic zero-noise limits.]
\label{proposition:zero-noise-limit-MS}
    Let $(M,g)$ be a Riemannian manifold and $X = -\nabla_g V$ a gradient vector field on $M$ with $V\in C^{r}(M,\mathbb{R})$ a Morse function where $r \geq 2$; let $\{\mathcal{X}^{\epsilon}\}_{\epsilon>0}$ be a family of small random perturbations of $\phi_t^X$ with invariant measures $\mu^{\epsilon}$ for $\mathcal{X^{\epsilon}}$ at a given $\epsilon>0$. Then as $\epsilon \rightarrow 0$, $\{\mu^{\epsilon}\}_{\epsilon>0}$ converges weakly to an invariant measure $\mu$ of $\phi_t^X$ consisting of a weighted sum $\mu=\sum_i^n w_i\delta_{\beta_i}$, where $\{ \beta_1,...,\beta_n \} = \Omega(X)$ is the non-wandering set of $X$.
\end{proposition}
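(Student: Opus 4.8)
The plan is to establish three facts in sequence: (a) the family $\{\mu^\epsilon\}_{\epsilon>0}$ is tight, hence has weak limit points; (b) every such limit point $\mu$ is invariant under the flow $\phi_t^X$; (c) any $\phi_t^X$-invariant Borel probability measure on $M$ is a convex combination of Dirac masses at the critical points $\beta_i$. Combining these with a uniqueness/connectedness argument gives weak convergence (not merely the existence of limit points) to a measure of the stated form.

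For (a), since $M$ is compact, $\mathcal{M}(M)$ with the weak topology is itself compact (Prokhorov), so tightness is automatic and $\{\mu^\epsilon\}$ has at least one weak limit point $\mu$ as $\epsilon\to 0$. For (b), I would use the defining property of the small random perturbation (\Cref{definition:small-random-perturbation}): the transition kernels $p^\epsilon(\cdot\,|\,x)$ converge uniformly to $\delta_{\phi(x)}$, where here the relevant discrete-time map should be taken as a time-$\tau$ map $\phi = \phi^X_\tau$ of the flow for fixed $\tau>0$. Writing the stationarity identity $\int f(y)\,\mu^\epsilon(dy) = \int\!\!\int f(y)\,p^\epsilon(dy\,|\,x)\,\mu^\epsilon(dx)$ for $f\in C^0_b(M)$ and passing to the limit — using uniform convergence of the kernels on the compact $M$ to control the inner integral, and weak convergence $\mu^{\epsilon_k}\to\mu$ for the outer one — yields $\int f\,d\mu = \int f(\phi^X_\tau(x))\,d\mu(x)$, i.e. invariance of $\mu$ under $\phi^X_\tau$ for every $\tau$ (the argument is uniform in $\tau$ on compact time intervals), hence under the whole flow. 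For (c), I invoke Poincaré recurrence exactly as sketched in the text preceding the proposition: since $X=-\nabla_g V$ with $V$ Morse on compact $M$, the flow has no nonconstant recurrent orbits and $\Omega(X)=\{\beta_1,\dots,\beta_n\}$ is a finite set of isolated points; any invariant $\mu$ assigns zero mass to any wandering set, so $\mu(\Omega(X))=1$, and since $\Omega(X)$ is finite and discrete, $\mu=\sum_i w_i\delta_{\beta_i}$ with $w_i\ge 0$, $\sum_i w_i=1$.

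Finally, to upgrade "limit points have this form" to genuine weak convergence: every weak limit point lies in the finite-dimensional simplex $\Delta = \{\sum_i w_i\delta_{\beta_i} : w_i\ge0,\ \sum w_i=1\}$, which is compact; the set of limit points of the net $\{\mu^\epsilon\}$ is a nonempty, closed, connected subset of $\Delta$ (connectedness because the index set $(0,\infty)$ is connected and $\epsilon\mapsto\mu^\epsilon$ — or rather its closure of graph near $0$ — behaves accordingly; alternatively one simply notes that any subsequential limit lies in $\Delta$ and that is all that is asserted, reading the statement as "converges weakly to \emph{an} invariant measure $\mu$ of this form"). So it suffices to record that the limit (along any subsequence, and in fact the full limit if one additionally argues the weights are determined, e.g. by a large-deviation/Freidlin–Wentzell computation — but that sharper identification of the $w_i$ is deferred to later sections) has the claimed representation.

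The main obstacle I anticipate is step (b): carefully justifying the interchange of limits in $\int\!\!\int f\,dp^\epsilon\,d\mu^\epsilon$. One must split $\big|\int\!\!\int f\,dp^\epsilon d\mu^\epsilon - \int f\circ\phi^X_\tau\,d\mu\big|$ into a term controlled by $\sup_x\|p^\epsilon(\cdot|x)-\delta_{\phi^X_\tau(x)}\|$ against a modulus of continuity of $f$ (here uniform convergence of kernels and compactness of $M$ are essential), plus a term $\big|\int f\circ\phi^X_\tau\,d\mu^{\epsilon_k} - \int f\circ\phi^X_\tau\,d\mu\big|$ which vanishes by weak convergence since $f\circ\phi^X_\tau\in C^0_b(M)$. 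A secondary subtlety is ensuring the small-random-perturbation hypothesis, stated for a diffeomorphism $\phi$, is applied to the correct time-$\tau$ flow map and that invariance for all $\tau$ (equivalently for the generator $L^\epsilon$, or via a countable dense set of $\tau$ and continuity of $t\mapsto\phi^X_t$) really gives flow-invariance; this is routine but worth stating explicitly.
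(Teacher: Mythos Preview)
Your proof is correct, but it takes a genuinely different route from the paper's. The paper shares your step (a) (compactness of $\mathcal{M}$ by Prokhorov, pass to subsequences), but instead of your abstract invariance-plus-Poincar\'e-recurrence argument in (b)--(c), it works analytically with the explicit Boltzmann--Gibbs form of $\mu^\epsilon$. Specifically, the paper restricts to a single stable manifold $W^s(\beta_i)$, assumes for contradiction that the limit $\mu'$ does not concentrate on $\{\beta_i\}$, and then applies a Laplace-type inequality due to \cite{hwang1980laplace} to bound $P_\epsilon(V(x)\ge a_m)$ from above by a quantity that vanishes as $\epsilon\to 0$. This forces $\mu'$ to have full measure on $\{\beta_i\}$ within each $W^s(\beta_i)$, and the stable-manifold decomposition $M=\bigcup_i W^s(\beta_i)$ (available since $V$ is Morse) patches these together.

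Your approach is more general and arguably cleaner: it uses only the defining property of a small random perturbation and Poincar\'e recurrence, so it would apply verbatim to perturbations whose invariant measures are not known to be Gibbs. The paper's approach, by contrast, leans on the gradient structure and the explicit density $e^{-V/\epsilon^2}$; what it buys is that the Hwang inequality is doing more work than the proposition records---it is the same estimate that drives the discussion in \cref{subsubsection:issues-with-globally-defined-measures} about the limit concentrating at the global minimum of $V$, so the paper's proof feeds directly into the next subsection. On the full-convergence issue you raise at the end: the paper handles it exactly as you suspect---it simply passes to a convergent subsequence and remarks that the limit ``may not be unique,'' so your reading of the statement as subsequential is the intended one.
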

Since $M$ is compact, $\mathcal{X}^{\epsilon}$ are non-explosive and admit invariant measures $\mu^{\epsilon}$. Moreover, $X$ is a gradient system, so the measures $\mu^{\epsilon}$ can be studied analytically through an application of a convenient inequality in \cite{hwang1980laplace} that bounds the measure assigned by a Boltzmann-Gibbs distribution to high-energy states. The following proof is straightforward -- apply this inequality on a stable manifold and extend it globally using the stable manifold decomposition induced by the Morse condition.
\vspace{0.5em}
\begin{proof}
    Since $M$ is compact, each $\mathcal{X^{\epsilon}}$ admits an invariant measure and $\mathcal{M}$, the space of Borel probability measures on $M$ with the weak topology, is compact\footnote{See, e.g., \cite{parthasarathy2005probability}, Page 45.}. Since $\mathcal{M}$ is compact, $\{\mu^\epsilon\}$ is \textit{tight}\footnote{By Prokhorov's theorem; see \cite{rezakhanlou2015lectures}, Chapter 2.4.}. If the sequence $\{\mu^{\epsilon} \}$ does not converge, simply pass to a convergent subsequence $\{\mu^{\epsilon_{n}} \}$. Therefore, $\{\mathcal{X}^{\epsilon}\}$ has a zero-noise limit, but it may not be unique. Call this limit $\mu'$. It remains to show that $\mu'$ is $\mu$ as asserted. By definition of small random perturbation, $p^{\epsilon}(\cdot, x) \rightarrow \delta_{\phi_t^X(x)}$ as $\epsilon \rightarrow 0$. Let $\mu'$ be an invariant probability measure on a stable manifold $W^s(\beta_i)$. From the definition of invariant measure and small random perturbation:
    \begin{align}
        \mu_{\epsilon}(W^s(\beta_i)) &= \int_M p^{\epsilon}(t,x,W^s(\beta_i)) \,\mu_{\epsilon}\left(d\text{vol}_g(x)\right) \qquad t>0,x\in M \nonumber \\ 
        &\quad \rightarrow \int_M \delta_{\phi_t^X(x)} \left( W^s(\beta_i) \right) \, d \mu'(x) \nonumber \\ 
        &\quad \rightarrow \int_{W^s(\beta_i)} \, d \mu'(x)
        \qquad \text{as } \epsilon \rightarrow 0 \nonumber \,.
    \end{align} 
    Suppose $\mu'$ does not concentrate on $\{\beta_i\}$. Let $\{a_{m} \}$ be a sequence converging to $\inf_x V(x) = \beta_i$ on $W^s(\beta_i)$. By assumption, ${P_{\epsilon}(V(x) \geq a_m ) \rightarrow P(V(x) \geq a_m )}$. But,
    \begin{align}
        P_{\epsilon}(V(x) \geq a_{m} ) &= \left[ \int_{W^s(\beta_i)} e^{\frac{-({V(x)})}{\epsilon^2}} d\text{vol}_g(x)\right]^{-1} \left[ \int_{V(x) \geq a_{m}} e^{\frac{{-(V(x))}}{\epsilon^2}} d\text{vol}_g(x)\right] \nonumber \\
        &\quad \leq \left[ \int_{W^s(\beta_i)} e^{\frac{-({V(x)})}{\epsilon^2}} d\text{vol}_g(x)\right]^{-1} \left[ \int_{V(x) < a_{m}} e^{\frac{{- (V(x) - a_{m})}}{\epsilon^2}} d\text{vol}_g(x)\right] \nonumber \\
        &\quad \rightarrow 0 \qquad \text{as } \epsilon \rightarrow 0 \nonumber \, ,
    \end{align} 
    which implies that $P(V(x) \geq a_{m} ) = 0$. The complement of the set $\{ x \in W^s(\beta_i) \, : \, V(x) \geq \beta_i\}$ is $\{ \beta_i\}$, implying that $P(W^s(\beta_i))=0$, which is a contradiction, since $P(W^s(\beta_i))=1$, i.e., $\mu'$ is a valid probability measure on $W^s(\beta_i)$. Therefore $\mu'(W^s(\beta_i))$ has full measure on $\{ \beta_i \}$. That is, the complement $W^s(\beta_i) \setminus \{ \beta_i\}$ has measure zero, and it holds that $\mu'(W^s(\beta_i)) = \mu'(\{ \beta_i \})$. We may write,
    \begin{align}
        \int_{W^s(\beta_i)} \, d \mu'(x) = \int_{W^s(\beta_i)} w_i \delta_{\beta_{i}} \left(d\text{vol}_g(x) \right) \nonumber \, .
    \end{align}
    The argument holds for the stable manifold of each critical element $\beta_i$. Since the union $\bigcup_{i=1}^n W^s(\beta_i)$ covers $M$, the complement has measure zero. Therefore, 
    \begin{equation*}
        \mu' = \sum_i^n w_i \delta_{\beta_i}  \qquad \text{and } \qquad \sum_i^n w_i = 1 \, ,
    \end{equation*}
     where the constraint $\sum_i^n w_i = 1$ ensures that $\mu'$ is a valid probability measure.
\end{proof}


\subsection{Physical measures and resonance}
\label{subsubsection:issues-with-globally-defined-measures}
Let $X = -\nabla_g V$ be a gradient system with $V\in C^{r}(M,\mathbb{R}), r\geq 2$ a Morse function. Suppose that $\mu = \sum_i^n w_i \delta_{\beta_i}$ is the zero-noise limit of a family of small random perturbations $\{\mathcal{X^{\epsilon}}\}_{\epsilon >0}$ of the gradient flow $\phi_t^X$ generated by $X$, according to \Cref{proposition:zero-noise-limit-MS}. 

A naïve hope is that the coefficients $w_i$ are determined by the relative volumes of the stable manifolds $W^s(\beta_i)$ of $X$. If a set $A$ of positive Lebesgue measure were chosen uniformly at random, then the expected value of an observable $f\in C^0_b(M,\mathbb{R})$ with respect to this measure could be obtained from the asymptotic behavior of the flow $\phi_t^X(A)$ when $t\rightarrow\infty$. This normalization would make $\mu$ as "close" as possible to normalized Lebesgue measure on $(M,g)$. The idea of a \textit{physical measure} makes this precise and more general:
\begin{definition}[Physical measure, adapted from \cite{young2002srb}]
    Let $\phi:M \rightarrow M$ be measurable and $\mu \in \mathcal{M}$ an invariant probability measure under $\phi$. Then $\mu$ is called a \textbf{\textit{physical measure}} if there exists a positive Lebesgue measure set $A \subset M$ such that for any continuous observable $f\in C^0_b(M,\mathbb{R})$ and $x \in A$,
    \begin{equation*}
        \lim_{T \rightarrow \infty} \frac{1}{T}\int_0^T f(\phi_t(x))\,dt \rightarrow \int f \, d\mu
    \end{equation*}
\end{definition}
The interpretation, commonly attributed to Kolmogorov, is that physical measures are those that are actually observable. In some cases, zero-noise limits are physical measures -- refer to \cite{young2002srb}, Section 2 and Theorem 1 for further information. However, there is no constraint that $w_i > 0$ for the zero-noise limit $\mu = \sum_i^n w_i \delta_{\beta_i}$. Actually, the inequality in the proof implies that $\mu$ concentrates on the attractor(s) with uniformly minimal energy; see \cite{hwang1980laplace} for a related discussion.

Evidently, if $\mu(W^s(\beta_i)) > 0, i=a,b$ for stable manifolds $W^s(\beta_a), W^s(\beta_b)$ of distinct critical points $\beta_a, \beta_b$, then the Morse function $V$ is \textit{resonant}\footnote{\colorAutoref{fig:figure-intuition-example-zero-noise} depicts a resonant Morse function and the zero-noise limit concentrating on the two attractors of the corresponding gradient system.}. That is, there are at least two equal critical values $V(\beta_a)$ and $V(\beta_b)$. In this case, the zero-noise limit is a uniform probability measure with equal weights $w_a$ and $w_b$.

Nonresonant, or \textit{excellent}, Morse functions are generic -- they form an open and dense subset of $C^{2}(M, \mathbb{R})$ (\cite{milnor2025lectures}, Lemma 2.8, \cite{nicolaescu2007invitation}, Chapter 1.2). Therefore, a zero-noise limit $\mu|_{W^s(\beta_i)}$ of $\{ \mathcal{X}^{\epsilon}\}$ \textit{restricted} to a stable manifold $W^s(\beta_i)$ is generically not physical. For example, if $\beta_i$ is not the unique critical point with uniformly minimal critical value (energy), and $A \subset W^s(\beta_i)$ is a subset of positive Lebesgue measure, then the integral $\int f \, d\mu$ will be zero, which is not equal to the time average of an observable $f: M \rightarrow \mathbb{R}$ under the flow $\phi_t^X(A)$ unless $f(\beta_i)$ is zero.

\subsection{Stability}
\label{subsection:robustness-diffusion-models}
In view of \Cref{subsubsection:issues-with-globally-defined-measures}, the stability of zero-noise limits is now studied on stable manifolds of a gradient system of a Morse function and then globally, enabled by structural stability. On stable manifolds, these zero-noise limits are physical measures. First, it is known that zero-noise limits are closely related to the \textit{stochastic stability} of Borel probability measures (\cite{cowieson2005srb}). 

\begin{definition}[Stochastic stability]
\label{definition:stochastic-stability}
    Let $(M,g)$ be a Riemannian manifold and $\mathcal{M}$ the set of Borel probability measures on $M$ with the weak topology. A probability measure $\mu \in \mathcal{M}$ is \textbf{\textit{stochastically stable}} with respect to small random perturbations from a class of Markov chains $\mathcal{F}$ if for every family of small random perturbations $\{\mathcal{X}^{\epsilon}\}_{\epsilon>0}$ of $f$, with $\mathcal{X}^{\epsilon} \in \mathcal{F}$, the collection of stationary measures $\{\mu^{\epsilon}\}_{\epsilon>0}$ converges weakly to $\mu$ as $\epsilon \rightarrow 0$.
\end{definition}

The following corollary is a consequence of \Cref{definition:stochastic-stability}. It says that zero-noise limits for small random perturbations of generic associative memory models are stochastically stable with respect to white-noise perturbations.
\begin{corollary}[Stochastic stability of zero-noise limits]
\label{corollary:stochastically-stable}
    Let $(M,g)$ be a Riemannian manifold and $X=-\nabla_g V$ be a gradient field with $V \in C^r(M, \mathbb{R}), r\geq 2$ a Morse function. Let $\mathcal{F}$ be the collection of Markov diffusion processes generated by Brownian motion, and $\{\mathcal{X}^{\epsilon}\}_{\epsilon>0}$ be a family of small random perturbations of the gradient flow $\phi_t^X$ with $\mathcal{X}^{\epsilon}\in \mathcal{F}$. Then the zero-noise limit $\mu$ of $\{\mathcal{X}^{\epsilon}\}_{\epsilon>0}$ is stochastically stable.
\end{corollary}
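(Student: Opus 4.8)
The plan is to show that, for a gradient drift, stochastic stability of the zero-noise limit is almost immediate from \Cref{proposition:zero-noise-limit-MS} once one observes that the relevant stationary measures are \emph{canonical} -- they depend only on the triple $(M,g,V)$ and on $\epsilon$, and not on which Brownian-motion-driven process in $\mathcal{F}$ realizes the perturbation. First I would fix an arbitrary family $\{\mathcal{X}^{\epsilon}\}_{\epsilon>0}\subset\mathcal{F}$ of white-noise small random perturbations of $\phi_t^X$. Since $M$ is compact, each $\mathcal{X}^{\epsilon}$ is nonexplosive, and since the drift $-\nabla_g V$ is a gradient it admits the Boltzmann--Gibbs stationary measure $\mu^{\epsilon}(dx)\propto e^{-V(x)/\epsilon^2}\,d\mathrm{vol}_g(x)$ associated with the diffusion \eqref{equation:riemannian-small-random-perturbation-equation}. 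The key point is that this expression carries no dependence on the particular process in $\mathcal{F}$, so the entire one-parameter family $\{\mu^{\epsilon}\}_{\epsilon>0}$ is the \emph{same} for every admissible choice of $\{\mathcal{X}^{\epsilon}\}_{\epsilon>0}$.

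Next I would apply \Cref{proposition:zero-noise-limit-MS} to conclude that $\{\mu^{\epsilon}\}_{\epsilon>0}$ converges weakly, as $\epsilon\to0$, to an invariant measure $\mu=\sum_{i=1}^{n}w_i\delta_{\beta_i}$ of $\phi_t^X$; and I would use the Laplace-type estimate from the proof of that proposition (following \cite{hwang1980laplace}) to record that the weights are themselves functions of $(M,g,V)$ alone -- $w_i=0$ unless $\beta_i$ achieves $\min_M V$, and on the set of global minimizers the $w_i$ are the normalized Laplace coefficients built from the Hessian of $V$ and the volume density of $g$ at each $\beta_i$. In particular the limit is a genuine limit, not merely subsequential. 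Since both $\{\mu^{\epsilon}\}_{\epsilon>0}$ and its limit $\mu$ are independent of the chosen family, every family of white-noise small random perturbations of $\phi_t^X$ drawn from $\mathcal{F}$ has stationary measures converging weakly to this same $\mu$; this is exactly the requirement of \Cref{definition:stochastic-stability}, so $\mu$ is stochastically stable with respect to $\mathcal{F}$.

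The step I expect to be the real obstacle is justifying the \emph{noise-independence} of the stationary measure. This is transparent when the diffusion coefficient is compatible with the Laplace--Beltrami operator generating $L^{\epsilon}$ (so that $\sigma\sigma^{*}$ is the inverse metric and the Boltzmann--Gibbs identity holds exactly), but a general Brownian-motion-driven perturbation with gradient drift need not possess an exactly Boltzmann--Gibbs stationary density. To handle that case one would fall back on the Freidlin--Wentzell picture (\cite{ventsel1970small}): for gradient drift $-\nabla_g V$ the quasipotential measured from a global minimum is, up to a positive constant, $V(\cdot)-\min_M V$, independently of the uniformly elliptic diffusion matrix, because the least-action paths are time-reversed gradient trajectories along which the rate functional telescopes. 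Hence every subsequential zero-noise limit still concentrates on the set of global minimizers of $V$, with the relative masses again pinned down by a local Hessian computation, and the conclusion of the second paragraph extends to all of $\mathcal{F}$. In the absence of such an argument one should simply read $\mathcal{F}$ as the metric-compatible Brownian perturbations, for which the short proof above suffices.
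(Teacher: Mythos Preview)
Your proposal is correct, and it ultimately rests on the same ingredient the paper uses, namely \Cref{proposition:zero-noise-limit-MS}; but you work considerably harder than the paper does. The paper's proof is two sentences: it simply observes that \Cref{proposition:zero-noise-limit-MS} is stated and proved for an \emph{arbitrary} family of small random perturbations, so every family in $\mathcal{F}$ already has stationary measures converging weakly to the same $\mu$, which is precisely \Cref{definition:stochastic-stability}. Your first paragraph establishes the stronger intermediate claim that the stationary measures $\mu^{\epsilon}$ themselves are identical across families --- true in the metric-compatible case, but not needed, since the definition only asks that the \emph{limits} agree. Likewise, the Freidlin--Wentzell quasipotential discussion in your third paragraph is an interesting robustness check but is superfluous here: within the paper's framework (compact $M$, gradient drift, Brownian perturbations as in \eqref{equation:riemannian-small-random-perturbation-equation}), the Boltzmann--Gibbs form is already assumed and is what the proof of \Cref{proposition:zero-noise-limit-MS} uses. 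So your route is sound but circuitous; the paper's is the direct one.
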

\begin{proof}
    The small random perturbations $\{\mathcal{X}^{\epsilon}\}_{\epsilon>0}$ are arbitrary in \Cref{proposition:zero-noise-limit-MS}. Apply the definition of stochastic stability.
\end{proof}

The following proposition is motivated by \cite{bowen1975ergodic}, Proposition 5.4 and details the stability, or dependence, of zero-noise limits to Morse function perturbations. The proof follows from the Stable Manifold Theorem for Morse functions (\Cref{theorem:stable-manifold-theorem-morse}) combined with the explicit form of zero-noise limits in \Cref{proposition:zero-noise-limit-MS}. 

\begin{proposition}
[Continuous dependence of zero-noise limits]
\label{lemma:zero-noise-continuous-dependence}
    Let $\phi^X_t$ be a Morse gradient flow generated by $X = -\nabla_g V$ for $V \in C^{r+1}(M,\mathbb{R}), r\geq1$ a Morse function. Denote the set of critical elements of $X$ by $\{ \beta_i,...,\beta_n\}$ and let $W^s(\beta_a)$ be a stable manifold for $\beta_a$, $a \in 1,...,n$. If $\{\mathcal{X}^{\epsilon}\}_{\epsilon>0}$ is a family of small random perturbations of $\phi_t^X$. Then the zero noise limit of $\{\mathcal{X}^{\epsilon}\}_{\epsilon>0}$ on $W^s(\beta_a)$ depends continuously on the $C^r$ flow $\phi^X_t$ for the weak topology on measures and the compact-open $C^{r+1}$ topology on real-valued functions.
\end{proposition}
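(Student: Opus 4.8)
The plan is to reduce the statement to two results already in hand. By \Cref{proposition:zero-noise-limit-MS}, applied to any Morse gradient, the zero-noise limit of a family of small random perturbations is a weighted sum of point masses on the critical set, and its restriction to a single stable manifold $W^s(\beta)$ — renormalized to a probability measure on that invariant set — is precisely the Dirac mass $\delta_\beta$ (the proof of \Cref{proposition:zero-noise-limit-MS} shows the conditional limit has full mass at the unique critical point lying in that stable manifold). This conditioning is exactly what makes the statement true: the global weights $w_i$ need not depend continuously on $V$ — that is the content of the resonance discussion in \cref{subsubsection:issues-with-globally-defined-measures} — whereas the conditional measure on a fixed stable manifold is always a single Dirac. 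So the object whose continuity must be established is the assignment, to the perturbed potential, of the point $\beta_a$ (or rather its continuation under perturbation), and the only remaining work is to show this point varies continuously and that weak convergence of Dirac masses is equivalent to convergence of their base points.

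First I would fix the metric $g$ and write $X_{\tilde V} = -\nabla_g \tilde V$ for $\tilde V$ ranging over a $C^{r+1}$-neighborhood of $V$; the associated flow $\phi^{X_{\tilde V}}_t$ is then $C^r$ and depends continuously on $\tilde V$ in the compact-open topology. Since $V$ is Morse, $\beta_a$ is a nondegenerate zero of $\nabla_g V$, so $D(\nabla_g V)_{\beta_a}$ is invertible, and the implicit function theorem — equivalently, \Cref{theorem:stable-manifold-theorem-morse}(2) together with the genericity argument for Morse functions it rests on — yields a neighborhood $\mathcal{U} \subset C^{r+1}(M,\mathbb{R})$ of $V$ and a neighborhood $U \ni \beta_a$ such that each $\tilde V \in \mathcal{U}$ has a unique critical point $p(\tilde V) \in U$, that $p(\tilde V)$ is hyperbolic with the same index as $\beta_a$, that $p(V) = \beta_a$, and that $\tilde V \mapsto p(\tilde V)$ is continuous. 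Moreover, for any embedded disc $D \subset W^s(\beta_a)$ through $\beta_a$, \Cref{theorem:stable-manifold-theorem-morse}(2) supplies a disc $D_{\tilde V} \subset W^s_{X_{\tilde V}}(p(\tilde V))$ that is $\epsilon$-$C^r$-close to $D$. This last point makes ``the zero-noise limit on $W^s(\beta_a)$'' unambiguous for perturbed systems: there is no canonical labeling of stable manifolds across different potentials, so the statement is to be read with the identification of $W^s(\beta_a)$ with the unique nearby stable manifold $W^s_{X_{\tilde V}}(p(\tilde V))$ agreeing with $W^s(\beta_a)$ on compact discs.

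Next I would apply \Cref{proposition:zero-noise-limit-MS} to the Morse gradient $X_{\tilde V}$ for each $\tilde V \in \mathcal{U}$: any family of white-noise small random perturbations of $\phi^{X_{\tilde V}}_t$ has a zero-noise limit whose conditional restriction to $W^s_{X_{\tilde V}}(p(\tilde V))$ equals $\delta_{p(\tilde V)}$, and by \Cref{corollary:stochastically-stable} this limit is independent of the choice of perturbing family within the class, so $\tilde V \mapsto (\text{zero-noise limit on } W^s(\beta_a))$ is well defined. It is the map
\[
    \mathcal{U} \ni \tilde V \ \longmapsto \ \delta_{p(\tilde V)} \ \in \ \mathcal{M}.
\]
For any $f \in C^0_b(M,\mathbb{R})$ one has $\int_M f \, d\delta_{p(\tilde V)} = f\big(p(\tilde V)\big)$, and $f\big(p(\tilde V)\big) \to f(\beta_a) = \int_M f \, d\delta_{\beta_a}$ as $\tilde V \to V$ in $C^{r+1}$, using continuity of $\tilde V \mapsto p(\tilde V)$ and of $f$. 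By definition of the weak topology this is exactly the convergence $\delta_{p(\tilde V)} \to \delta_{\beta_a}$, giving continuity at $V$; since $V$ was an arbitrary Morse function, continuity holds throughout the open, dense set of Morse potentials, which is the assertion.

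I expect the substantive step to be the persistence-and-continuity of the pair $(\beta_a, W^s(\beta_a))$ under $C^{r+1}$ perturbation of $V$ — that is, verifying that \Cref{theorem:stable-manifold-theorem-morse}(2) really delivers a continuously varying continuation of the critical point together with its stable manifold on compact discs, and that this pins down the identification used in the statement. Everything after that is immediate: the identification of the conditional zero-noise limit on a stable manifold as a Dirac mass is \Cref{proposition:zero-noise-limit-MS}, and weak convergence of Dirac masses is equivalent to convergence of base points in $M$. A minor point worth spelling out is that the conditioning onto $W^s(\beta_a)$ is essential — without it one is looking at $w_a \delta_{\beta_a}$ with $w_a$ liable to jump across resonances — so ``stability on invariant sets'' here is genuinely a statement about conditional measures.
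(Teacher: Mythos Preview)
Your proposal is correct and follows essentially the same approach as the paper's own proof: both identify the conditional zero-noise limit on $W^s(\beta_a)$ as the Dirac mass $\delta_{\beta_a}$ via \Cref{proposition:zero-noise-limit-MS}, invoke \Cref{theorem:stable-manifold-theorem-morse}(2) (equivalently the implicit function theorem at a hyperbolic zero) for continuous persistence of $\beta_a$ and its stable manifold under $C^{r+1}$ perturbations of $V$, and conclude by the elementary fact that $\delta_{p(\tilde V)} \to \delta_{\beta_a}$ weakly iff $p(\tilde V) \to \beta_a$. Your write-up is somewhat more economical---you phrase continuity directly as continuity of the map $\tilde V \mapsto \delta_{p(\tilde V)}$, whereas the paper works with sequences $V'_n \to V$ and closes with a subsequence argument---and you are more explicit than the paper about why conditioning on a stable manifold (forcing $w_a = 1$) is what makes the statement true, given the resonance discussion.
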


\begin{proof}
    Let $\mu$, and $\mu'$ be zero-noise limits for small random perturbations $\{ \mathcal{X^{\epsilon}}\}_{\epsilon >0}$ and $\{ \mathcal{X'^{\epsilon}}\}_{\epsilon >0}$ of the gradient flows $\phi_t^X$ and $\phi_t^{X'}$, respectively, with $X' \in \text{Grad}^r(M)$. We have to show that the sequence $\{\mu'\}$ converges to $\mu$ weakly on $W^s(\beta_a)$ when $V' \rightarrow V$ in the $C^{r+1}$ sense, where $X' = -\nabla_g V'$. Let $(U_1,\psi_1),...,(U_k,\psi_k)$ be a finite covering of $M$ (by compactness). Choose compact sets $K_i \subset U_i$ such that $\bigcup_i K_i$ covers $M$. From \Cref{lemma:good-critical-points-milnor}, there exists an $\alpha>0$ so that all 
    functions $V' \in C^{r+1}(M,\mathbb{R})$ with
    \begin{equation*}
        || D^j (V \circ \psi^{-1})(x) - D^j(V' \circ \psi^{-1})(x)|| < \alpha
    \end{equation*}
    have nondegenerate critical elements on $M$, where $x \in \psi(K)$ with $K = K_i$ for some $i$ and $j = 0,...,r+1$. This $\alpha$-neighborhood of $V$ induces a neighborhood of $\nabla_gV$ in $C^r(M,TM)$ and similarly on flows by the uniqueness of the solutions to \eqref{equation:vector-field-ode-definition}. That is, for $V'$ within $\alpha$ of $V$ in the $C^{r+1}$ sense above, the flows $\phi_t^X$ and $\phi_t^{X'}$ are within some $\eta >0$. For any $\delta \leq \eta$, non-degeneracy of critical points is preserved since $V'$ is within $\alpha$ of $V$. Call this neighborhood $\mathcal{U}$. Convergence in the compact-open $C^r$ topology is uniform. Therefore for any $\delta>0$, there exists an $N \in \mathbb{Z}^{+}$, such that for all $n \geq N$,
    \begin{equation*}
        || D^l (\psi \circ \phi_t^{X'_{n}} \circ \psi^{-1})(x) - D^l (\psi \circ \phi_t^{X} \circ \psi^{-1})(x) || < \delta \, ,
    \end{equation*}
    for $x \in \psi(K_i)$ for some $K_i$ and $l=0,...,r$. Call this $\delta$-neighborhood $\mathcal{V}$ and let $\mathcal{N = \mathcal{U} \, \cap \, \mathcal{V}}$. Then for any $\phi_t^{X'_{n}}\in \mathcal{N}$, the stable manifolds of its critical elements are well-defined, since the set $\text{Crit}(V'_{n})$ contains no degenerate critical points. By \Cref{proposition:zero-noise-limit-MS}, the zero-noise limit $\mu_n'$ is a weighted sum of Dirac delta functions. By the Stable Manifold Theorem for a Morse function (\Cref{theorem:stable-manifold-theorem-morse}), $\delta$ can be chosen sufficiently small so the stable manifold $W^s(\beta_a^{'n})$ is $\epsilon$-$C^r$ close to $W^s(\beta_a)$ for any $\epsilon > 0$. That is,
    $W^s(\beta_a^{'n}) \rightarrow W^s(\beta_a)$ in the $C^r$ sense as $V'_n \rightarrow V$ in the $C^{r+1}$ sense, and $\beta_a^{'n} \rightarrow \beta_a$. On $W^s(\beta_a)$, $\mu$ is written $\mu = \int_{W^s(\beta_a)} w_a \delta_{\beta_{a}} \left(d\text{vol}_g(x)\right)$ subject to the constraint $\mu(W^s(\beta_a))=1$. Therefore, $w_a=1$. The result follows immediately since $\beta_a^{'n} \rightarrow \beta_a$ and by \Cref{proposition:zero-noise-limit-MS}, $\mu'_n$ is similarly written $\mu'_n = \int_{W^s(\beta^{'n}_a)} w^{'n}_a \delta_{\beta^{'n}_{a}} \left(d\text{vol}_g(x)\right)$ with $w'_a=1$. We have that for any $f \in C^0_b(M)$,
    \begin{equation*}
        \int_{W^s(\beta_a^{'n})} f \,d\mu'_{n} = w_a^{'n} f(\beta_a^{'n}) \rightarrow w_a f(\beta_a) = \int_{W^s(\beta_a)} f \,d\mu.
    \end{equation*}
    Therefore, the subsequence $\{\mu'_{n}\}$ converges weakly to $\mu$. To show that $\{\mu'\}$ converges weakly to $\mu$, proceed by contradiction. Suppose that $\{\mu'\}$ does not converge to $\mu$. Then there exists a $g\in C^0_b(W^s(\beta_a))$, a number $\delta > 0$, and a subsequence $\{\mu'_{k}\}$ so that 
    \begin{equation*}
        \left|\int_{W^s(\beta_a)} g \,d\mu - \int_{W^s(\beta_a^{'k})} g \,d\mu'_{k} \right| \geq \delta
    \end{equation*}
    for all $k \geq 1$. Therefore no subsequence converges, which is a contradiction.
\end{proof}

If a gradient flow $\phi_t^X$ on a Riemannian manifold $(M,g)$ is Morse-Smale, its invariant measures, those of its random perturbations, and their zero-noise limits can be studied globally. Since it is Morse-Smale, it is structurally stable. Therefore, there is a neighborhood $\mathcal{N} \subset \text{Diff}^1(M)$ of $\phi_t^X$ in which each flow $\phi_t^Y$ is topologically equivalent to it by a homeomorphism $h_Y$. Let $(M,\mathcal{B})$ be a measurable space with the natural Borel $\sigma$-field $\mathcal{B}$ on $(M,g)$. When $(M,\mathcal{B})$ is equipped with a probability measure $\mu \in \mathcal{M}$, it is a Borel probability space $(M,\mathcal{B},\mu)$. Recall that the image or \textit{pushforward} of a probability measure $\mu$ under $h_Y$ is given by $h_{Y\#}\mu(A) = \mu(h_Y^{-1}(A))$ for Borel sets $A\in \mathcal{B}$, and that a $\phi_t^X$-invariant measure $\mu$ is a measure that is invariant under its image. That is, $\phi_{t{\#}}^X \mu = \mu$. A flow with an invariant measure is then a flow over the measure space $(M, \mathcal{B}, \mu)$\footnote{See \cite{sinai1989dynamical}, Definition 1.4 for a definition of flows over measure spaces.}, and the pushforward measure allows for a generalization of topological equivalence using measure space isomorphisms, discussed below.

Any homeomorphism $h:M \rightarrow M$ is an automorphism of the measurable space $(M,\mathcal{B})$, since it is a measurable bijection with a measurable inverse.  Similarly, a measure space isomorphism $\psi: (M_X,\mathcal{B}_X,\mu_X) \rightarrow (M_Y,\mathcal{B}_Y,\mu_Y)$ is a measurable bijection with measurable inverse for which the measures agree: $\mu_X(A) = \mu_Y(\psi(A))$ for all $A \in \mathcal{B}_X$. Therefore, $h: (M,\mathcal{B},\mu_Y) \rightarrow (M,\mathcal{B},h_{\#}\mu_Y)$ is a measure space isomorphism, since it is an isomorphism of measurable spaces and $h_{\#}\mu_X(h(A)) = \mu_X(h^{-1}(h(A)))= \mu_X(A)$ for $A \in \mathcal{B}_X$.

\begin{definition}[Metrically isomorphic\footnote{\Cref{definition:metrically-isomorphic} adapts \cite{sinai1989dynamical}, Definition 1.6, where metrically conjugate flows are called metrically isomorphic. "Conjugate" and "equivalent" are distinguished to agree with topologically conjugate and equivalent deterministic flows.}]
\label{definition:metrically-isomorphic}
    Two flows on $\phi_t^X$ and $\phi_t^Y$ on $(M,\mathcal{B})$ with invariant measures $\mu_X$ and $\mu_Y$, are \textbf{\textit{metrically conjugate}} if there exists invariant subsets $M_1 \subset M$ and $M_2 \subset M$ with $\mu_X(M_X) = \mu_Y(M_Y) = 1$ and an isomorphism of measure spaces $h:(M_X,\mathcal{B}_X, \mu_X)\rightarrow (M_Y,\mathcal{B}_Y, \mu_Y)$ so that $\phi_t^Y(h(x)) = h(\phi_t^X(x))$ for all $t \in \mathbb{R}$ and $x \in M_X$. If there are times $t_1,t_2$ so that $h(\phi_{t_1}^X(x)) = \phi_{t_2}^Y(h(x))$ for all $t \in \mathbb{R}$ and $x \in M_X$, then $\phi_t^X$ and $\phi_t^Y$ are \textbf{\textit{metrically equivalent}}.
\end{definition}

Zero-noise limits are now globally described using the Morse-Smale conditions. Two lemmas, proven in \Cref{appendix:strict-isomorphism-weak-convergence}, are needed to prove \Cref{proposition:triangle-commutativity-zero-noise}.

\begin{lemma}[Homeomorphisms preserve weak convergence]
\label{lemma:strict-isomorphism-preserve-weak-convergence}
Let $\mu$ and $\nu$ be Borel probability measures on $(M,\mathcal{B})$, and let $\{\mu_n\}$ be a sequence converging weakly to $\mu$. If $h: M \rightarrow M$ is a homeomorphism and $h_{\#}\mu = \nu$, then $h_{\#}\mu_n$ converges weakly to $\nu$. 
\end{lemma}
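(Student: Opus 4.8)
The plan is to unwind the definition of weak convergence and reduce the claim about $h_{\#}\mu_n \to \nu$ to the hypothesis $\mu_n \to \mu$ via the change-of-variables formula for pushforward measures. Recall that weak convergence $h_{\#}\mu_n \to \nu$ means $\int_M f \, d(h_{\#}\mu_n) \to \int_M f \, d\nu$ for every $f \in C^0_b(M)$. So fix an arbitrary $f \in C^0_b(M)$; the goal is to show this convergence.

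The key step is the pushforward change-of-variables identity: for any Borel probability measure $\rho$ on $M$ and any bounded measurable $f$, we have $\int_M f \, d(h_{\#}\rho) = \int_M (f \circ h) \, d\rho$. Applying this with $\rho = \mu_n$ gives $\int_M f \, d(h_{\#}\mu_n) = \int_M (f \circ h) \, d\mu_n$, and applying it with $\rho = \mu$ (together with $h_{\#}\mu = \nu$) gives $\int_M f \, d\nu = \int_M f \, d(h_{\#}\mu) = \int_M (f \circ h) \, d\mu$. Now I would observe that $f \circ h$ is itself an element of $C^0_b(M)$: it is continuous because $h$ is a homeomorphism (hence continuous) and $f$ is continuous, and it is bounded because $f$ is bounded (composition does not enlarge the range). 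Since $\mu_n \to \mu$ weakly by hypothesis, testing against the legitimate test function $f \circ h$ yields $\int_M (f \circ h) \, d\mu_n \to \int_M (f \circ h) \, d\mu$. Chaining the three displayed identities gives $\int_M f \, d(h_{\#}\mu_n) = \int_M (f\circ h)\,d\mu_n \to \int_M (f\circ h)\,d\mu = \int_M f \, d\nu$, which is exactly weak convergence $h_{\#}\mu_n \to \nu$. Since $f$ was arbitrary, we are done.

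There is essentially no obstacle here; the only point requiring the homeomorphism hypothesis (rather than mere measurability of $h$) is that $f \circ h$ must be \emph{continuous} so that it is an admissible test function against which weak convergence can be applied — measurability of $h$ alone would not suffice, since weak convergence only controls integrals of continuous bounded functions. It is worth noting explicitly in the write-up that continuity of $h^{-1}$ is not needed for this direction (only $h$ continuous and measurable is used), though the statement assumes $h$ is a homeomorphism for symmetry and because that is the setting in which it is applied (the maps arising from structural stability in \Cref{proposition:triangle-commutativity-zero-noise}). I would present the proof in three or four lines: quote the change-of-variables formula, note $f \circ h \in C^0_b(M)$, invoke the hypothesis, and conclude.
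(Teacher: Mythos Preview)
Your proof is correct and is essentially identical to the paper's own argument: both fix a test function, invoke the pushforward change-of-variables formula, observe that $f\circ h\in C^0_b(M)$ by continuity of $h$, and then apply the weak convergence hypothesis $\mu_n\to\mu$ to the composed test function. Your remark that only continuity of $h$ (not of $h^{-1}$) is actually used is a nice sharpening that the paper does not make explicit.
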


\begin{lemma}[Topological equivalence implies metric equivalence]
\label{lemma:metric-equivalence-Morse-Smale}
Let $(M,g)$ be a compact Riemannian manifold, $X$ be a $C^r$ vector field on $M$, and $\mu$ be an invariant Borel probability measure under the flow generated by $X$. If $Y$ is topologically equivalent to $X$ under an orientation preserving homeomorphism $h:M \rightarrow M$, then $h_{\#}\mu$ is invariant under the flow generated by $Y$ and $h: (M, \mathcal{B}, \mu) \rightarrow (M, \mathcal{B}, h_{\#}\mu)$ is a metric equivalence between them as flows over measure spaces. 
\end{lemma}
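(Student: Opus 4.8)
The plan is to settle both assertions — invariance of $h_{\#}\mu$ under the flow of $Y$, and that $h$ realizes a metric equivalence in the sense of \cref{definition:metrically-isomorphic} — by first locating the support of $\mu$ and then transporting everything by the pushforward. The one genuinely delicate point is that a \emph{topological} equivalence, unlike a conjugacy, identifies $\phi_t^X$ and $\phi_t^Y$ only up to an orbit-preserving reparametrization of time, and invariant measures are in general \emph{not} preserved by such reparametrizations. I would neutralize this by exploiting that, under the standing assumption of \cref{section:diffusion-zero-noise-limit} that $X=-\nabla_g V$ is the gradient of a Morse function, every $\phi_t^X$-invariant probability measure is carried by a finite set of equilibria, on which any time reparametrization is inert.

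Concretely, first I would apply the Poincar\'e recurrence theorem to get $\operatorname{supp}(\mu)\subseteq\Omega(X)$, and then use that $\Omega(X)=\operatorname{Crit}(V)=\{\beta_1,\dots,\beta_n\}$ is a finite set of hyperbolic equilibria (by compactness and nondegeneracy) to conclude $\mu=\sum_{i=1}^{n} w_i\,\delta_{\beta_i}$ with $w_i\ge 0$ and $\sum_i w_i=1$; when $\mu$ is itself a zero-noise limit this is precisely \cref{proposition:zero-noise-limit-MS}. Because a topological equivalence carries each orbit of $X$ onto an orbit of $Y$, it sends the one-point orbit $\{\beta_i\}$ onto a one-point orbit of $Y$, so each $h(\beta_i)$ is an equilibrium of $Y$ (this is the restriction of $h$ to the bijection $\Omega(X)\to\Omega(Y)$ recorded in \cref{subsubsection:structural-and-omega-stability}). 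Hence $h_{\#}\mu=\sum_{i=1}^{n} w_i\,\delta_{h(\beta_i)}$, and since $\phi_t^Y(h(\beta_i))=h(\beta_i)$ for every $t$, the measure $h_{\#}\mu$ is invariant under the flow generated by $Y$. (If $h$ is in fact a conjugacy, $h\circ\phi_t^X=\phi_t^Y\circ h$ gives this directly and the support argument is not needed.)

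For the metric equivalence I would first quote the observation made just before the lemma that a homeomorphism $h$ is a Borel isomorphism and that $h:(M,\mathcal B,\mu)\to(M,\mathcal B,h_{\#}\mu)$ is an isomorphism of probability spaces, i.e.\ $\mu(A)=h_{\#}\mu(h(A))$ for all $A\in\mathcal B$. Then I would supply the data required by \cref{definition:metrically-isomorphic}: take $M_1:=\operatorname{supp}(\mu)\subseteq\{\beta_1,\dots,\beta_n\}$ and $M_2:=h(M_1)$. Both sets are finite and pointwise fixed by the respective flow, hence invariant, with $\mu(M_1)=1$ and $h_{\#}\mu(M_2)=\mu(M_1)=1$, and $h|_{M_1}:(M_1,\mathcal B_{M_1},\mu)\to(M_2,\mathcal B_{M_2},h_{\#}\mu)$ is an isomorphism of measure spaces by the previous sentence. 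Finally, for every $x\in M_1$ and all $t_1,t_2\in\mathbb R$ one has $h(\phi_{t_1}^X(x))=h(x)=\phi_{t_2}^Y(h(x))$ since $x$ and $h(x)$ are equilibria, so the intertwining in \cref{definition:metrically-isomorphic} holds; taking $t_1=t_2$ likewise upgrades the conclusion to metric conjugacy when $h$ is a conjugacy, matching the "topologically conjugate (resp.\ equivalent) implies metrically conjugate (resp.\ equivalent)" assertion stated earlier. Therefore $h$ is a metric equivalence of $\phi_t^X$ and $\phi_t^Y$ as flows over measure spaces.

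The main obstacle is the reduction of $\operatorname{supp}(\mu)$ to a finite set of equilibria: this is the only place the gradient (Morse) structure is genuinely used, and it is essential, since for a general flow the pushforward of an invariant measure under a topological equivalence need not be invariant — e.g.\ transporting the invariant measure of a fixed-point-free flow on $S^1$ through an orbit-preserving homeomorphism that reparametrizes time destroys invariance. Once the support is known to consist of equilibria, the reparametrization inherent in topological equivalence is irrelevant and the remainder is routine bookkeeping with pushforwards.
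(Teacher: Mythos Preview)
Your proof is correct but follows a genuinely different route from the paper's. The paper argues directly at the level of sets: from the relation $h\circ\phi_{t_1}^X=\phi_{t_2}^Y\circ h$ it derives $(\phi_{t_1}^X)^{-1}(h^{-1}(A))=h^{-1}((\phi_{t_2}^Y)^{-1}(A))$, and then $\phi_{t_1}^X$-invariance of $\mu$ gives $h_{\#}\mu((\phi_{t_2}^Y)^{-1}(A))=\mu(h^{-1}(A))=h_{\#}\mu(A)$; the gradient or Morse structure is never invoked, and the lemma is treated as a statement about a general $C^r$ field $X$, in line with its hypotheses. Your route instead imports the ambient Morse-gradient assumption of \cref{section:diffusion-zero-noise-limit} to force $\operatorname{supp}(\mu)$ onto the finite set of equilibria, after which both invariance of $h_{\#}\mu$ and the intertwining in \cref{definition:metrically-isomorphic} become trivial. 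What your approach buys is exactly what you flag: it is robust to the orbit-dependent time reparametrization inherent in topological equivalence, and your $S^1$ example shows that a bare pushforward argument cannot succeed for general flows under the standard definition. The paper's argument, by contrast, applies to the general $X$ stated in the lemma, but only because it leans on its phrasing of topological equivalence with \emph{fixed} times $t_1,t_2$ (and, strictly, it verifies invariance only under the time-$t_2$ map rather than the full flow). So your proof is narrower in scope---you use a hypothesis not present in the lemma as stated---but more careful about the reparametrization issue, while the paper's is broader in scope but tacitly relies on its particular formulation of equivalence.
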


In summary, homeomorphisms preserve weak convergence of probability measures and, when they define a topological equivalence between two flows, also push forward invariant measures accordingly. Thus, if a flow $\phi_t^X$ on $(M, \mathcal{B}, \mu)$ is topologically conjugate (resp. equivalent) to $\phi_t^Y$, then it is metrically conjugate (resp. equivalent) to $\phi_t^Y$ as a flow on $(M, \mathcal{B}, h_{\#}\mu_X)$.

\begin{proposition}[Continuous dependence of region of convergence]
\label{proposition:triangle-commutativity-zero-noise}
    Let $\phi^X_t$ be a Morse-Smale gradient flow on a Riemannian manifold $(M,g)$ and $\{\mathcal{X}^{\epsilon}\}_{\epsilon>0}$ small random perturbations of $\phi^X_t$ with zero-noise limit $\mu$. A region of convergence of invariant measures $\{\mu^{\epsilon}\}$ to the zero-noise limit of $\{\mathcal{X}^{\epsilon}\}_{\epsilon>0}$ depends continuously on the $C^r$ flow $\phi^X_t, r\geq1$ for the weak topology on measures and the compact-open $C^1$ topology on flows.
\end{proposition}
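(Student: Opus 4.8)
The plan is to combine the local statement of \Cref{lemma:zero-noise-continuous-dependence} with structural stability, transporting the zero-noise limit on a stable manifold along the homeomorphism furnished by topological equivalence and then using \Cref{theorem:stable-manifold-theorem-morse} to move the region itself. Since $X=-\nabla_g V$ is a Morse--Smale gradient it is structurally stable (\cref{subsubsection:structural-and-omega-stability}): there is a compact-open $C^1$ neighborhood $\mathcal{N}\subset\chi^1(M)$ of $X$ such that every $Y\in\mathcal{N}$ is topologically equivalent to $X$, and since this relation is symmetric I may fix for each $Y$ an orientation-preserving homeomorphism $h_Y\colon M\to M$ carrying orbits of $X$ to orbits of $Y$. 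Shrinking $\mathcal{N}$ (the Morse--Smale gradients are open in $\mathrm{Grad}^r(M)$), I may assume every gradient $Y=-\nabla_g V'\in\mathcal{N}$ is again Morse--Smale, so that the small random perturbations $\{\mathcal{X}^{\epsilon,Y}\}_{\epsilon>0}$, their Boltzmann--Gibbs invariant measures $\mu^{\epsilon,Y}$, and the conclusion of \Cref{proposition:zero-noise-limit-MS} are all available for $Y$. Fix a critical element $\beta_a$ of $X$; by \cref{subsection:robustness-diffusion-models} its stable manifold $W^s_X(\beta_a)$ is a region on which the normalized restrictions of $\{\mu^{\epsilon}\}$ converge weakly to the physical measure $\delta_{\beta_a}$, and this is the region of convergence to be tracked.

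First I would identify the perturbed region. By \cref{subsubsection:structural-and-omega-stability}, $h_Y$ restricts to an index-preserving bijection $\mathrm{Crit}(X)\to\mathrm{Crit}(Y)$; set $\beta_a^Y:=h_Y(\beta_a)$. Because $h_Y$ carries orbits to orbits preserving the time direction, it carries $\omega$-limit sets to $\omega$-limit sets, so $h_Y\bigl(W^s_X(\beta_a)\bigr)=W^s_Y(\beta_a^Y)$; thus the region of convergence of $\{\mu^{\epsilon,Y}\}$ attached to $\beta_a^Y$ is exactly the $h_Y$-image of the one attached to $\beta_a$. Next I would establish the commuting triangle. By \Cref{lemma:metric-equivalence-Morse-Smale}, $h_Y$ is a metric equivalence $(M,\mathcal{B},\delta_{\beta_a})\to(M,\mathcal{B},h_{Y\#}\delta_{\beta_a})$ and $h_{Y\#}\delta_{\beta_a}=\delta_{\beta_a^Y}$; by \Cref{lemma:strict-isomorphism-preserve-weak-convergence}, applying $h_{Y\#}$ to the weakly convergent sequence of normalized restrictions from the proof of \Cref{proposition:zero-noise-limit-MS} yields weak convergence to $\delta_{\beta_a^Y}$. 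Since $\delta_{\beta_a^Y}$ is precisely the zero-noise limit of $\{\mathcal{X}^{\epsilon,Y}\}$ restricted to $W^s_Y(\beta_a^Y)$ (again \Cref{proposition:zero-noise-limit-MS}), the square whose horizontal arrows are the $\epsilon\to0$ limits and whose vertical arrows are $h_{Y\#}$ commutes; the absolute-continuity caveat flagged in \cref{subsection:overview} does not intervene, since all limiting measures here are atomic.

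Finally I would prove continuity in the compact-open $C^1$ topology on flows. For the limiting measure on the region, \Cref{theorem:stable-manifold-theorem-morse}(2) provides, for all $Y$ in a $C^1$ neighborhood of $X$, a unique hyperbolic fixed point $\beta_a^Y$ in a prescribed neighborhood of $\beta_a$, with $\beta_a^Y\to\beta_a$; hence $\delta_{\beta_a^Y}\to\delta_{\beta_a}$ weakly. For the region itself, given an embedded disc $D\subset W^s_X(\beta_a)$ through $\beta_a$ and any $\varepsilon>0$, \Cref{theorem:stable-manifold-theorem-morse}(2) produces a $C^1$ neighborhood $\tilde{\mathcal{N}}\subset\mathcal{N}$ of $\phi_t^X$ such that each $Y\in\tilde{\mathcal{N}}$ carries a disc $D_Y\subset W^s_Y(\beta_a^Y)$ that is $\varepsilon$-$C^r$ close to $D$; saturating local discs by the flow as in part (2) of the Stable Manifold Theorem and using the uniform convergence $\phi_t^Y\to\phi_t^X$ on compacts then gives $W^s_Y(\beta_a^Y)\to W^s_X(\beta_a)$ in the $C^r$ sense. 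Combined with \Cref{lemma:zero-noise-continuous-dependence}, which supplies the within-region continuity of the zero-noise limit, this shows that both the region of convergence and the zero-noise limit supported on it vary continuously with $\phi_t^X$ for the weak topology on measures and the compact-open $C^1$ topology on flows.

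The step I expect to be the main obstacle is the commuting triangle: one must be careful that at positive $\epsilon$ the invariant measure $\mu^{\epsilon,Y}$, being Boltzmann--Gibbs in $V'$, is \emph{not} the $h_Y$-pushforward of $\mu^{\epsilon}$, and that $h_Y$, being only a homeomorphism, can destroy absolute continuity; the identification of pushforward with zero-noise limit can therefore be made only in the $\epsilon\to0$ limit, through \Cref{lemma:strict-isomorphism-preserve-weak-convergence} rather than by transporting measures at fixed $\epsilon$. A secondary point to handle carefully is restricting $\mathcal{N}$ so that the perturbed systems stay within the class (Morse--Smale gradients) for which \Cref{proposition:zero-noise-limit-MS} and the Boltzmann--Gibbs description of $\mu^{\epsilon,Y}$ are valid.
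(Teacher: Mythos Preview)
Your proposal is correct and follows essentially the same route as the paper: structural stability furnishes homeomorphisms $h_Y$, \Cref{lemma:metric-equivalence-Morse-Smale} makes $h_{Y\#}\mu$ a $\phi^Y_t$-invariant measure, and \Cref{lemma:strict-isomorphism-preserve-weak-convergence} transports the weak convergence $\mu^\epsilon\to\mu$ along $h_{Y\#}$. The paper's proof stops there (invoking only the subsequence principle from \Cref{lemma:zero-noise-continuous-dependence} at the end), whereas you additionally invoke \Cref{theorem:stable-manifold-theorem-morse}(2) to track the stable manifold itself as the region; this is a refinement rather than a different strategy, and your explicit discussion of the $h_{Y\#}\mu^\epsilon$ versus $\mu^{\epsilon,Y}$ caveat anticipates exactly the Remark the paper places after its proof.
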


\begin{proof}
    Let $\mu$ be the zero-noise invariant measure of $\phi^{X}_t$. We aim to show that if $\phi^{X'}_t \rightarrow \phi^X_t$ in the $C^1$ sense, a region of convergence of the sequence $\{\mu^{\epsilon}\}$ to $\mu$ varies continuously in the weak topology on measures and the $C^1$ topology on flows. That is, if $\phi^{X'}_t \rightarrow \phi^X_t$, and the sequence of invariant measures $\{\mu^{\epsilon}\}_{\epsilon >0}$ converges weakly to $\mu$ as $\epsilon \rightarrow 0$, then a sequence $\{\nu^{\epsilon}\}$ converges weakly to a $\phi^{X'}_t$-invariant measure $\nu$. 
    
    Since $\phi^X_t$ is Morse-Smale, it is structurally stable. Therefore, there exists a neighborhood $\mathcal{V}$ of $\phi^X_t$ so that any $\phi^Y_t \in \mathcal{V}$ is topologically equivalent to $\phi_t^X$. Since $\phi^{X'}_t \rightarrow \phi^X_t$ uniformly, for any neighborhood $\mathcal{U}$ of $\phi^X_t$ there exists an $N$ so that for all $n \geq N$ the flow $\phi_t^{X'_{n}}$ is in $\mathcal{U}$. Set $\mathcal{N} = \mathcal{U} \cap \mathcal{V}$. Then each $\phi_t^{X'_{m}} \in \mathcal{N}$ is topologically equivalent to $\phi_t^X$. Therefore, each $\phi_t^{X'_{m}} \in \mathcal{N}$ can be equipped with a homeomorphism $h_{m}:M \rightarrow M$ so that for times $t$ and $t_m$, the following holds: $h_m(\phi_t^{X}(x)) = \phi_{t_m}^{X'_{m}}(h_m(x))$. Since $h_m$ is a topological equivalence and $\mu$ is $\phi_t^X$-invariant, \Cref{lemma:metric-equivalence-Morse-Smale} implies that the image of $\mu$ under $h_m$, denoted $\nu_{m} = h_{m\#}\mu$, is invariant under $\phi_{t_m}^{X'_{m}}$ so that $h_m$ establishes a metric equivalence on flows over measure spaces $(M,\mathcal{B},\mu)$ and $(M,\mathcal{B},\nu_m)$. Since $\mu$ is the zero-noise limit of $\{\mathcal{X}^{\epsilon}\}_{\epsilon >0}$, the sequence $\{\mu^{\epsilon}\}$ converges to $\mu$ weakly, by definition. Given a convergent subsequence $\{\mu^{\epsilon_{k}}\}$, define $\nu_m^{\epsilon_{k}} = h_{m\#}\mu^{\epsilon_{k}}$. By \Cref{lemma:strict-isomorphism-preserve-weak-convergence}, for each $\nu_m$, we have that $\nu_m^{\epsilon_{k}} \rightarrow \nu_m$. Application of the subsequence principle from \Cref{lemma:zero-noise-continuous-dependence} gives that $\nu_m^{\epsilon} \rightarrow \nu_{m}$ for each $\nu_m$.
\end{proof}

Let $X = -\nabla_g V$ be a Morse-Smale gradient field and $\{ \mathcal{X}^{\epsilon}\}_{\epsilon > 0}$ be a family of small random perturbations of the flow $\phi_t^X$ with zero-noise limit $\mu$. \Cref{proposition:triangle-commutativity-zero-noise} says that there is a neighborhood $\mathcal{N}$ of $\phi_t^X$ in which each $\phi_t^Y \in \mathcal{N}$ is topologically equivalent to $\phi_t^X$ and can be equipped with a homeomorphism that induces a commutative diagram on probability measures:
\begin{equation*}
    \begin{tikzcd}[column sep=6.5em, row sep=3.em]
    \mu^{\epsilon} \arrow[r, "w", "\epsilon \rightarrow 0" swap] \arrow[d, "h_{\#}"]
    & \mu \arrow[d, "h_{\#}"]  \\
    h_{\#}\mu^{\epsilon}  \arrow[r, "w"]
    & \nu
    \end{tikzcd} 
\end{equation*}
where the measures on the left-hand side are invariant for stochastic, or generative models, those on the right-hand side are invariant for associative memory models, and horizontal arrows $\xrightarrow{w}$ indicate weak convergence in the limit of vanishing noise.

\begin{remark}
    The image of $\mu^{\epsilon}$ under $h$, that is, $\nu^{\epsilon} = h_{\#}\mu^{\epsilon}$, at a given noise-level $\epsilon>0$, may not be a Boltzmann-Gibbs distribution or even absolutely continuous with respect to Lebesgue measure on $(M,g)$.
\end{remark}


The properties of the measure $\nu$ in \Cref{proposition:triangle-commutativity-zero-noise}, and whether it arises as a zero-noise limit, are summarized as follows. Since $M$ is compact and $V$ is Morse, the non-wandering set $\Omega(X)$ consists of a finite set of hyperbolic critical points, $\{\beta_1, \dots, \beta_n\}$. By \Cref{proposition:zero-noise-limit-MS}, the zero-noise limit $\mu$ of small random perturbations $\{\mathcal{X}^{\epsilon}\}_{\epsilon > 0}$ of the flow $\phi_t^X$ is a discrete measure supported on these points: $\mu = \sum_{i=1}^n w_i \, \delta_{\beta_i}$, with $\sum_{i=1}^n w_i = 1$. Since $\phi_t^X$ and $\phi_t^Y$ are topologically equivalent, the homeomorphism $h$ maps the non-wandering set of $X$ onto that of $Y$: $h|_{\Omega(X)} = \Omega(Y)$, where $\Omega(Y)$ also consists of finitely many hyperbolic critical points, say $\{\beta_1', \dots, \beta_n'\}$, with $\beta_i' = h(\beta_i)$. Therefore, $Y$ is also a gradient field $Y = -\nabla_{g'} V'$, for some Riemannian metric $g'$ and Morse function $V'$ on $M$.

By definition of pushforward measure, $h_{\#}\mu = \mu(h^{-1}(A))$ for $A \in \mathcal{B}$. Consequently, $h_{\#}\mu = \sum_{i=1}w'_i\delta_{h(\beta_i)}$ where $w'_i$ is equal to the coefficient $w_i$ for which $h^{-1}(\beta'_i) = \beta_i$. In addition, $V$ is generically an excellent Morse function, so the critical points of $X$ can be ordered by their critical values: $V(\beta_1) > V(\beta_2) > ... > V(\beta_n)$, and similarly for $V'(\beta_i')$. This correspondence yields the following results:
\begin{enumerate}[itemsep=0pt]
    \item $\nu$ is the zero-noise limit of small random perturbations $\{\mathcal{Y}^\epsilon\}_{\epsilon > 0}$ of the flow $\phi_t^Y$ if and only if $V'(h(\beta_n)) = \inf_i V'(\beta_i')$; that is, if and only if the critical point $\beta_n$ of $X$, which has uniformly minimal energy under $V$, is mapped by $h$ to the global minimum of $V'$.
    \item On the stable manifold $W^s(\beta_i)$ of a critical element $\beta_i$ of $X$, the limiting measure $\nu$ coincides with the zero-noise limit $\mu$ of the family of small random perturbations $\{ \mathcal{X}^{\epsilon}\}_{\epsilon>0}$ since the pushforward $h_{\#}\mu$ preserves supports.
\end{enumerate}

\section{Generation, learning, and stable families of gradients}
\label{section:generic-arcs}
\Cref{section:Morse-Smale-gradient-assocaiative-memory} and \Cref{section:diffusion-zero-noise-limit} use the generic properties of associative memory implied by the Morse-Smale conditions to jointly characterize: (i) a generation-to-memory transition at vanishing noise levels, and (ii) the robustness of the critical point structure of these systems to stochastic and deterministic perturbation. This section uses the classification of bifurcations that occur when these conditions are violated to capture the learning and generation dynamics of associative memory and generative diffusion models. Let $(M,g)$ be a Riemann manifold and denote the space of smooth symmetric $(0,2)$-tensors on $M$ by $C^{\infty}(M,S^{2}(T^{*}M))$. The space of Riemannian metrics is an open subset of this space and is denoted by $\mathcal{R}(M)$ (\cite{tuschmann2015moduli}). 

Let $\theta = \{\theta_g, \theta_V, \theta_t\}$ parameterize a Riemannian metric $g_{\theta} \in \mathcal{R}(M)$, a real-valued function $V_{\theta} \in C^\infty(M,\mathbb{R})$, and an optional time parameter $t \in \mathbb{R}$. Let $\mathcal{L}: \theta \rightarrow \mathbb{R}$ be a smooth loss function. We consider four scenarios defined by families of vector fields $X_{\theta} := -\nabla_{g_{\theta}} V_{\theta}$ and diffusion processes with gradient drift like \eqref{equation:riemannian-small-random-perturbation-equation}:

\begin{example}[Optimizing time-independent systems]
\label{example:one-parameter-family-associative-memory-optimization}
    Gradient-descent in $\eta \in \mathbb{R}$ with respect to the parameters $\theta$,
    \begin{equation*}
        \dot{\theta_\eta} = -\nabla_{\theta_\eta} \mathcal{L}(\theta_\eta) \, ,
    \end{equation*}
    produces a one-parameter family of gradients $\{ X_{\theta_{\eta}} \}_{\eta\in \mathbb{R}}$, with $X_{\theta_{\eta}} := -\nabla_{g_{\theta_{\eta}}} V_{\theta_{\eta}}$, that vary smoothly with $\eta$. For $\eta_1 < \eta_2$, the family $\{ X_{\theta_{\eta}} \}_{\eta\in [\eta_1,\eta_2]}$ traces a curve in the space of smooth vector fields. For a fixed $\epsilon >0$, a one-parameter family $\{\mathcal{X}^{\epsilon}_{\eta}\}_{\eta \in \mathbb{R}}$ is also obtained with
    \begin{equation*}
        d{x_t^{\epsilon}} = -\nabla_{g_{\theta_{\eta}}}V^{\epsilon}_{{\theta_{\eta}}}(x_t^{\epsilon})\,dt + \epsilon\sigma_{\eta}(x_t^{\epsilon})\,dw_t \, .
    \end{equation*}
\end{example}  

\begin{example}[Generation with time-varying drift]
\label{example:one-parameter-family-generation}
    Let $\{g_t \}_{t\in\mathbb{R}}$ and $\{V_t \}_{t\in\mathbb{R}}$ vary smoothly with time. Define $X_t:=-\nabla_{g_t}V_t$. Solutions to the time-dependent equation,
    \begin{equation*}
        d{x_t^{\epsilon}} = -\nabla_{g_t}V^{\epsilon}_t(x_t^{\epsilon})\,dt + \epsilon\sigma_t(x_t^{\epsilon})\,dw_t \, ,
    \end{equation*}
     define a family $\{ \mathcal{X}^{\epsilon}_{t}\}_{t\in\mathbb{R}}$. At $\epsilon=0$, a one-parameter family of gradients is obtained.
\end{example}  

\begin{example}[Optimizing time-varying associative memory]
\label{example:two-parameter-family-associative-memory-optimization}
    When $\theta_g, \theta_V$ vary smoothly in time $t\in \mathbb{R}$ they produce a one-parameter family of gradients, $\{X_{\theta_t}\}_{t\in \mathbb{R}}$, where $X_{\theta_t}\ := -\nabla_{g_{_{\theta_t}}} V_{\theta_t}$. Gradient descent in $\eta$ gives a two-parameter family of gradients $\{ X_{\theta_{t_{\eta}}} \}_{t,\eta\in \mathbb{R}}$ varying smoothly with both the (gradient descent) time index $\eta$ and $t$. For $\eta_1 < \eta_2$ and $t_1 < t_2$ the family $\{ X_{\theta_{t_{\eta}}} \}_{(t,\eta) \in [t_1,t_2] \times [\eta_1,\eta_2]}$ sweeps out a two-parameter subset in $\chi^{\infty}(M)$.
\end{example}  

\begin{example}[Optimizing time-varying diffusion models]
\label{example:two-parameter-family-explicit}
    Building on \Cref{example:one-parameter-family-generation}, gradient descent on $\theta$ produces a two-parameter family indexed by the "inner" time $t\in \mathbb{R}$ and "outer" optimization time $n\in\mathbb{R}$. At a parameter value $\eta$, the dynamics are written,
    \begin{equation*}
        d{x_t^{\epsilon}} = -\nabla_{g_{\theta_{t_{\eta}}}}V^{\epsilon}_{\theta_{t_{\eta}}}(x_t^{\epsilon})\,dt + \epsilon\sigma_{\theta_{t_{\eta}}}(x_t^{\epsilon})\,dw_t \, .
    \end{equation*}        
     At $\epsilon = 0$, a two-parameter family of gradients is obtained, with $X_{\theta_{t_{{n}}}} := -\nabla_{g_{\theta_{t_{\eta}}}}V_{\theta_{t_{\eta}}}$. 
\end{example}  

These examples indicate that learning processes of associative memory models are one-parameter families of gradients, and the generation process of diffusions with time-varying drift are one-parameter families. The learning processes of associative memory models and diffusions with time-varying drift are two-parameter families. One arrives at the following question: \textit{Are one- and two-parameter families of gradients stable?}

This section describes that stable families are abundant -- one- and two-parameter families of gradients are generically stable, and their instability is generically characterized by small sets of bifurcations. A statement is made for diffusion models but requires a comparison of their trajectories to deterministic ones at vanishing noise levels.


\subsection{Large deviations and stochastic flows}
\label{subsection:paths-large-deviations}
It is convenient to view solutions to equations like \eqref{equation:riemannian-small-random-perturbation-equation} as \textit{stochastic flows}, denoted by $\Phi^{X,\epsilon}$ for an $\epsilon >0$; see \cite{kunita1986lectures}. Let $(M,g)$ be a Riemannian manifold and $\Omega$ the space of continuous paths $C^0([0,\infty),M)$ equipped with the Wiener measure associated with Brownian motion on $M$. If $0 \leq s \leq t \leq T$ with $T>0$ and $\omega \in \Omega$, then $\Phi^{X,\epsilon}_{s,t}(\cdot, \omega)$ is a measurable map from $M$ to itself. Consequently, $\Phi^{X,\epsilon}$ is a stochastic flow of measurable maps, here, more strictly, of homeomorphisms or diffeomorphisms. 

A \textit{path} $\{x_t \}_{t=0}^T$ is a realization of the stochastic flow $\Phi^{X,\epsilon}(x_0,\omega)$ generated by $\mathcal{X}^{\epsilon}$ for an initial condition $x_0\in M$ and $\omega \in \Omega$. That a family $\{\mathcal{X}^{\epsilon}\}_{\epsilon>0}$ is a small random perturbation implies that the probability of observing a particular path should concentrate as $\epsilon \rightarrow 0$. Original work on this topic by \cite{ventsel1970small} is formulated through an action principle, where an action functional $J[x]$ is defined by
\begin{equation*}
    J[x] = \frac{1}{2}\int_0^T || \dot{x}_t - b({x}_t) ||^2 \, dt \, ,
\end{equation*}
where $|| \cdot ||$ is the Riemannian norm, and the bracket notation indicates that $J[x]$ is a functional over all of $x_t$; see also \cite{touchette2009large}, Chapter 6. 

The large deviation principle states that the probability $P_{\epsilon}[x]$ of observing a path $\{x_t \}_{t=0}^T$ that deviates from an enclosing $\delta$-tube is given by $P_{\epsilon}[x] \asymp e^{-a_{\epsilon}J[x]}$, where the notation $\asymp$ indicates that the dominant part of $P[x]$ is decaying exponentially; $a_{\epsilon}$ determines the rate at which $P[x]$ decays and is such that $a_{\epsilon} \rightarrow \infty$ as $\epsilon \rightarrow 0$. For any number $\delta > 0$,
\begin{equation*}
    P\left( \sup_{0\leq t \leq T} | x^{\epsilon}_t - x_t| < \delta \right ) \asymp e^{-a_{\epsilon}J[x]} \,.
\end{equation*}
Importantly, $J[x]$ has a unique zero when the path is equal to the trajectory solving the deterministic equations, denoted $x^*_t$ for clarity. Since $P_{\epsilon}[x]$ is dominated by $e^{-a_{\epsilon}J[x]}$, which decays exponentially as $\epsilon \rightarrow 0$, the path converges in probability to the deterministic one in this limit. That is, for any $\delta>0$, $\lim_{\epsilon \to 0} P\left(|| x^{\epsilon}_t - x^*_t||_{\infty}\geq\delta\right) = 0$ in the supremum norm $||\cdot||_{\infty}$ on the space of continuous paths $C^0([0,T],M)$. 

This pathwise convergence is preserved by homeomorphisms that conjugate two limiting deterministic flows; see \Cref{appendix:strict-isomorphism-weak-convergence} for a proof:

\begin{proposition}[Preservation of pathwise convergence]
\label{proposition:pathwise-convergence-preservation}
    Let $(M,g)$ be a closed Riemannian manifold; let $\{x_t\}_{t=0}^{T}$ and a path of a small random perturbation $\mathcal{X}^{\epsilon}$ of $\phi_t^X$ such that $x^{\epsilon}_t \xrightarrow[]{P} x^*_t$ in probability as $\epsilon \rightarrow 0$. Let $h:M \rightarrow M$ be a homeomorphism that conjugates $\phi_t^X$ and $\phi_t^Y$ for $Y$ a vector field on $M$. Then $h(x^{\epsilon}_t) \xrightarrow[]{P} y^*_t$ in probability as $\epsilon \rightarrow 0$.
\end{proposition}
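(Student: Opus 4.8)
The plan is to reduce everything to the uniform continuity of $h$, which is available precisely because $M$ is closed, hence compact. First I would pin down the limiting path on the $Y$-side. Since $h$ is a \emph{conjugacy}, $h\circ\phi^X_t=\phi^Y_t\circ h$ for all $t$, so if $x^*_t=\phi^X_t(x_0)$ is the deterministic trajectory through $x_0$, then $h(x^*_t)=\phi^Y_t(h(x_0))=:y^*_t$ is exactly the deterministic trajectory of $Y$ through $h(x_0)$; this is the $y^*_t$ in the statement. Since $h$ is continuous and, for a.e.\ $\omega$, $t\mapsto x^\epsilon_t(\omega)$ is continuous, the composition $t\mapsto h(x^\epsilon_t(\omega))$ is again a continuous path in $M$, so $\sup_{0\le t\le T}d(h(x^\epsilon_t),y^*_t)$ is a well-defined random variable (its value on $[0,T]$ equals its supremum over a countable dense set of times, which is measurable in $\omega$).

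Next I would invoke uniform continuity of $h$ on the compact manifold $M$: given $\delta>0$ there is $\rho=\rho(\delta)>0$ with $d(p,q)<\rho\Rightarrow d(h(p),h(q))<\delta$ for all $p,q\in M$. Applying this pointwise in $t$ with $p=x^\epsilon_t$ and $q=x^*_t$, and using $h(x^*_t)=y^*_t$, gives the inclusion of events
\[
\Big\{\sup_{0\le t\le T}d(x^\epsilon_t,x^*_t)<\rho\Big\}\subseteq\Big\{\sup_{0\le t\le T}d(h(x^\epsilon_t),y^*_t)<\delta\Big\},
\]
and hence
\[
P\Big(\sup_{0\le t\le T}d(h(x^\epsilon_t),y^*_t)\ge\delta\Big)\le P\Big(\sup_{0\le t\le T}d(x^\epsilon_t,x^*_t)\ge\rho\Big).
\]
By hypothesis $x^\epsilon_t\xrightarrow{P}x^*_t$ in the supremum norm, so for the fixed $\rho>0$ the right-hand side tends to $0$ as $\epsilon\to0$; therefore so does the left-hand side. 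As $\delta>0$ was arbitrary, $h(x^\epsilon_t)\xrightarrow{P}y^*_t$ in probability in the supremum norm on $C^0([0,T],M)$, which is the claim.

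I do not expect a genuine obstacle here: the large-deviation machinery enters only through its \emph{output}, the assumed pathwise convergence $x^\epsilon_t\xrightarrow{P}x^*_t$, and the only structural input is compactness $\Rightarrow$ uniform continuity of $h$. The one point worth stating carefully is that $h$ is merely a homeomorphism, not bi-Lipschitz; this is enough because we only need to transport a convergence-in-probability statement, not control moduli of continuity quantitatively. A second minor point is the choice of metric: on a closed Riemannian manifold distances are the Riemannian distance $d_g$, while on a compact domain $D\subset\mathbb{R}^n$ the Euclidean norm (as in the large-deviation estimate above) is used; the argument is insensitive to the choice since all such metrics are uniformly equivalent on the compact set $M$, so uniform continuity of $h$ with respect to one is uniform continuity with respect to the other.
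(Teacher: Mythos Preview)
Your proposal is correct and follows essentially the same route as the paper: invoke Heine--Cantor to get uniform continuity of $h$ on the compact $M$, then use the contrapositive to bound $P\big(\sup_t d(h(x^\epsilon_t),y^*_t)\ge\delta\big)$ by $P\big(\sup_t d(x^\epsilon_t,x^*_t)\ge\rho\big)$ and let the hypothesis finish. Your version is in fact more careful than the paper's --- you spell out why $y^*_t=h(x^*_t)$ via the conjugacy, address measurability of the supremum, and comment on metric equivalence --- but the underlying argument is identical.
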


The image of a diffusion process under a homeomorphism may not be a diffusion process, but it remains a continuous-time Markov process (\cite{burke1958markovian}, Corollary 3). Let $\overline{p}^{\epsilon}_t(\cdot \, | \, x)$ be the transition density of a transformed process satisfying the Chapman-Kolmogorov equation. It can be shown from the definition of a pushforward measure and homeomorphism that, if $\mu^{\epsilon}$ is invariant for $\mathcal{X}^{\epsilon}$ on $(M,g)$, then $h_{\#}\mu^{\epsilon}$ is invariant for the Markov process with transition density $\overline{p}^{\epsilon}_t(\cdot \, | \, x)$; see \Cref{proposition:triangle-commutativity-zero-noise} for properties of $h_{\#}\mu^{\epsilon}$.

\Cref{proposition:pathwise-convergence-preservation} suggests an equivalence of stochastic flows that is "pinned down" by their behavior at diminishing noise levels. A modest relaxation of pathwise conjugacy is to require that two stochastic flows converge in probability modulo orientation-preserving homeomorphisms without a conserved time parametrization. This gives a notion of equivalence at vanishing noise levels comparable to topological equivalence of deterministic flows:

\begin{definition}[Equivalent in the zero-noise limit]
    Let $(M,g)$ be a Riemannian manifold. Two stochastic flows \( \Phi_t^{X, \epsilon} \) and \( \Phi_t^{Y, \epsilon} \) are \textbf{equivalent in the zero-noise limit} if:
\begin{enumerate}[itemsep=0pt]
    \item (Convergence to deterministic flows) The stochastic flow \( \Phi_t^{X, \epsilon} \) converges in probability to \( \phi_t^X \) as \( \epsilon \to 0 \), and \( \{\Phi_t^{Y, \epsilon}\} \) converges in probability to \( \phi_t^Y \) as \( \epsilon \to 0 \). That is, for any \( t \), \( \delta > 0 \), and $x\in M$, ${\lim_{\epsilon \to 0} P \left( \|\Phi_t^{X, \epsilon}(x, \cdot) - \phi_t^X(x)\| \geq \delta \right) = 0} $, and similarly for \( \Phi_t^{Y, \epsilon} \); and
    
    \item (Convergence modulo homeomorphism) There exists a homeomorphism $h: M \rightarrow M$ preserving orientation and times $t_1,t_2 > 0 $ so that, for any $T > 0$ and $\delta > 0$, ${\lim_{\epsilon \to 0} P \left(  \| h(\Phi_{t_1}^{X, \epsilon}(x,\cdot)) - \Phi_{t_2}^{Y, \epsilon}(h(x),\cdot) \| \geq \delta \right) = 0 }$.
\end{enumerate}
\end{definition}
The definition is non-vacuous. It is well defined for stochastic flows that converge to deterministic ones at vanishing noise levels, satisfying (1). Topological equivalence also implies equivalence in the zero-noise limit; see \Cref{appendix:strict-isomorphism-weak-convergence} for a proof:

\begin{proposition}
\label{proposition:topological-eq-implies-zero-noise-equivalent}
    Let $\phi_{t_1}^{X},\phi_{t_2}^{Y} \in \text{Diff}^r(M), r\geq 1$ be topologically equivalent flows on $(M,g)$ under a homeomorphism $h: M \rightarrow M$. If $\mathcal{X}^{\epsilon}$ and $\mathcal{Y}^{\epsilon}$ are small random perturbations, then the stochastic flows $\Phi_{t_1}^{X,\epsilon}$ and $\Phi_{t_2}^{Y,\epsilon}$ are equivalent in the zero-noise limit.
\end{proposition}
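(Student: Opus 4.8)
The plan is to verify directly the two conditions in the definition of \emph{equivalent in the zero-noise limit}, using the large-deviation estimates recalled in \cref{subsection:paths-large-deviations}, the uniform continuity of $h$ on the compact manifold $M$, and the identity $h\big(\phi_{t_1}^X(p)\big)=\phi_{t_2}^Y\big(h(p)\big)$ that topological equivalence supplies for fixed times $t_1,t_2>0$.

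First I would dispose of condition (1). Because $\mathcal{X}^{\epsilon}$ is a small random perturbation of $\phi_t^X$, the action functional $J[x]=\tfrac12\int_0^T\|\dot x_t-b(x_t)\|^2\,dt$ has its unique zero at the deterministic trajectory through the chosen initial point, so the large-deviation principle recalled in \cref{subsection:paths-large-deviations} gives $P\big(\sup_{0\le s\le T}\|\Phi_s^{X,\epsilon}(x,\cdot)-\phi_s^X(x)\|\ge\delta\big)\to 0$ as $\epsilon\to 0$ for every $x\in M$, $T>0$, $\delta>0$; in particular $\Phi_t^{X,\epsilon}\to\phi_t^X$ in probability, and the identical argument applied to $\mathcal{Y}^{\epsilon}$ gives $\Phi_t^{Y,\epsilon}\to\phi_t^Y$ in probability. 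This is condition (1), and it also furnishes the two pathwise-convergence inputs needed below.

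Next I would establish condition (2). Fix $x\in M$ and $\delta>0$, take the times $t_1,t_2$ from the topological equivalence, and insert the point $h(\phi_{t_1}^X(x))=\phi_{t_2}^Y(h(x))$:
\[
\big\|h(\Phi_{t_1}^{X,\epsilon}(x,\cdot))-\Phi_{t_2}^{Y,\epsilon}(h(x),\cdot)\big\|
\le
\big\|h(\Phi_{t_1}^{X,\epsilon}(x,\cdot))-h(\phi_{t_1}^X(x))\big\|
+\big\|\phi_{t_2}^Y(h(x))-\Phi_{t_2}^{Y,\epsilon}(h(x),\cdot)\big\|.
\]
The second summand tends to $0$ in probability by condition (1) at the base point $h(x)$. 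For the first, I would use that a homeomorphism of a compact manifold is uniformly continuous: picking $\eta>0$ with $d(p,q)<\eta\Rightarrow d(h(p),h(q))<\delta$ turns $\Phi_{t_1}^{X,\epsilon}(x,\cdot)\to\phi_{t_1}^X(x)$ in probability into $h(\Phi_{t_1}^{X,\epsilon}(x,\cdot))\to h(\phi_{t_1}^X(x))$ in probability. Adding the two estimates gives condition (2). In effect this is \Cref{proposition:pathwise-convergence-preservation} with conjugacy relaxed to topological equivalence, the absence of a shared time parametrization being absorbed into the fixed shifts $t_1,t_2$.

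The main obstacle is condition (1), not condition (2): \Cref{definition:small-random-perturbation} only constrains the one-step transition kernels $p^{\epsilon}(\cdot\,|\,x)$, whereas condition (1) is a statement about sample paths on $[0,T]$, so the substantive content is the passage from kernel convergence to pathwise convergence in probability, which is precisely what the Freidlin--Wentzell framework delivers under the regularity and growth hypotheses on $b$ and $\sigma$ assumed earlier (these make the rate function control excursions of the diffusion off the deterministic flow tube). Once that is in hand, condition (2) is a routine triangle-inequality argument. If uniformity in the base point is wanted in condition (2), it follows from the uniform-in-$x$ convergence $p^{\epsilon}(\cdot\,|\,x)\to\delta_{\phi(x)}$ together with the uniform continuity of $h$.
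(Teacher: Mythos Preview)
Your proof is correct and follows essentially the same route as the paper: condition (1) is obtained from the Freidlin--Wentzell pathwise convergence, and condition (2) is handled by inserting the deterministic point $h(\phi_{t_1}^X(x))=\phi_{t_2}^Y(h(x))$, applying the triangle inequality, and using uniform continuity of $h$ on the compact manifold---exactly the content of the paper's \Cref{proposition:pathwise-convergence-preservation} combined with the union bound. Your remark that the passage from kernel convergence in \Cref{definition:small-random-perturbation} to pathwise convergence is the substantive step is a point the paper treats more tersely.
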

\noindent
The content of \Cref{proposition:topological-eq-implies-zero-noise-equivalent} is summarized by the following commutative diagram:
\begin{equation*}
    \begin{tikzcd}[column sep=6.5em, row sep=3.em]
    \Phi^{X,\epsilon}_{t_1} \arrow[r, "P", "\epsilon \rightarrow 0" swap] \arrow[d, "h", "P" swap]
    & \phi^{X}_{t_1} \arrow[d, "h"]  \\
    \Phi^{Y,\epsilon}_{t_2}  \arrow[r, "P", "\epsilon \rightarrow 0" swap]
    & \phi^{Y}_{t_2}
    \end{tikzcd} 
\end{equation*}
Horizontal arrows denote convergence in probability as $\epsilon \rightarrow 0$. The downward arrow between stochastic flows indicates convergence in probability under the homeomorphism $h$, and the downwards arrow between deterministic flows $\phi_t^X$ and $\phi_t^Y$ indicates topological equivalence. 

Conversely, if two stochastic flows are equivalent in the zero-noise limit, then the probability that the deterministic (limiting) flows are not diminishes. Consequently, the probability that structural stability does not hold vanishes. Stability and bifurcations in the vanishing noise limit are understood in this manner in the rest of the document.


\subsection{Stable families and bifurcations}
\label{subsubsection:stable-families-of-gradients}
Recall that a $k$-parameter family of gradients on a Riemannian manifold $(M,g)$ is a family $\{ X_{\eta_1,...,\eta_k} \}$, $\eta \in N^k$, where $N$ is a compact $k-$manifold, for which there exists a family of functions $\{f_{\eta_1,...,\eta_k} \}$ and family of metrics $\{ g_{{\eta_1,...,\eta_k}}\}$ such that $X_{\eta_1,...,\eta_k} = -\nabla_{g_{\eta_1,...,\eta_k}} f_{\eta_1,...,\eta_k}$. Pairs $\left ( \{f_{\eta}\}, \{g_{\eta} \}\right )$ are given the compact-open $C^{\infty}$ topology. 

Let $\chi^g_k(M)$ be the set of $k$-parameter families on $M$ and $\pi: M \times N \rightarrow N$ the canonical projection map. Two $k$-parameter families, $\{ X_{\eta} \},\{ Y_{\eta} \} \in \chi^g_k(M)$, are called \textit{equivalent} if there exist homeomorphisms, $H: M \times D \rightarrow M \times D$ and $\psi: N \rightarrow N$, so that $\pi H = \psi \pi$, and, for each $\eta \in N$, $h_{\eta}$ is an equivalence between $\{ X_{\eta} \}$ and $\{ Y_{\psi(\eta)} \}$, where $H(x,\eta) = \left( h_{\eta}(x), \psi(\eta)\right)$. A $k$-parameter family of gradients $\{ X_{\eta} \}$ is called \textit{stable} if there exists a neighborhood $\{ X_{\eta} \} \ni \mathcal{U}$ in which all elements are equivalent to $\{ X_{\eta} \}$. A value $\eta = (d_1,...,d_k)$ is called a \textbf{\textit{regular parameter value}} for $\{X_{\eta}\}$ if $X_{\eta}$ is (structurally) stable. Otherwise, it is called a \textbf{\textit{bifurcation value}}, and is denoted by $\bar{\eta}$.


\subsubsection{Comparison to universal unfoldings}
\label{subsubsection:universal-unfoldings}
It is worth comparing bifurcations of families of gradients to Thom's work on universal unfoldings. Let $X = -\nabla V$ be a a gradient system with a degenerate singularity $p \in M$, where the Hessian determinant $| \, \partial^2 V / \partial x_i \partial x_j \, |$ vanishes. Under an arbitrary perturbation $\delta V$, the local behavior (formally, the \textit{germ}) of $V + \delta V$ has either infinite or finite topological types. In the finite case, there exists a $k-$parameter \textit{universal unfolding} of the potential,
\begin{equation*}
    V = V(p) + u_1g_1(p) + u_2g_2(p) + ... + u_kg_k(p) \, ,
\end{equation*}
from which any perturbation can be recovered up to topological equivalence (\cite{thom1969topological}, \S 1.D). These unfoldings, describing local behavior of parameterized potentials, form the basis of Thom’s approach, which underpins catastrophe theory (\cite{zeeman2006classification}), and are relevant to describe symmetry breaking phenomena observed in associative memory and diffusion models.

Contrastingly, loss of stability for families of gradients can occur due to local or global criteria. A degenerate singularity indicates a loss of stability of a gradient system, but it is local. Additional global bifurcations can occur, even for one-parameter families. Generically, these are due to non-transverse intersections of certain invariant manifolds, which are discussed in the following sections.


\subsubsection{One-parameter families of gradients}
\label{subsubsection:one-parameter-families-gradients}

\noindent
Only three bifurcations are needed to build a Morse-Smale system from the trivial one (\cite{smale1960morse,rand2021geometry}). Generically, only two arise at bifurcation values for one-parameter families of gradients, described by the following theorem: the saddle-node bifurcation, a local bifurcation, and the heteroclinic flip, a global bifurcation.
\begin{theorem}[\cite{palis1983stability}]
    The following properties are generic, and hold on an open and dense subset, for one-parameter families of gradients:
    \begin{enumerate}[itemsep=0pt]
        \item The set of regular parameter values is open and dense in $\mathbb{R}$;
        \item At a bifurcation value, the gradient $X_{\bar{\eta}}$ has either exactly one (nonhyperbolic) saddle-node singularity, while all other singularities are hyperbolic and all stable and unstable manifolds intersect transversally, or it has exactly one orbit along which a stable and an unstable manifold intersect nontransversally, while all singularities are hyperbolic.
    \end{enumerate}
\end{theorem}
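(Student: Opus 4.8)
The plan is to prove (1) and (2) by treating separately the \emph{openness} of the regular set, which is soft and follows from structural stability, and its \emph{density}, which requires a parametric transversality argument classifying the codimension-one degeneracies.

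\emph{Openness and reduction.} Suppose $\eta_0$ is regular, so $X_{\eta_0}$ is structurally stable; since a gradient field is structurally stable if and only if it is Morse--Smale (the theorem of \cite{smale1970structural} cited after \Cref{definition:structural-stability}), there is a $C^1$ neighborhood $\mathcal U \subset \chi^r(M)$ of $X_{\eta_0}$ all of whose elements are topologically equivalent to it. As $\eta \mapsto X_\eta$ is continuous into $\chi^r(M)$ with the compact-open $C^r$ topology, and hence into the $C^1$ topology, we have $X_\eta \in \mathcal U$ for $\eta$ near $\eta_0$, so these $\eta$ are regular. Thus the set of regular values is open; it remains to show the bifurcation set $B \subset \mathbb R$ has empty interior, i.e.\ regular values are dense among one-parameter families.

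\emph{Density, local part (saddle-node).} First control the critical points. In $C^r(M,\mathbb R)$ the non-Morse functions form a closed set that is a countable union of submanifolds of codimension $\ge 1$, stratified by the Thom--Boardman type of the worst degenerate critical point; the codimension-one stratum consists of functions with a single $A_2$ (fold) point and no other degeneracy. Applying Thom's jet transversality theorem to $j^2 V_\eta$, an arbitrarily small perturbation of $(\{V_\eta\},\{g_\eta\})$ makes the family meet only this top stratum, transversally; being a one-parameter family this occurs at an \emph{isolated} set of values, and at each such $\bar\eta$ the field $X_{\bar\eta} = -\nabla_{g_{\bar\eta}} V_{\bar\eta}$ has exactly one non-hyperbolic singularity, which is a genuine saddle-node (one zero Hessian eigenvalue, non-vanishing cubic term, the unfolding crossing $\bar\eta$ transversally), all other singularities hyperbolic, and — on a further open-dense refinement — all stable and unstable manifolds transverse.

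\emph{Density, global part (heteroclinic flip).} On the residual set where every singularity is hyperbolic, the stable and unstable manifolds are embedded discs depending smoothly on $\eta$ (\Cref{theorem:stable-manifold-theorem-morse}). For each ordered pair $(\beta_i,\beta_j)$ with $\dim W^u(\beta_i) + \dim W^s(\beta_j) = n+1$ — the first case where transversality can fail along a $1$-parameter arc — work in a compact fundamental domain for the flow between the two saddles and apply the parametric transversality (Abraham density) theorem to the evaluation map $(\eta, p)\mapsto$ the $W^u$-point over $p$; a generic perturbation makes $W^u(\beta_i)$ and $W^s(\beta_j)$ quasi-transversal, so for all but isolated $\bar\eta$ they are transverse, and at each exceptional value they meet along a single orbit with quadratic (codimension-one) contact, all other pairs staying transverse. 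A codimension count shows no two such degeneracies, nor a degeneracy together with a saddle-node, coexist generically at one parameter value, and none persists on an interval. Assembling: $B$ is the union of these isolated saddle-node and heteroclinic-flip values, hence closed with empty interior, so the regular set is open and dense and each $\bar\eta\in B$ realizes exactly one of the two pictures.

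I expect the global step to be the crux. Unlike the local step, where the relevant stratum lives in an explicit jet space, the invariant manifolds $W^u(\beta_i)$, $W^s(\beta_j)$ are only implicitly defined (via the graph transform / stable manifold construction) as functions of the vector field, so one must set up the correct Banach manifold of one-parameter families and verify that the map recording the failure of transversality is itself transverse to the tangency stratum — controlling the \emph{joint} $\eta$-dependence of the two manifolds, and using the inclination ($\lambda$-)lemma to propagate the linearized pictures near the two saddles out to their global intersection. This is precisely the technical heart of \cite{palis1983stability}, and is where the bulk of the work lies.
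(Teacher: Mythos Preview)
The paper does not prove this theorem; it is quoted from \cite{palis1983stability} as a known result and used as a black box to classify the codimension-one bifurcations that can occur during learning. There is therefore no ``paper's own proof'' to compare your proposal against.

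That said, your sketch is a faithful outline of the strategy in the original reference: openness of the regular set via the Morse--Smale structural stability theorem, density via a local (jet-space) transversality argument for the saddle-node stratum and a global parametric transversality argument for the quasi-transverse heteroclinic tangency, followed by a codimension count to rule out simultaneous degeneracies. You have also correctly identified where the real work lies --- setting up the Banach-manifold framework so that the map recording failure of transversality between $W^u(\beta_i)$ and $W^s(\beta_j)$ is itself a submersion onto the tangency stratum, using the $\lambda$-lemma to control the global behavior. Your proposal is a sketch rather than a proof (the phrases ``on a further open-dense refinement'' and ``a codimension count shows'' hide nontrivial arguments), but as a roadmap it matches what Palis actually does.
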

\noindent
A metric rescales and reorients space to align a potential gradient to a flow, but the potential determines the non-degeneracy of its critical elements. Transversal intersections and orbits of tangency are determined by both the potential and the metric. 

\paragraph{Saddle-node bifurcation.}
In the supercritical case, a saddle-node bifurcation results in the appearance of a repelling or attracting critical element and an index 1 saddle as a parameter is varied. In the subcritical case, an attracting or repelling element is destroyed along with the index 1 saddle. Let $\overline{\eta}$ be a bifurcation value for a one-parameter family of gradients $\{X_{\eta}\}$ and $s \in M$ a nonhyperbolic singularity. For regular parameter values $\eta$ near $\overline{\eta}$, there exists a one-dimensional center manifold $W_{\eta}^c(s)$ dependent on $\eta$ (i.e., the zero eigenspace of $DX_s$)\footnote{See, e.g., \cite{guckenheimer2013nonlinear}, Chapter 3 on center manifolds.}. Restricted to $W_{\eta}^c(s)$, $X_{\eta}$ takes the form:
\begin{equation*}
    X_{\eta}(x) = \left( ax^2 - b(\eta - \overline{\eta}) \right) \frac{\partial}{\partial x} + \mathcal{O}\left( |x|^3 + |x \cdot (\eta - \overline{\eta})| + |\eta-\overline{\eta}|^2 \right) \, ,
\end{equation*}
with $a\neq 0$. If $b \neq 0$ then the saddle-node unfolds generically (compare to \Cref{subsubsection:universal-unfoldings}). Restricted to a center manifold passing through $s\in M$, a saddle-node has the normal form ($\eta=\overline{\eta}$) ${X_{\overline{\eta}}(x) = ax^2 \frac{\partial}{\partial x} + \mathcal{O}\left( |x|^3 \right)}$.

\begin{example}[Saddle-node bifurcation, \colorAutoref{fig:figure-intuition-saddle-node-bistable}]
\label{example:saddle-node}
Consider a one-parameter family of gradients $\{ X_{\eta} \}_{\eta\in [-1,1]}$, where $X_{\eta} := -\nabla_{g} V_{\eta}$ is a smoothly varying family of perturbations of an adapted dual well model from \Cref{example:bistable-memories}, with $g_{ij}=\delta_{ij}$ fixed:
\begin{equation*}
\label{equation:example-saddle-node-associative-memory-dual-well}
    {\dot{v^i}} = - \sum_{j=1}^{n} g^{ij} \frac{\partial V(v, \eta)}{\partial v^j} \hspace{0.5em} 
    \mathrm{with} \hspace{0.5em}  V(v, \eta) = \dfrac{1}{6} v_{1}^{4} + \dfrac{1}{6} v_{2}^{4} - \dfrac{1}{2}v_{2}v_{1}^{2} - \dfrac{1}{2}v_{2} + \sum_{i=3}^{n}v_{i}^{2} + \eta\left(\dfrac{6}{10}v_1\right) \, .
\end{equation*}
As $\eta$ varies from $\eta=-1$ to $\eta=0$, a supercritical fold bifurcation produces a new saddle point and attractor. From $\eta=0$ to $\eta=1$ a subcritical bifurcation eliminates the saddle point and the opposite attractor. The disappearance and appearance of attractors is reflected in the stationary measures of small random perturbations $\mathcal{X}^{\epsilon}_{\theta_{\eta}}$ at a noise level of $\epsilon=1$. 
\end{example}

\begin{figure}[t!]
        \centering
        \includegraphics[width=1\textwidth]{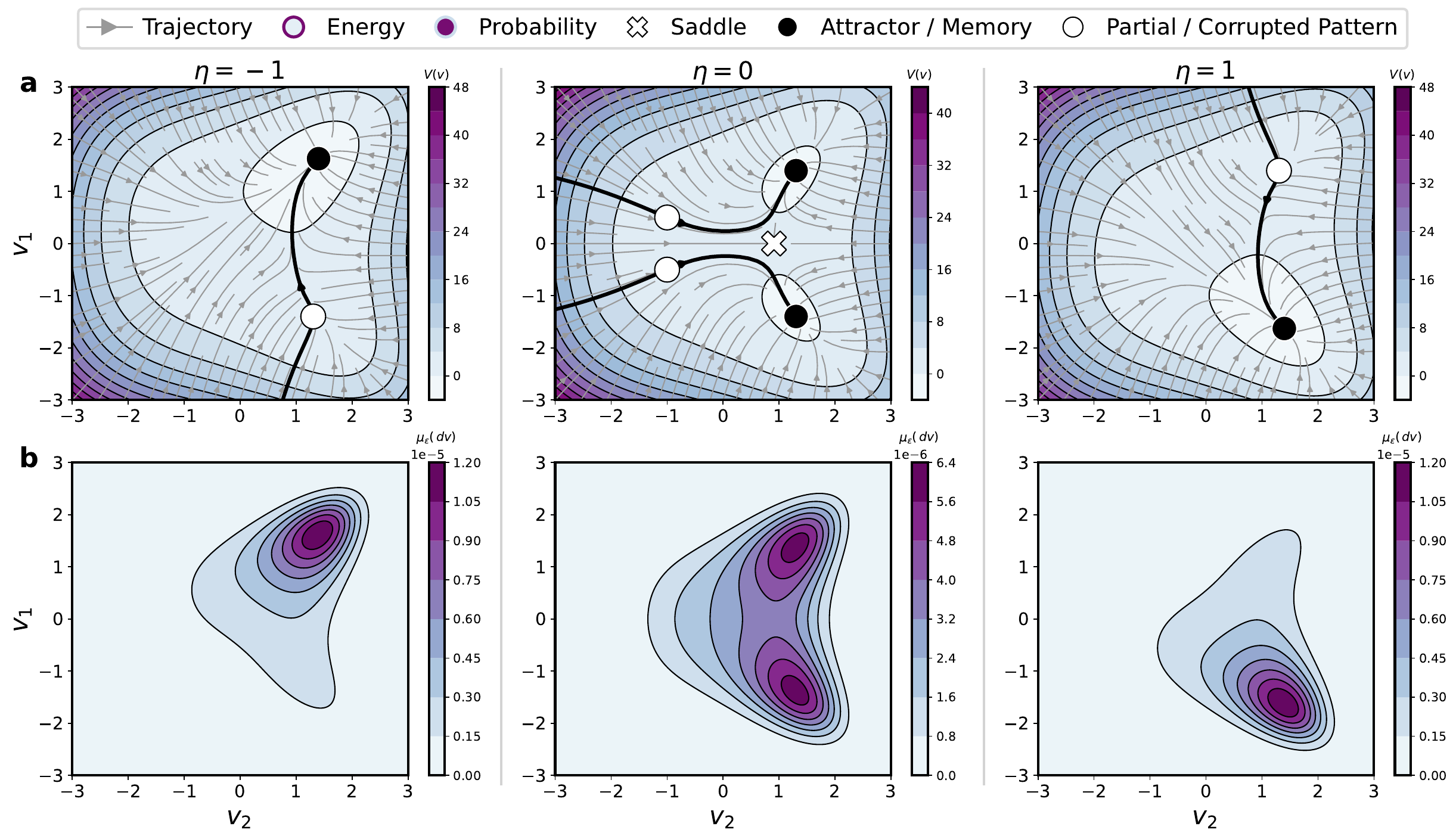}
        \caption{\textbf{Saddle-node (fold) bifurcation.} {\textbf{(a)} Trajectories (grey) representing solutions to the one-parameter family of gradients from \Cref{example:saddle-node} (left to right) overlayed on their respective energy surfaces. As the parameter value $\eta \in [-1,1]$ changes from $\eta=-1$ to $\eta=0$ (left to middle), an attractor and saddle are born indicating a supercritical fold bifurcation. From $\eta=0$ to $\eta=1$, the saddle and opposite attractor are destroyed, corresponding to the subcritical case. \textbf{(b)} Invariant measures $\mu^{\epsilon}(dv)$ of small random perturbations $\mathcal{X}^{\epsilon}$ of each gradient flow from $\eta=-1$ to $\eta=1$ with $\epsilon=1$. As $\epsilon \rightarrow 0$, the invariant measure will concentrate on the attractors.}}
        \label{fig:figure-intuition-saddle-node-bistable}
\end{figure}

\paragraph{Heteroclinic flip bifurcation.}
Let $p,q \in M$ be hyperbolic singularities and $\gamma \subset W^u(p) \, \cap \, W^s(q)$ an orbit of tangency between their unstable and stable manifolds. Generically, $\gamma$ is \textit{quasi-transverse}\footnote{See \cite{palis1983stability}, Chapter 2a or \cite{newhouse1973bifurcations}, Pages 306-307 about quasi-transverse submanifolds.}. In this case, $\text{dim} \, T_r \,W^u(p) + \text{dim} \, T_r \, W^s(q) = \text{dim} \, M - 1$ for $r \in \gamma$. Additionally, in local coordinates $(x_1,..., x_n)$ in a neighborhood of a point $r\in \gamma$,
\vspace{-0.5em}
\begin{align*}
    X_{\overline{\eta}} &= \frac{\partial}{\partial x_1} \, ,\\
    W^{u}(p) &= (x_1,...,x_{u}, 0,...,0) \, ,\text{ and}\\
    W^{s}(q) &= (x_1,...,x_k,0,...,0,x_{u_+1},...,x_{n-1}, f(x_2,...,x_k)) \, ,
\end{align*}
where $u = \text{dim}\,W^u(p)$, $k = \text{dim} \, (T_r W^u(p) \, \cap \, T_r W^s(q))$ and $f$ is a Morse function that controls the behavior of the intersection (\cite{dias1989bifurcations}, Section 1b). The Stable Manifold Theorem for a Morse function implies that $W^u(p)$ and $W^s(q)$ vary smoothly with respect to $\eta$ nearby $\overline{\eta}$. Consequently, for parameter values $\eta$ close to $\overline{\eta}$, $f$ can also be written as a smoothly varying function $f_\eta$ dependent on $\eta$ (\cite{palis1983stability}, Chapter 1, Section 2a). The orbit $\gamma$ unfolds generically if $\frac{\partial f_{\eta}}{\partial \eta}(r) |_{\eta = \overline{\eta}} \neq 0$.

A \textit{heteroclinic connection} between two critical elements $p, q$ is an orbit that lies on the stable manifold $W^s(q)$ and the unstable manifold $W^u(p)$. A heteroclinic flip occurs when these connections change, making it a global bifurcation. If a Morse-Smale gradient system undergoes this bifurcation, the unstable manifold $W^u(s)$ of a source saddle $s \in M$ changes between separate attractors by passing through a scenario where it intersects the stable manifold $W^s(\Tilde{s})$ of another saddle $\Tilde{s} \in M$. This intermediate state generically corresponds to a quasi-transverse orbit of tangency (\cite{palis1983stability}, Chapter 2). 

\begin{figure}[t!]
        \centering
        \includegraphics[width=1\textwidth]{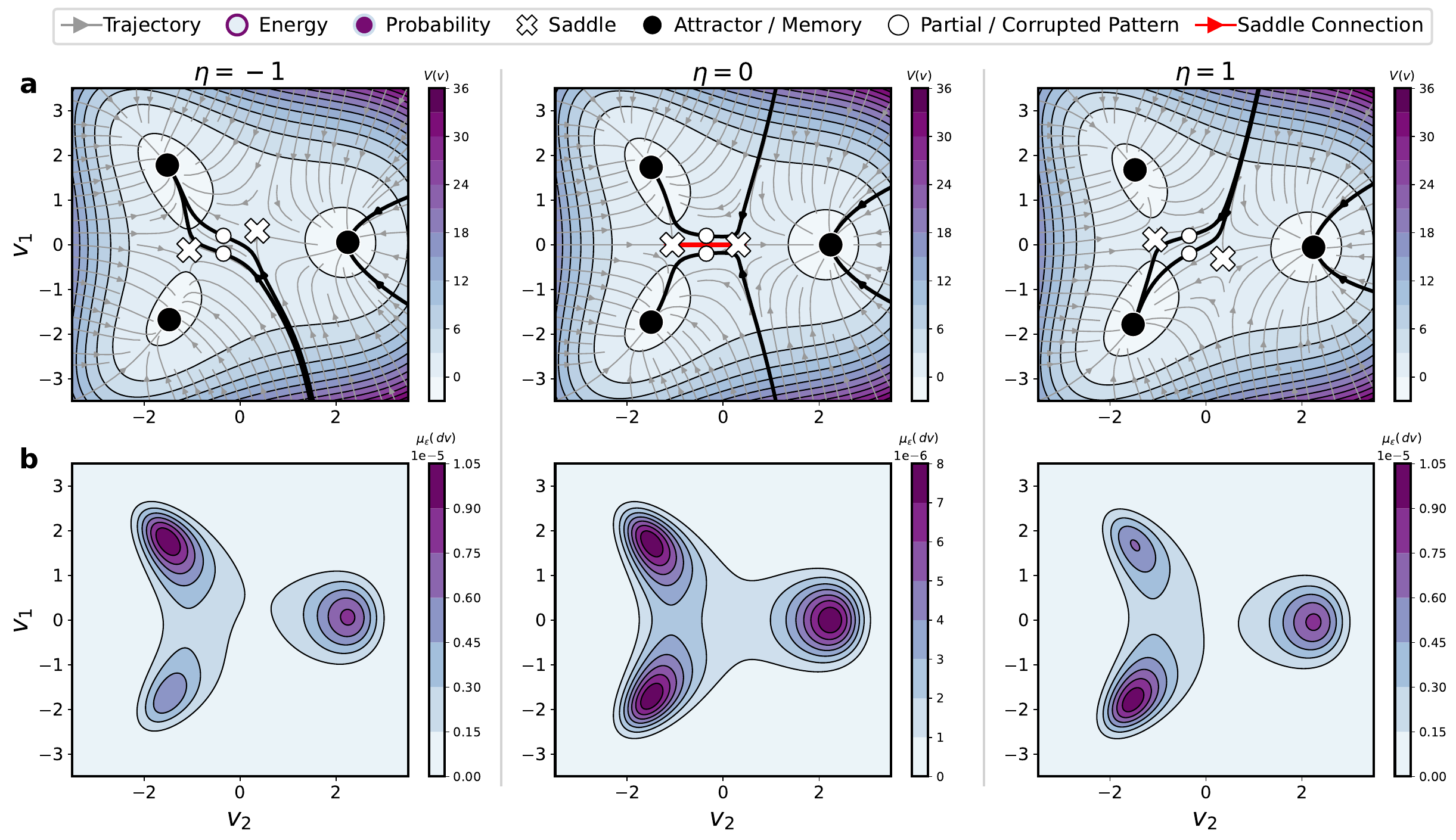}
        \caption{\textbf{Heteroclinic flip bifurcation.} {\textbf{(a)} Trajectories (grey) of the one-parameter family of gradients in \Cref{example:heteroclinic-flip} (left to right). As the parameter value $\eta$ changes from $\eta=-1$ to $\eta=0$ (left to middle), the unstable manifold of a saddle (right-hand side) shifts from intersecting the stable manifold of the top attractor to intersecting the stable manifold of the other saddle, creating a saddle-saddle connection (red line). From $\eta=0$ to $\eta=1$, the saddle connection is destroyed and the unstable manifold of the right-hand side saddle intersects the stable manifold of the opposite attractor (bottom). The arc of gradient fields from $\eta=-1$ to $\eta=1$ encounters a heteroclinic flip bifurcation. \textbf{(b)} Invariant measures $\mu^{\epsilon}(dv)$ of small random perturbations $\mathcal{X}^{\epsilon}$ with $\epsilon=1$.}}
        \label{fig:figure-intuition-heteroclinic-flip-bifurcation}
\end{figure}

\begin{example}[Heteroclinic flip bifurcation, \colorAutoref{fig:figure-intuition-heteroclinic-flip-bifurcation}]
\label{example:heteroclinic-flip}
Let $\{ X_{\eta} \}_{\eta\in \mathbb{R}}$ be a one-parameter family of gradients where $X_{\eta} := -\nabla_{g_{\eta}} V_{\eta}$ is smoothly varying with respect to $\eta$. The heteroclinic flip geometry described by \cite{raju2024geometrical} is adapted with perturbations to both the metric and potential. Consider the one-parameter family of metrics,
\begin{gather*}
    g_{\eta_{ij}} = \delta_{ij} - 6 \cdot (1 - \delta_{ij}) \, \eta \cdot G(v_1; -1, 1) \cdot G(v_2; 0, 2) \\
    \mathrm{with } \qquad G(v_i; \mu, \sigma) = \frac{1}{\sqrt{2 \pi \sigma^2}} \exp\left(-\frac{(v_i - \mu)^2}{2 \sigma^2}\right) \, ,
\end{gather*}
where the function $G(v_i; \mu, \sigma)$ is Gaussian with mean $\mu$ and standard deviation $\sigma$. The one-parameter family of potential functions $\{V_{\eta}\}_{\eta \in [-1,1]}$ is given by,
\begin{gather*}
    \label{equation:example-heteroclinic-flip-associative-memory}
    V(v, \eta) = \dfrac{1}{5} \left( \frac{1}{2} v_{1}^{4} + \dfrac{1}{4} v_{2}^{4} - v_{2}^{3} + 2 v_{2}v_{1}^{2} - 2 v_{2}^{2} + \frac{3}{2}v_{2} + \sum_{i=3}^{n} v_{i}^{2} + \frac{1}{2}\eta v_{1} \right) \, .
\end{gather*}
From $\eta=-1$ to $\eta=1$, an orbit of tangency (a heteroclinic connection) between the saddle, $s$, on the left, and the saddle, $\Tilde{s}$, on the right, is born, altering the intersection of the unstable manifold $W^u(\Tilde{s})$ with the stable manifold $W^s(\beta_a)$ of the top left attractor $\beta_a$ to the stable manifold $W^s(s)$. From $\eta=0$ to $\eta=1$, the unstable manifold $W^u(\Tilde{s})$ "flips" and intersects the stable manifold $W^s(\beta_b)$ of the opposite attractor $\beta_b$.
\end{example}


\subsubsection{Two-parameter families of gradients}
\label{subsubsection:two-parameter-families-gradients}
Let $\{ X_{\eta} \}$ be a two-parameter family on $M$, with $\eta = (\eta_1,\eta_2) \in N$ and $N$ a compact surface. Analogous to one-parameter families, the main theorem of \cite{dias1989bifurcations} shows that two-parameter families of gradients are generically stable (on an open dense subset of $\chi_2^g(M)$)\footnote{One would hope that arbitrary $k-$parameter families are stable. This is not the case (\cite{takens1985moduli}).}. In addition to the (two) bifurcations for one-parameter families, the bifurcations of two-parameter families consist of their combinations, along with additional codimension two bifurcations.  There are eleven possibilities. 

Additional generic conditions are imposed to derive the codimension two bifurcations. The main results are recapitulated below -- refer to \cite{dias1989bifurcations} for a complete description and a rigorous treatment of the assumptions made. The first five bifurcations are derived from the codimension one bifurcations. Another three are due to the breakdown of transversality conditions on invariant manifolds that are generic in the one-parameter case. Finally, another case occurs as a codimension two singularity, and two remaining bifurcations are due to codimension two tangencies.

\paragraph{On codimension one bifurcations.}
Five possibilities at a bifurcation value $\overline{\eta}$ are derived from the bifurcations of one-parameter families; two are the one-parameter bifurcations; an additional three are simultaneous occurrences of two saddle-nodes, two quasi-transversal orbits of tangency-, or a quasi-transversal orbit of tangency and a saddle-node.


\paragraph{Criticality.}
Generically, the multiplicity of the eigenvalues of $X_{\eta}$ at a hyperbolic singularity $p \in M$ are equal to one, so a smallest contracting and expanding direction can be identified. The \textit{strong unstable manifold}, $W^{uu}(p)$, and \textit{strong stable manifold}, $W^{ss}(p)$, correspond to all positive (resp. negative) eigenvalues except for the smallest. Generically, the strong stable manifolds (resp. unstable) and strong unstable (resp. stable) manifolds of distinct hyperbolic critical elements are transverse, or \textit{noncritical}. Two additional bifurcations are due to criticality, i.e., the nontransverse intersection of these invariant manifolds. 

In particular, a sixth possibility is first described by one quasi-transversal orbit of tangency between the unstable and stable manifolds of two hyperbolic critical elements $W^u(p)$ and $W^s(q)$. In addition, there is a singularity $s$ with $W^u(s)$ nontransverse to $W^{ss}(p)$ or $W^{uu}(q)$ along a unique quasi-transversal orbit of tangency.

A seventh possibility consists of a saddle-node whose strong stable (resp. unstable) manifold is nontransverse to the unstable manifold of another singularity. The nontransverse orbit in this scenario is quasi-transversal.

\paragraph{Quasi-transverse stable and center-unstable manifolds.}
The eigenvalues of $X_{\overline{\eta}}$ at hyperbolic singularities generically have a multiplicity of one, so there is also a \textit{center-stable manifold} $W^{cu}$ defined by the smallest positive eigenvalue and all negative ones (similarly, stable $W^{cs}$). Generically, $W^{cu}$ is transverse to $W^{uu}$. An additional bifurcation occurs due to a quasi-transversal orbit of tangency $\gamma \subset W^u(p) \, \cap \, W^s(q)$ with $p,q \in M$ hyperbolic singularities, along which the center-unstable manifold $W^{cu}(p)$ is not transversal to $W^s(q)$ (it can be assumed quasi-transversal).


\paragraph{Cusp bifurcation.}
Let $\overline{\eta}$ be a bifurcation value for $\{X_{\eta}\}$ and $s \in M$ a nonhyperbolic singularity. A ninth possibility encompasses a codimension two singularity with a cusp geometry. As with the saddle-node, $DX_s$ generically has a zero eigenvalue with multiplicity one. On a center manifold $W^c(s)$ passing through $s$, the cusp has the normal form ${X_{\overline{\eta}}(x) = \left( x^3 + \mathcal{O}\left( |x|^4 \right) \right)\frac{\partial}{\partial x}}$. One has a standard picture for the cusp as two collapsing saddle-nodes\footnote{See \cite{guckenheimer2013nonlinear}, Chapter 7 and \cite{dias1989bifurcations}, \S7 for more information on the cusp bifurcation.}.


\paragraph{Orbit of tangency with cubic contact.}
Generically, the Morse function $f$ in \Cref{subsubsection:one-parameter-families-gradients} describing the interaction between invariant manifolds for a quasi-transverse orbit of tangency is quadratic; the contact is parabolic (\cite{palis1983stability}). A tenth bifurcation is due to a unique orbit of tangency between the unstable manifold $W^u(p)$ and the stable manifold $W^s(p)$ of two hyperbolic critical elements $p,q \in M$ having cubic contact.


\paragraph{Codimension two orbit of tangency.}
Finally, an eleventh possibility occurs due to a genuine codimension two tangency. That is, there is an open and dense subset of $\chi_2^g(M)$ such that if an element $X_{\overline{\eta}}$ has an orbit of tangency $\gamma \subset W^u(p) \, \cap \, W^s(q)$ with 
\begin{equation*}
    \text{dim} \left[T_r \,W^u(p) +  T_r \, W^s(q) \right] = \text{dim} \, M - 2
\end{equation*}
for $r \in \gamma$, then 
\begin{equation*}
    \text{dim}\, W^u(p) + \text{dim}\, W^s(q)  = \text{dim} \, M - 1 \, .
\end{equation*}
The tangency results from a lack of dimensions of the stable and unstable manifolds: ${u+s = \text{dim} \, M-1}$, where $s = \text{dim} \, W^s(q)$ and $u = \text{dim} \, W^u(p)$ (see \cite{dias1989bifurcations}, Proposition 5).

\section{Applications and examples}
\label{section:applications-examples}
Examples to verify the broad applicability of the theory described in this paper are now considered. Energy-based models are rather general and are first considered in \Cref{subsection:applications-energy-based-models}, followed by the Hopfield network and Boltzmann machine in \Cref{subsection:applications-hopfield-boltzmann-models}. Modern Hopfield networks are described in \Cref{subsection:applications-modern-hopfield-networks}. Finally, diffusion models are described in \Cref{subsection:applications-diffusion-models-modern-hopfield}. Providing conditions and methods to ensure structural stability of these models is not within the scope of this document. However, the Hopfield and modern Hopfield networks admit descriptions.

\subsection{Assumptions and generic conditions}
\label{section:assumptions}
One assumption and two generic conditions are imposed. The assumption is that flows remain in a closed manifold or globally attracting region diffeomorphic to a closed disc in $\mathbb{R}^n$ with $C^2$ boundary. The first condition requires that the boundary intersects the flows transversally, ruling out bifurcations that occur at the boundary. The second condition is used to obtain structural stability criteria for classic and modern Hopfield networks via small modifications to their energy functions in Morse charts, making the dynamics \textit{gradient-like}.

\subsubsection{Generic condition 1: Transversal intersection with boundary}
\label{subsubsection:assumption-boundary}
A compact subset of $\mathbb{R}^n$ with boundary transverse to a gradient field can always be constructed if there is a globally attracting region containing its critical points. Denote by $D_{\delta}= \{\,x\in \mathbb{R}^n \, : \, ||x|| \leq \delta \,\}$ the closed disc of radius $\delta > 0$ in $\mathbb{R}^n$. with boundary $\partial D_{\delta} = \{\,x \in \mathbb{R}^n \, : \, ||x|| = \delta \,\}$ a smooth $(n-1)$-dimensional sphere $S_{\delta}^{n-1} \subset \mathbb{R}^n$.
\begin{proposition}
\label{proposition:globally-attracting-region-existence}
     Let $X$ be a gradient field on $\mathbb{R}^n$ and $D_C \subset \mathbb{R}^n$ a globally attracting region with all critical points of $X$ in the interior of $D_C$. Then there exists a region $D_{C+\delta}$ containing $D_C$ with boundary $\partial D_{C+\delta}$ transverse to $X$.
\end{proposition}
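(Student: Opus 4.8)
The plan is to realize $D_{C+\delta}$ as a regular sublevel set of a proper smooth Lyapunov function adapted to the attractor inside $D_C$, so that transversality of the boundary reduces to a Lyapunov decrease inequality. First I would assume, after enlarging $D_C$ slightly within its basin of attraction (which does not affect the conclusion), that $D_C$ is a trapping region: compact, forward invariant under $\phi_t^X$, and absorbing, so that every forward orbit enters and thereafter remains in $D_C$. Since every critical point of $X$ lies in $\text{int}(D_C)$, the restriction of $X$ to $\mathbb{R}^n \setminus \text{int}(D_C)$ is nowhere zero, and the maximal invariant set $\mathcal{A} = \bigcap_{t \geq 0} \phi_t^X(D_C) \subset \text{int}(D_C)$ is an attractor whose basin of attraction, by hypothesis, is all of $\mathbb{R}^n$.

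The next step is to produce a $C^\infty$ proper function $L \colon \mathbb{R}^n \to [0,\infty)$ with $L^{-1}(0)$ a neighborhood of $\mathcal{A}$ contained in $\text{int}(D_C)$ and with $dL_x(X(x)) < 0$ for every $x \notin \text{int}(D_C)$. The gradient structure already supplies a Lyapunov function for free, namely $V$ itself, since $dV_x(X(x)) = -\|\nabla_g V(x)\|_g^2 < 0$ off $\text{Crit}(X)$; the only thing $V$ may lack is properness, as its sublevel sets can be unbounded. This is exactly where global attraction is used: it lets one upgrade $V$ to a proper $L$, equivalently it guarantees a smooth trapping block for the attractor $\mathcal{A}$. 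Concretely one may take $L(x) = \int_0^\infty \beta(\phi_t^X(x))\, e^{-t}\, dt$, with $\beta \geq 0$ smooth, vanishing on a neighborhood of $\mathcal{A}$ and positive elsewhere, and check properness from the absorbing property of $D_C$; alternatively one invokes the classical existence theorem for smooth Lyapunov functions of attractors.

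Granting such an $L$, the remainder is routine. Because $dL_x(X(x)) < 0$ wherever $L(x) > 0$, every such point is regular, so every $c > 0$ is a regular value of $L$; fixing $c > \max_{D_C} L$ and setting $D_{C+\delta} := L^{-1}([0,c])$ (the subscript being a label), properness makes it compact, it contains $D_C$, and $\partial D_{C+\delta} = L^{-1}(c)$ is a smooth closed hypersurface with tangent space $\ker dL_x$ at $x$. Since $dL_x(X(x)) < 0 \neq 0$ on $L^{-1}(c)$, we have $X(x) \notin T_x(\partial D_{C+\delta})$, so $X$ is transverse to $\partial D_{C+\delta}$ and points strictly inward; and since $X$ is likewise transverse to every level set $L^{-1}(c')$ with $0 < c' \leq c$, the shell $D_{C+\delta} \setminus \text{int}(D_C)$ is a collar $\partial D_C \times [0,1]$, so $D_{C+\delta} \cong D_C$ is again a closed disc.

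The genuinely non-formal point is the construction of the proper smooth $L$ in the second step. A gradient always carries the Lyapunov function $V$, but $V$ need not be proper --- its sublevel sets can have unbounded ``tentacles'' along which orbits still return to $D_C$ --- so the global attraction hypothesis must be used in an essential way, either by citing the smooth Lyapunov function theorem for attractors or by carrying out and verifying the averaging construction above. The subsequent regular-value, transversality, and collar arguments are standard.
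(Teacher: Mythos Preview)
Your approach is correct but takes a genuinely different route from the paper's. The paper does not build a Lyapunov function at all; instead it fixes a round sphere $\partial D_{C+\delta}$ for some $\delta>0$, covers it by finitely many charts, and on each chart perturbs the inclusion $\Psi_\delta:S^{n-1}_{C+\delta}\hookrightarrow\mathbb{R}^n$ by an arbitrarily small constant vector (localized by a bump function) so as to make the image transverse to the flow. The key input is Thom transversality in the form ``for residually many $u\in\mathbb{R}^n$, the map $f+u$ is transverse to a given submanifold'' (their Lemma~\ref{lemma:vectors-transverse-palis}), applied inductively over the charts. The resulting region is a $C^\infty$-small perturbation of the round disc $D_{C+\delta}$.

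Your Lyapunov/sublevel-set argument trades this elementary transversality density argument for the converse Lyapunov theorem. What you gain is a stronger conclusion---$X$ points strictly inward along $\partial D_{C+\delta}$, so the region is automatically forward-invariant, and you get an entire nested family $L^{-1}([0,c])$ of such regions for free---at the cost of invoking a deeper theorem. One caution on your specific averaging formula: with a bounded $\beta$ the integral $L(x)=\int_0^\infty\beta(\phi_t^X(x))e^{-t}\,dt$ is bounded by $\sup\beta$, hence not proper on $\mathbb{R}^n$; you would need $\beta$ itself to be proper (and then argue convergence and properness using the hitting time to $D_C$), or simply cite Wilson's smooth Lyapunov theorem as you suggest. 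You correctly flag this as the non-formal step, so this is a caveat on the explicit construction rather than a gap in the strategy.
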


The proof in \Cref{appendix:generic-conditions} is essentially follows from Thom's transversality theorem (\cite{palis2012geometric}, Theorem 3.4); specifically, that transversality is a dense property. That it is open also holds but is not necessary. \Cref{proposition:globally-attracting-region-existence} assumes that a disc with positive radius contains all critical points of $X$. This is assumed for energy-based models and diffusion models, as they do not have explicit energy functions.


\subsubsection{Generic condition 2: Gradient-like dynamics}
\label{subsubsection:trajectory-spaces-negative-gradients}
A modern view of the Morse-Smale conditions is through trajectory spaces of negative gradient flows, which are central to Morse and Floer homology and provide a neat description of these conditions for a dense set of metrics given a Morse function. The following is adapted from \cite{schwarz1993morse}, Chapter 2, which we refer to for rigorous descriptions; see, e.g., \cite{milnor1974characteristic}, Chapters 2-3 about vector bundles.

Let ${\gamma_x(t) = \phi^X_t(x)}$ be a gradient flow map for a Morse function $V: M \rightarrow \mathbb{R}$. We consider the space of (compact) curves with $\lim_{t \rightarrow \pm \infty} \gamma(t) = x \in M$ and respectively, $y \in M$. Denote by $\mathcal{P}_{x,y}$ the set of curves that start at $x \in M$ and end at $y \in M$. That is,
\begin{equation*}
    \mathcal{P}_{x,y} = \left\{ \gamma : \mathbb{R} \cup \{\pm \infty\} \rightarrow M \, : \,  \lim_{t \rightarrow -\infty} \gamma(t) = x \, \text{ and } \, \lim_{t \rightarrow +\infty} \gamma(t) = y \right\} \,.
\end{equation*}

Let $\mathcal{P}^{1,2}_{x,y} \subset \mathcal{P}_{x,y}$ be those curves which are square integrable with weak first derivative and by convention, denote the extended real numbers by $\overline{\mathbb{R}} = \mathbb{R} \cup \{\pm \infty\}$. Then, given a function $V \in C^{\infty}(M,\mathbb{R})$ and critical points $x, y \in \text{Crit}(V)$ as endpoints, the gradient field induces a smooth section $F$ in the $L^2$-Banach bundle,
\begin{equation*}
    L^2_{\mathbb{R}} \left(\mathcal{P}^{1,2*}_{x,y} TM\right) = \bigcup_{s \in \mathcal{P}^{1,2}_{x,y}} L^2_{\mathbb{R}}(s^{*}TM) \, ,
\end{equation*}
where $L^2_{\mathbb{R}}$ is a contravariant functor associating to sections of a smooth vector bundle $\xi$ on $\overline{\mathbb{R}}$ a vector space of sections of $\xi$ along with a Banach space topology from $L^2(\mathbb{R}, \mathbb{R}^n)$\footnote{The functor $L^2_{\mathbb{R}}$ is described in detail in \cite{schwarz1993morse}, Appendix A.}. The section $F$ is given by,
\begin{align*}
    F: & \,\mathcal{P}^{1,2}_{x,y} \rightarrow L^2_{\mathbb{R}} \left(\mathcal{P}^{1,2*}_{x,y} TM\right) \, ,\qquad  s \mapsto \frac{d}{dt} \, s + \nabla V \, \circ \, s \, ,
\end{align*}
which can be understood as a map that sends a curve $s$ from $x$ to $y$ to the vector field $\dot{s} + \nabla_{s(t)} V$ over the image of $s$. That is, it is a section of the induced bundle $s^* TM$. The zeros of the section $F$ are smooth curves that solve the differential equation ${\frac{d}{dt} \, s = -\nabla V \, \circ \, s}$ and satisfy the condition that $\lim_{t\rightarrow + \infty}s(t) = y$ and $\lim_{t\rightarrow - \infty}s(t) = x$; see \cite{schwarz1993morse}, Proposition 2.8 and 2.9. To summarize, $F$ is a section of a fiber bundle with base space $\mathcal{P}^{1,2}_{x,y}$ and with fibers over a curve $s \in \mathcal{P}^{1,2}_{x,y}$ being $C^{0}$ sections of $s^{*}TM$.

\begin{definition}[Gradient-like field]
    Let $V \in C^2(M ,\mathbb{R})$ be a Morse function on a Riemannian $n$-manifold $(M,g)$. A vector field $X$ is a \textit{gradient-like field adapted to $V$} if,
    \begin{enumerate}[itemsep=0pt]
        \item $DV_pX(p) \leq 0$ throughout the complement $M \setminus \text{Crit}(V)$ with equality only at the set of critical elements of $V$ on $M$;
        \item For each critical point $\beta \in \text{Crit}(V)$ with index $\lambda$, there is a smooth coordinate chart around $\beta$ so that in local coordinates, $\nabla V = - \left(\sum_{i=1}^{\lambda} x_i \, \partial/\partial x_i \right) + \left( \sum_{j={\lambda+1}}^{n} x_j \, \partial/\partial x_j\right)$.
    \end{enumerate}
\end{definition}
If (2) holds for a critical point $\beta_i$ the vector field $X$ is said to be in \textit{standard form} near $\beta_i$. If $X$ is in standard form near all critical points, the metric is called \textit{nice}, or \textit{compatible with the Morse charts of $V$}. It is shown below that there is always a gradient-like field for a Morse function. The proof is in \Cref{appendix:Morse-theory} and is adapted from \cite{cohennotes}. Alternate proofs can be found in \cite{audin2014morse}, Chapter 2 or \cite{milnor2025lectures}, Lemma 3.2.
\begin{proposition}
\label{proposition:nice-metrics-dense}
     Let $(M,g_0)$ be a Riemannian manifold with metric $g_0$ and $V \in C^2(M, \mathbb{R})$ a Morse function. The set of nice metrics for the pair $(M,V)$ is dense in the $L^2$ space of Riemannian metrics on $M$.
\end{proposition}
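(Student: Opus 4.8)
The plan is to manufacture a nice metric locally near each critical point out of the Morse chart, glue it to the given metric $g_0$ by a partition of unity supported in an arbitrarily small neighborhood of the (finite) critical set, and then observe that the $L^2$-cost of this surgery is controlled by the volume of that neighborhood and hence tends to $0$. First, enumerate $\mathrm{Crit}(V) = \{\beta_1,\dots,\beta_n\}$; by the Morse lemma there are pairwise disjoint coordinate charts $(U_i,h_i)$ with $h_i(\beta_i)=0$ and $(V\circ h_i^{-1})(x) = V(\beta_i) - (x_1^2+\dots+x_{\lambda_i}^2) + (x_{\lambda_i+1}^2+\dots+x_n^2)$, where $\lambda_i$ is the index of $\beta_i$. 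On $h_i(U_i)\subset\mathbb{R}^n$ take the rescaled Euclidean metric $2\,g_E$ and pull it back, $g_i := h_i^{*}(2 g_E)$, a smooth metric on $U_i$. In the coordinates $x = h_i$ one computes
\[
\nabla_{g_i} V \;=\; (2 g_E)^{-1} dV \;=\; -\sum_{k\le\lambda_i} x_k\,\frac{\partial}{\partial x_k} \;+\; \sum_{k>\lambda_i} x_k\,\frac{\partial}{\partial x_k}\,,
\]
so $g_i$ puts $\nabla V$ in the standard form of the definition near $\beta_i$ (the factor $2$ is there precisely to absorb the derivative of $x_k^2$).

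Next, for a small parameter $\delta>0$ I would choose smaller neighborhoods $W_i^{\delta} := h_i^{-1}(B(0,\delta))$, with compact closure in $U_i$, and bump functions $\chi_i\in C^{\infty}(M,[0,1])$ with $\chi_i\equiv 1$ on $W_i^{\delta}$ and $\mathrm{supp}\,\chi_i \subset h_i^{-1}(B(0,2\delta)) \subset U_i$; for $\delta$ small these supports are pairwise disjoint. Define
\[
g_{\delta} \;:=\; \sum_{i=1}^{n}\chi_i\,g_i \;+\; \Bigl(1-\sum_{i=1}^{n}\chi_i\Bigr)g_0 \,.
\]
At each point this is a convex combination of positive-definite symmetric $2$-tensors among $\{g_1,\dots,g_n,g_0\}$, and the set of such tensors is a convex cone, so $g_{\delta}$ is a smooth Riemannian metric; it equals $g_i$ on $W_i^{\delta}$ and equals $g_0$ off $\bigcup_i \mathrm{supp}\,\chi_i$. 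In particular $\nabla_{g_{\delta}} V$ is in standard form near every $\beta_i$, so $g_{\delta}$ is nice for $(M,V)$.

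Finally, estimate the distance. Since $g_{\delta}-g_0 = \sum_i \chi_i(g_i-g_0)$ is supported in $\bigcup_i h_i^{-1}(B(0,2\delta))$, and $|g_i-g_0|$ is bounded by a constant $C$ independent of $\delta$ on the compact set $\bigcup_i \overline{U_i}$, while $|g_{\delta}-g_0|\le \max_i |g_i-g_0| \le C$ pointwise because $0\le\chi_i\le 1$, one gets
\[
\|g_{\delta}-g_0\|_{L^2}^{2} \;=\; \int_M |g_{\delta}-g_0|^{2}\,d\mathrm{vol}_{g_0} \;\le\; C^{2}\cdot \mathrm{vol}\Bigl(\textstyle\bigcup_i h_i^{-1}(B(0,2\delta))\Bigr) \;\xrightarrow[\delta\to 0]{}\; 0 \,.
\]
Hence every smooth metric $g_0$ is an $L^2$-limit of nice metrics, which is the claimed density.

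The main obstacle — essentially the only non-formal point — is making the local model simultaneously smooth and in the \emph{exact} normal form demanded by the definition: this rests on having a Morse chart of adequate regularity (the Morse, or Morse–Palais, lemma) so that $h_i^{*}(2g_E)$ is a genuine smooth metric, and on the bookkeeping that the flat model in that chart yields precisely $\nabla V = -\sum_{k\le\lambda}x_k\partial_k + \sum_{k>\lambda}x_k\partial_k$ rather than a nonzero multiple of it. Everything else is routine: positive-definiteness of the patched metric follows from convexity of the positive cone, and the density estimate is cheap because the $L^2$ norm is insensitive to modifications concentrated on sets of small volume localized around the finite critical set.
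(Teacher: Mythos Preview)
Your proposal is correct and follows essentially the same approach as the paper: pull back a Euclidean metric through Morse charts near each critical point, glue to $g_0$ by a bump-function convex combination supported in arbitrarily small neighborhoods, and invoke convexity of the cone of positive-definite symmetric tensors. Your version is in fact more complete than the paper's, which omits both the explicit $L^2$ estimate and the factor-of-$2$ rescaling needed to match the exact standard form $\nabla V = -\sum_{k\le\lambda}x_k\partial_k + \sum_{k>\lambda}x_k\partial_k$.
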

\noindent
Define $\mathcal{F}_{x,y} \subset \mathcal{P}^{1,2}_{x,y}$ as a subset that consists of gradient flow lines. That is,
\begin{equation*}
    \mathcal{F}_{x,y} = \left\{ \gamma\in  \mathcal{P}^{1,2}_{x,y} \, : \,  \gamma'(t) = -\nabla_{\gamma(t)} V \right\} \,.
\end{equation*}
The difference between gradient and gradient-like fields is only at critical points. For gradient-like fields adapted to a Morse function, the Morse-Smale conditions are equivalent to the section $F$ being transverse to the zero section (in the $L^2$-bundle), i.e, to the surjectivity of the linearization of $F$ at $\mathcal{F}_{x,y}$; see \cite{audin2014morse}, Theorem 10.1.5\footnote{A version of the surjectivity of $F$ being equivalent to the Smale transversality condition using the Levi-Cevita connection for a given metric can be found in \cite{hutchings2002lecture}.}. Given local coordinates around the image of a curve $\gamma \in \mathcal{F}_{x,y}$, with endpoints $x$ and $y$,
\begin{equation*}
    F(\gamma)_i = \frac{d\gamma_i}{dt} \,  + \sum_{j} g^{ij}(\gamma(t)) \, \frac{\partial V}{\partial x_j} = 0 \,.
\end{equation*}
Linearizing the gradient flow equation gives, for $\varrho \in \mathcal{P}^{1,2}_{x,y}$,
\begin{equation}
\label{equation:Floer-operator}
    DF_A(\varrho)_i = \frac{d\varrho_i}{dt} \,  + \sum_{j,k} \frac{\partial g^{ij}}{\partial x_k} \varrho_k \, \frac{\partial V}{\partial x_j} + \sum_{j,k} g^{ij} \frac{\partial^2 V}{\partial x_j \partial x_k}\varrho_k = 0 \, ,
\end{equation}
where the notation $F_A$ indicates that the linearization $DF_A(\varrho)_i$ can be assigned a continuous family of endormorphisms of $\mathbb{R}^n$, $A \in C^0({\mathbb{R}},\text{End}(\mathbb{R}^n))$, that can be written as $n \times n$ matrices. For the local computation above,
\begin{equation*}
    A_{ik} = \sum_{j} \left( \frac{\partial g^{ij}}{\partial x_k} \, \frac{\partial V}{\partial x_j} + \sum_{j,k} g^{ij} \frac{\partial^2 V}{\partial x_j \partial x_k} \right) \, ,
\end{equation*}
and substituting into \eqref{equation:Floer-operator} for $DF_A$ gives $DF_A(\varrho)_i = \dot{\varrho}_i + \sum_k A_{ik} \varrho_k$. The asymptotic behavior of the family is given by the Hessian matrices at each critical point:
\begin{equation*}
    \lim_{t\rightarrow -\infty} A_{ik}(t) = \sum_{j} g^{ij}(x) \frac{\partial^2 V}{\partial x_j \partial x_k} (x) \hspace{1em} \text{and} \hspace{1em} \lim_{t\rightarrow +\infty} A_{ik}(t) = \sum_{j} g^{ij}(y) \frac{\partial^2 V}{\partial x_j \partial x_k} (y) \, ,
\end{equation*}
at which the transversality condition is equivalent to the Morse condition.


\subsection{Energy-based models}
\label{subsection:applications-energy-based-models}
Energy-based models are Boltzmann-Gibbs distributions, ${p_{\theta}(x)=({Z_{\theta}})^{-1} \cdot e^{{-V_{\theta}(x)}/{\epsilon^2}}}$ with $Z_{\theta} = \int e^{{-V_{\theta}(x)}/{\epsilon^2}} \, dx$, where $Z_{\theta}$ is the partition function, and $V_{\theta}$ is a (smooth) real-valued function parameterized by a neural network with parameters $\theta$ (\cite{pml2Book}, Chapter 24). The noise-scaling parameter $\epsilon$ is typically set to one, and this is implicitly assumed below.

A traditional approach to fitting $p_{\theta}(x)$ to a data distribution $p_{\text{data}}(x)$ is through maximum likelihood estimation by maximizing the expected log-likelihood over the data distribution: $\max_{\theta} \mathbb{E}_{x \sim p_{\text{data}}(x)}[\log p_{\theta}(x)]$. However, the likelihood $p_{\theta}(x)$ is intractable to compute, due to the partition function $Z_{\theta}$. Similarly, the gradient of the log-probability of the model,
\begin{equation*}
    \nabla_{\theta} \log p_{\theta} = -\nabla_{\theta} V_{\theta}(x) - \nabla_{\theta} \log Z_{\theta} \,
     ,
\end{equation*}
requires computing $Z_{\theta}$; however, Markov chain Monte Carlo (MCMC) can be used to obtain unbiased estimates thereof. Specifically, $\nabla_{\theta} \log Z_{\theta}$ can be rewritten as,
\begin{equation*}
    \nabla_{\theta} \log Z_{\theta} = \mathbb{E}_{x \sim p_{\theta}(x)}[-\nabla_{\theta}V_{\theta}(x)] \, .
\end{equation*}
Therefore, if an exact sample $\tilde{x} \sim p_{\theta}(x)$ can be drawn from the model, then an unbiased sample of the gradient $\nabla_{\theta} \log p_{\theta}$ can be obtained, allowing the use of gradient ascent to optimize the model parameters. Alternatives, such as contrastive divergence (\cite{hinton2002training}), have been developed to overcome the difficulties with training these models, but all are compatible with gradient-based optimization (see \cite{song2021train}).

\begin{figure}[t!]
        \centering
        \includegraphics[width=1\textwidth]{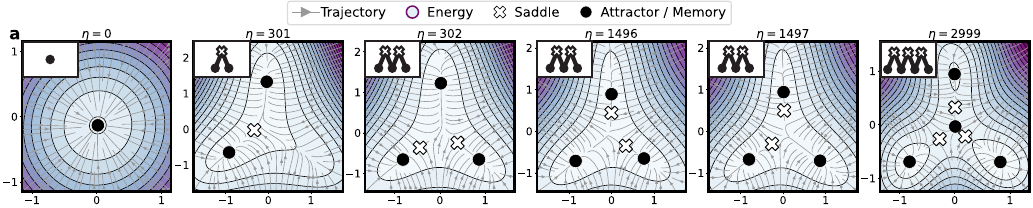}
        \caption{\textbf{Codimension one bifurcations during the learning dynamics of energy-based generative models.} {\textbf{(a)} Trajectories (grey) and critical points of a one-parameter family of gradients (left to right) derived from the potential of an energy-based model trained to generate four centroids in $\mathbb{R}^2$ using contrastive divergence. The model was pretrained to generate a centroid at the origin. The energy was parameterized by a three-layer multilayer perceptron with the softplus activation and a hidden dimensionality of 128, and was regularized by adding a quadratic term $\frac{1}{2}(\max_x ||x||)^2$ over the training data to encourage a barrier on the max norm of generated data. As the optimization index increases from $\eta_0 = 0$ to $\eta_{\text{final}} = 2999$, a sequence bifurcations occur; shown are representative bifurcations during training. Additional bifurcations occur and are not shown. DAGs at parameter values are shown as insets. A supercritical saddle-node occurs from $\eta=0$ to $\eta=301$, creating two attractors. Another saddle-node creates a third attractor and index 1 saddle from $\eta=301$ to $\eta=302$. A heteroclinic flip appears to occur between $\eta=1496$ to $\eta=1497$, causing the unstable manifold of the top saddle to change from intersecting the stable manifold of the bottom left attractor to the bottom right attractor. A final supercritical saddle-node creates the fourth attractor.}
        }
        \label{fig:figure-energy-based-model-learning}
\vspace{-0.5em}
\end{figure}

A popular method to draw samples from an energy-based model is Langevin MCMC, which produces samples from probability distribution $p(x)$ using the score function $\nabla_x \log p(x)$ in an overdamped Langevin equation (\cite{parisi1981correlation}). The score for energy-based models is the negative gradient of the energy: $\nabla_x \log p_{\theta}(x) = -\nabla_x V_{\theta}(x)$, making Langevin MCMC a discretization of the stochastic differential equation,
\begin{equation*}
    d \tilde{x}^{\epsilon}_t = \underbrace{\nabla_x \log p(x^{\epsilon}_t)}_{- \nabla_x V_{\theta}(x)} \,dt  + \epsilon \, dw_t \, ,
\end{equation*}
where $\epsilon >0$ is re-introduced as a noise-scaling term. Given an energy-based model $p_{\theta}(x)$, a family $\{p^{\epsilon} _{\theta}(x)\}_{\epsilon >0}$ parameterized by $\epsilon>0$ and $\theta$ yields a family of invariant measures $\{\mu^{\epsilon}\}_{\epsilon > 0}$ as in \Cref{section:diffusion-zero-noise-limit}. Langevin MCMC similarly produces a family of small random perturbations $\{\mathcal{X}^{\epsilon}\}_{\epsilon > 0}$. As $\epsilon \rightarrow 0$, trajectories approach those of a gradient dynamical system $X = - \nabla_x V_{\theta}(x)$.

If the potential is smooth, gradient ascent yields one-parameter families $\{\mu^{\epsilon}_\eta\}_{\epsilon > 0, \eta \in \mathbb{R}}$ and family $\{\mathcal{X}^{\epsilon}_\eta\}_{\epsilon > 0, \eta \in \mathbb{R}}$. By \Cref{subsection:paths-large-deviations}, when $\epsilon \rightarrow 0$ the probability that a path generated by $\mathcal{X}^{\epsilon}_\eta$ deviates from the flow defined by the gradient drift field vanishes. From \Cref{subsubsection:stable-families-of-gradients}, the process by which energy-based models learn to fit $p_{\text{data}}$ can be generically characterized at vanishing noise levels by ordered sequences of saddle-node or heteroclinic flip bifurcations (\colorAutoref{fig:figure-energy-based-model-learning}).


\subsection{Hopfield networks and Boltzmann machines}
\label{subsection:applications-hopfield-boltzmann-models}
The classic examples of associative memory and energy-based models, the Hopfield network and the Boltzmann machine, are now detailed.


\subsubsection{Hopfield networks}
\label{subsubsection:hopfield-networks}
Consider the continuous Hopfield network with $N$ neurons (\cite{hopfield1984neurons}). Let $f_i$ be a smooth and monotonically increasing activation function, $v_i = f_i(u_i)$ be the output of a feature neuron given the input $u_i$ from a hidden neuron, and $f_i^{-1}(v_i) = u_i$ the inverse. The common sigmoid or hyperbolic tangent activation functions are considered here. Given a synaptic weight matrix $W$ that is symmetric with zero diagonal, the network's energy function is,
\begin{equation}
\label{equation:Hopfield-energy}
    V(v) = -\frac{1}{2} \sum_{i,j} W_{ij} v_i v_j + \sum_{i} B_i v_{i} + \sum_{i} R^{-1}_{i} \int_{0}^{v_{i}} f_{i}^{-1} (v) \,dv \,.
\end{equation}    
The term $R^{-1}$ introduces delays in the output of feature neurons relative to hidden neurons. Here, $R^{-1}_i = 1/\rho_i + \sum_j \tau^{-1}_{ij}$, where $\tau_{ij}$ is a resistance between neurons $i,j$, and $\rho_i$ is an input resistance. The bias term $B$ is set to zero to simplify arithmetic; the following results hold through similar calculations otherwise. The dynamics of the hidden neurons are,
\begin{equation*}
 {\frac{d{u_i}}{dt}} = \sum_{j} W_{ij} v_j  - R^{-1}_{i} f_i^{-1}(v_i) \, ,
\end{equation*}
which is the $i^\text{th}$ component of the negative gradient of the potential $V(v)$ with respect to feature neurons in the standard metric $g_{ij} = \delta_{ij}$. The dynamics of feature neurons are actually also gradient but in a metric determined by the activation function. By definition, $u_i = f_i^{-1}(v_i)$, and applying the chain rule gives
\begin{equation*}
    {\frac{d{v_i}}{dt}} = f_i'\left(u_i \right)\cdot \frac{du_i}{dt} = -f_i'\left(u_i \right) \cdot \frac{\partial V}{\partial v_i} = -f_i'\left(f_i^{-1}(v_i) \right) \cdot \frac{\partial V}{\partial v_i} \,.
\end{equation*}    
\noindent
The term $f_i'(\cdot)$ is always positive since $f_i$ is strictly increasing, so the first term in the final equality forms a diagonal matrix with positive entries. Therefore, the dynamics are gradient in a metric with inverse $g^{ij} = f_i'\left(f_i^{-1}(v_i) \right) \delta^{ij}$. A standard gradient system, ${\dot{v_i} = -\partial V/\partial v_i}$, can be obtained by a smooth change of coordinates\footnote{Similar descriptions for a more general class of models can be found in \cite{cohen1983absolute}.} ${y^i(v) = [\,{1}/{f_i'\left(f_i^{-1}(v_i)\right)}}\,]^{1/2} \,$.

\paragraph{Transverse boundary.}
The range of the sigmoid and hyperbolic tangent functions are the open intervals $(0,1)$ and $(-1,1)$. Consequently, the phase space for feature neurons is diffeomorphic to the open $N$-dimensional disc, denoted $D^N$. Fixed points in feature states often lie in the corners of a hypercube, so we consider the dynamics on the closure $\overline{D^N}$, which is compact. The vector field is vacuously transversal to the boundary at $v \in \partial \overline{D^N}$. 

Similarly, the inverse of these activation functions approach $\pm \infty$ as their input approaches the infimum and supremum of the open intervals $(0,1)$ or $(-1,1)$. If $R^{-1} > 0$ and the entries of $W$ are bounded, then the hidden state vector field points inwards, as the term $R^{-1}f^{-1}(v_i) \rightarrow \pm\infty$ as $v_i$ approaches the infimum or supremum of its range. In this case, a closed region that contains all critical points of the system exists, which generically satisfies \Cref{proposition:globally-attracting-region-existence}. In contrast, if $R^{-1} = 0$, the hidden-state dynamics become linear. Structural stability is equivalent to hyperbolicity for linear vector fields (\cite{palis2012geometric}, Chapter 2.2). This is reflected in \Cref{proposition:structural-stability-Hopfield-newtork}.

\paragraph{Linearized gradient flow equations.}
The structural stability of the Hopfield network is now discussed. Let $\gamma$ be a gradient flow line whose endpoints are two critical points on hidden states. Consider $F_u(\gamma): \gamma \mapsto \frac{d}{dt} \, \gamma + \nabla V \, \circ \, \gamma$ where $\nabla V$ is understood as the gradient with respect to the feature neurons in the standard metric. Locally, around the image of $\gamma$, the linearization $DF_{A_{u}}$ of $F_u(\gamma)$ is computed using \eqref{equation:Floer-operator}\footnote{The notation $F_u(\gamma)$ and $DF_{A_{u}}$ are used to clarify that hidden state dynamics are considered.}: 
\begin{align*}
    DF_{A_{u}}(\gamma)_i &= \frac{d\gamma_i}{dt} \,  + \sum_{j,k} \frac{\partial g^{ij}}{\partial v_k} \gamma_k \, \frac{\partial V}{\partial v_j} + \sum_{j,k} g^{ij} \frac{\partial^2 V}{\partial v_j \partial v_k}\gamma_k \nonumber\\
    &= - \frac{\partial V}{\partial v_i} + \sum_{j,k}\delta^{ij}\frac{\partial^2 V}{\partial v_j \partial v_k}f^{-1}_k(v_k) \nonumber\\
    &= \sum_{j} W_{ij} v_j  - R^{-1}_{i} f_i^{-1}(v_i) + \sum_k \left( - W_{ik} + R^{-1}_{i } \delta_{ik}\frac{1}{f_i'(f_i^{-1}(v_i))} \right) f^{-1}_k(v_k) \, ,
\end{align*}
where $\gamma_i(t) = u_i$ and $f_k^{-1}(v_k) = \gamma_k(t)$. The second equality is obtained by substituting $g^{ij} = \delta^{ij}$ and $\partial g^{ij} / \partial v_k = 0$ together with the equality $d\gamma_i /dt = -\partial V/\partial v_i$. The third equality comes from substituting the equality $d{u_i}/dt = -\partial V/ \partial v_i$ and computing $\partial^2V/\partial v_j \partial v_k$ using the equality of $[f_i^{-1}]'(v_i)$ with $1/f_i'(f^{-1}_i(v_i))$.

Let $\alpha$ be a gradient flow line whose endpoints are two critical points on feature neuron states. Consider $F_v(\alpha): \alpha \mapsto \frac{d}{dt} \, \alpha + \nabla_g V \, \circ \, \alpha$ where $\nabla_g V$ is the Riemannian gradient in the metric with inverse $g^{ij} = f'(f^{-1}(v_i))\delta^{ij}$. The metric is not constant, so the term with partial derivatives of its inverse does not vanish in the local formula for the linearization of $F_v(\alpha)$. Since $g^{ij}$ is diagonal, this term is only nonzero when $i=j=k$. Consequently, at the image of $\alpha$, where $\alpha_i(t) = v_i$, the linearization of $F_v(\alpha)$ is,
\begin{align*}
    DF_{A_{v}}(\alpha)_i &= \frac{d\alpha_i}{dt} \,  + \frac{\partial g^{ii}}{\partial v_i}\alpha_i \frac{\partial V}{\partial v_i} + \sum_{j,k} g^{ij} \frac{\partial^2 V}{\partial v_j \partial v_k}\alpha_k \nonumber\\
    &=  f_i'(f_i^{-1}(v_i)))\left(\sum_{j} W_{ij} v_j  - R^{-1}_{i} f_i^{-1}(v_i)\right) \\
    &+  \frac{f''(f^{-1}(v_i))}{f'(f^{-1}(v_i))} v_i \left(\sum_{j} -W_{ij} v_j  + R^{-1}_{i} f_i^{-1}(v_i)\right)\\
    &+ f_i'(f_i^{-1}(v_i))\sum_k \left( - W_{ik} + R^{-1}_{i } \delta_{ik}\frac{1}{f_i'(f_i^{-1}(v_i))} \right) v_k \,.
\end{align*}
The first term in the second equality is again obtained from $d\alpha /dt$ being a gradient flow line. The second term follows from an application of the chain rule on the inverse metric entries followed by the equality of $[f_i^{-1}]'(v_i)$ with $1/f_i'(f^{-1}_i(v_i))$. The final equation is expanded by substituting the connection matrix and resistance terms as above. 

\paragraph{Morse condition.}
Recall that $\lim_{t\rightarrow + \infty}\gamma(t) = u^{*}_x$ and $\lim_{t\rightarrow - \infty}\gamma(t) = u^{*}_y$ for critical points $u^{*}_{x,y}$ on hidden states and $\gamma(t)$ a gradient flow line between them. Similarly, for a curve $\alpha(t)$ on feature neurons, we have $\lim_{t\rightarrow + \infty}\gamma(t) = v^{*}_a$ and $\lim_{t\rightarrow - \infty}\gamma(t) = v^{*}_b$ for critical points $v^{*}_{a,b}$. The asymptotic behavior of $DF_{A_{u}}$ and $DF_{A_{v}}$ gives the Hessian matrices at these critical points. On hidden neurons,
\begin{align*}
\label{equation:hopfield-hessian-hidden-states}
    \lim_{t\rightarrow -\infty} A_{u;ik}(t) = \sum_{j} \delta^{ij} \frac{\partial^2 V}{\partial v_j \partial v_k} (u^*_x) = -W_{ik} + R^{-1}_{i } \delta_{ik} \frac{1}{f_i'(u^*_{x;i})} \, ,
\end{align*}
where $u^{*}_{x;i}$ denotes the $i^\text{th}$ component of the critical point $u^{*}_{x}$. A similar expression is obtained for $\lim_{t\rightarrow +\infty} A_{v;ik}(t)$ by replacing $u^{*}_{x;i}$ with $u^{*}_{y;i}$. On feature neurons,
\begin{align*}
    \lim_{t\rightarrow -\infty} A_{v;ik}(t) = \sum_{j} g^{ij}(v^*_a) \frac{\partial^2 V}{\partial v_j \partial v_k} (v^*_a) =f_i'\left(f_i^{-1}(v^*_{a;i}) \right)\left( - W_{ik} + R^{-1}_{i } \delta_{ik} \frac{1}{f_i'(v^*_{a;i})}\right) \, .
\end{align*}
A similar expression for $\lim_{t\rightarrow +\infty} A_{v;ik}(t)$ follows from substituting $v^{*}_{a;i}$ with $v^{*}_{b;i}$.

The Morse condition is satisfied if these Hessian matrices at each critical point are full rank. This is easily checked when $R^{-1} \rightarrow 0$: if the resistance term $R^{-1}$ is zero and the weight matrix $W$ has a zero eigenvalue, then the network is not structurally stable. This regime is common in applications that lack physical or biological motivation.

\begin{proposition}
\label{proposition:structural-stability-Hopfield-newtork}
    The continuous Hopfield network with $R_i^{-1} = 0$ is not structurally stable if $\lambda_i = 0$ for any $i=1,...N$, where $\lambda_i$ are eigenvalues of the weight matrix $W$.
\end{proposition}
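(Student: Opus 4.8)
The plan is to show that the Morse condition of \Cref{definition:Morse-Smale-vector-field} fails and then invoke the theorem cited from \cite{smale1970structural} that a gradient field is structurally stable if and only if it is Morse-Smale. Setting $R_i^{-1}=0$ and using the standing assumption $B=0$, the energy \eqref{equation:Hopfield-energy} reduces to the quadratic form $V(v)=-\frac{1}{2}\sum_{i,j}W_{ij}v_iv_j$, whose Hessian in the feature coordinates is the \emph{constant} matrix $-W$ (the zero diagonal of $W$ is immaterial). Recall the network is the gradient system $X=-\nabla_g V$ on the feature domain (diffeomorphic to $D^N$), with inverse metric $g^{ij}(v)=f_i'(f_i^{-1}(v_i))\,\delta^{ij}$ symmetric positive definite because each $f_i$ is strictly increasing; structural stability is insensitive to the diffeomorphic change of coordinates $v=f(u)$ relating the feature and hidden presentations, so it does not matter which one we work in.

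Next I would linearize at a singularity $p$ of $X$: in coordinates $\partial_k X^i(p)=-(\partial_k g^{ij})(p)\,\partial_j V(p)-g^{ij}(p)\,\partial_j\partial_k V(p)$, and the first term vanishes because $dV_p=0$, so $DX_p=-G(p)\,\mathrm{Hess}(V)(p)=G(p)\,W$, where $G(p):=(g^{ij}(p))=\mathrm{diag}\!\big(f_i'(f_i^{-1}(p_i))\big)$ is symmetric positive definite and $\mathrm{Hess}(V)=-W$. This is the invariant content of the Hessian computed in the text, $\lim_{t\to-\infty}A_{u;ik}(t)=-W_{ik}$ at $R^{-1}=0$. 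Since $G(p)$ is positive definite, $G(p)W$ is conjugate to the symmetric matrix $G(p)^{1/2}\,W\,G(p)^{1/2}$, which by Sylvester's law of inertia has the same inertia as $W$; in particular $\dim\ker DX_p=\dim\ker W\geq 1$ because $\lambda_i=0$ for some $i$. Hence $0\in\mathrm{spec}(DX_p)$, so $p$ is not hyperbolic and $V$ is not a Morse function.

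Finally I would rule out the vacuous case by exhibiting a critical point. For the hyperbolic-tangent activation $v=0$ lies in the interior of $D^N$ and $\nabla V(0)=-W\cdot 0=0$; indeed, since $\ker W$ has dimension $\geq 1$, the critical set of $V$ is positive-dimensional, so its critical points are not even isolated, already contradicting the Morse property (\cref{subsubsection:asymptotic-stability}). More generally, any stored memory is an attractor, hence a singularity of $X$ (\cref{subsection:general-aspects-of-associative-memory}). In every case $X$ violates the Morse condition, so it is not Morse-Smale, hence not structurally stable. The only step that is not pure bookkeeping is the inertia argument showing that $G(p)W$ inherits the zero eigenvalue of $W$; the remainder is a specialization of the energy, a one-line derivative, the (easy) verification that a singularity exists, and the appeal to Smale's theorem.
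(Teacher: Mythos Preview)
Your proposal is correct and follows essentially the same route as the paper: the paper derives the Hessian at a critical point as $-W_{ik}+R_i^{-1}\delta_{ik}/f_i'(u^*_{x;i})$, observes that at $R^{-1}=0$ this reduces to $-W$, and concludes that a zero eigenvalue of $W$ violates the Morse condition and hence structural stability. Your argument is the same in spirit but more carefully executed: you work in the feature coordinates, correctly identify $DX_p=G(p)W$ with $G(p)$ the positive-definite inverse metric, and use the conjugacy $G(p)W\sim G(p)^{1/2}WG(p)^{1/2}$ together with Sylvester's law of inertia to transfer the zero eigenvalue---a step the paper leaves implicit. You also explicitly verify that a singularity exists (indeed that the critical set is positive-dimensional), which the paper does not address; this is a genuine improvement, since the Morse argument is vacuous without a critical point to examine.
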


It follows that the number of linearly independent patterns must be at least equal to the number of neurons for a Hopfield network with $R^{-1}=0$ to be structurally stable.

\begin{remark}
    Demanding that $W$ be full rank is easy to satisfy. Suppose that a network has a connection matrix $W$ with eigenvalues $\lambda_1,...,\lambda_N$ with $\lambda_i = 0$ for some $i = 1,...,N$. The eigendecomposition of $W$ is $W = Q \Lambda Q^T$ where $\Lambda$ is a diagonal matrix with $\Lambda_{ii} = \lambda_i$ for $i = 1,..., N$. For any $\epsilon > 0$, $\Lambda$ and $W$ can be modified by setting $\overline{\Lambda}_{ii} = \max(\lambda_i,\epsilon)$ and $\overline{W} = Q \overline{\Sigma} Q^T$. The matrix $\overline{W}$ is clearly full rank and it can be confirmed that it is symmetric (i.e., $\overline{W}^T = \overline{W}$) from the property that $(Q^T)^T = Q$ and $\overline{\Lambda}^T = \overline{\Lambda}$. Since $W$ and $\overline{W}$ are symmetric square matrices, their eigenvalues coincide with their singular values. The Frobenius norm between $\overline{W}$ and $W$ is then 
    \begin{equation*}
    || W - \overline{W}|| = \sqrt{\sum_{i=1}(\lambda_i - \overline{\lambda}_i)^2} = \sqrt{\sum_{j : \lambda_j < \epsilon}(\lambda_j-\epsilon)^2} \, ,
    \end{equation*}
    where the last equality sums over eigenvalues less than $\epsilon$ and can be arbitrarily small. 
\end{remark}

\paragraph{Smale condition.}
Let $\{ u^*_1,...u^*_n\}$ and $\{ v^*_1,...v^*_m\}$ be sets of hyperbolic singularities in hidden and feature states, respectively. On hidden neurons, choose non-overlapping Morse charts $(U_1, h_1),..., (U_n, h_n)$ in the neighborhoods of $u^*_1,...u^*_n$. Let $g_{E_1},...,g_{E_n}$ be the standard Euclidean metric with respect to the coordinates in each chart, and let $g_{E}$ be the standard metric. Let $B(u^*_1,\delta)\subset U_1$ be a coordinate ball of radius $\delta >0$ and $B(u^*_1,\epsilon)\subset B(u^*_1,\delta)$ with $\epsilon < \delta$. Choose a $C^{\infty}$ bump function $\Lambda: M \rightarrow [0,1]$ with $\Lambda(x) = 1$ for $x \in  B(u^*_1,\delta)$ and $\Lambda(x)=0$ for $x$ in a neighborhood of the complement $M \setminus B(u^*_1,\epsilon)$. The metric
\begin{equation*}
    g_u = g_E(y)(1-\Lambda(y)) + g_{E_{1}}(y)\Lambda(y)
\end{equation*}
is in standard form near $u^*_1$ on the Morse chart $(U_1, h_1)$ and can be extended globally as in the proof of \Cref{proposition:nice-metrics-dense}. Proceed iteratively on the remaining Morse charts so that $g_u$ is compatible with $(U_1, h_1),..., (U_n, h_n)$. The same procedure applied to suitable Morse charts for $\{ v^*_1,...v^*_m\}$ gives a gradient-like field with metric $g_v$ for feature neurons. In this case, replace the standard metric $g_E$ in the equation above with $g_0$, the metric on feature neurons determined by the activation function $f$, for $y$ in a Morse chart of $v^*_i$. The structural stability of these gradient-like systems, which have the same fixed points as the original network and differ by an arbitrarily small amount, are equivalent to the surjectivity of the linearizations $DF_{A_u}$ and $DF_{A_v}$.


\subsubsection{Boltzmann machines}
The Boltzmann machine is an energy-based model. Its energy function is exactly the Hopfield network energy, and the probability of a network being in a configuration is given by $P(v) = {Z}^{-1}\cdot e^{{-{V(v)}/{T}}}$ with $Z = \int e^{{-{V(v)}/{T}}}dv$,
where $T > 0$ is a noise scaling, or "temperature", parameter. The classic formulation considers binary states $v_i = \{0,1\}$ with an update rule that describes the probability of a neuron firing (\cite{ackley1985learning}). Following \Cref{subsection:applications-energy-based-models}, a continuous-state formulation can be defined using an overdamped Langevin equation: $d{u}^{\epsilon}_t = - \nabla V(u^{\epsilon}_t)\,dt + \sqrt{T}dw_t \,.$

It is known that, as $T \rightarrow 0$, the Boltzmann machine's update rule is equivalent to the update rule of the Hopfield network. When the Hopfield network is structurally stable (\Cref{subsubsection:hopfield-networks}), the results of this paper apply in the vanishing noise limit. 


\subsubsection{Learning dynamics of Hopfield networks and Boltzmann machines}
\label{subsubsection:learning-dynamics-hopfield}
Consider a Hopfield network with $N$ neurons, and let $\bm{\xi}=\{\zeta^1,...,\zeta^M\}$, $m\in \{1,...,M\}$, with each $\zeta^m \in \mathbb{R}^N$, be a set of $M$ patterns. The connection matrix $W$ of the Hopfield network is often trained to store $\bm{\xi}$ using a Hebbian learning rule (i.e., the outer product rule): $ W = \frac{1}{M} \sum_{m=1}^{M} \zeta^m (\zeta^m)^T$. This rule was originally biologically motivated, but can be derived as an unconstrained quadratic program. 

\begin{figure}[t!]
        \centering
        \includegraphics[width=1\textwidth]{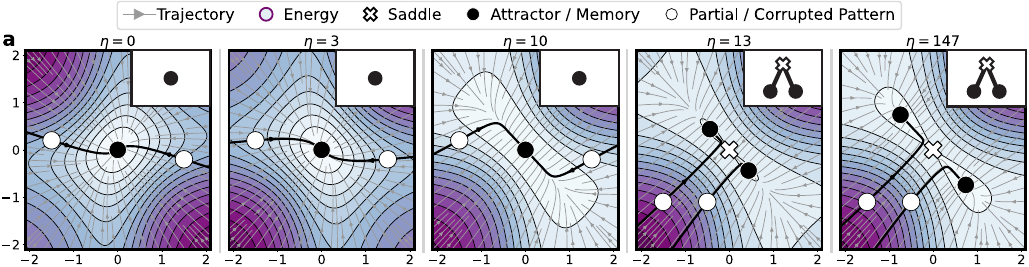}
        \caption{\textbf{Memory formation through Hebbian learning is generically described by codimension one bifurcations in Hopfield networks.} {\textbf{(a)} Hidden neuron trajectories (grey) and critical points of a one-parameter family of gradients (left to right, indexed by $\eta$) produced by training a 2-neuron Hopfield network using \Cref{algorithm:hebbian-learning-pgd} to store two memories, $(-1,1)$ and $(1,-1)$, in $\mathbb{R}^2$. Trajectories and critical points are overlayed on the energy surface given by \eqref{equation:Hopfield-energy} using $R^{-1}=0.85$. The learning rate and convergence tolerance of \Cref{algorithm:hebbian-learning-pgd} were set to $0.1$ and $1e^{-12}$, respectively. As the optimization index $\eta$ increases from the initial state $\eta_0 = 0$ to the final state $\eta_{\text{final}}=147$, a supercritical saddle-node bifurcation occurs, creating the stored memories. This bifurcation is reflected in the DAGs (insets) at each parameter value.}
        }
        \label{fig:figure-hebbian-bifurcations-hopfield}
\vspace{-0.5em}
\end{figure}

Consider minimizing the objective $\mathcal{L}(W) = - \frac{1}{2}\sum_{m=1}^{M} (\zeta^m)^T W \zeta^m$. The function $\mathcal{L}$ is convex, and the (negative) gradient $\partial L / \partial W = \sum_{m=1}^{M} \zeta^m(\zeta^m)^T$ is the Hebbian rule up to rescaling. There is no constraint to bound $\mathcal{L}(W)$ below, but it is natural to constrain the Frobenius norm $||W|| \leq c$ for $c>0$, resulting in the convex problem,
\begin{align}
\label{equation:hebbian-pgd-problem}
\min_W \quad & \mathcal{L}(W) = - \frac{1}{2}\sum_{m=1}^{M} (\zeta^m)^T W \zeta^m \hspace{1em}
\textrm{s.t.} \hspace{1em} ||W|| \leq c \,.
\end{align}
The Frobenius norm constraint is natural in that the unique solution to \eqref{equation:hebbian-pgd-problem} is scale-invariant. For any $c>0$, the solution is $cW^{*}$, where $W^{*}$ is the solution with unit norm. Similarly, the energy $V(v) = -\frac{c}{2} \sum_{i,j} W^*_{ij} v_i v_j$ is a scalar multiple of the energy corresponding to the weight matrix with unit norm, when $R^{-1}=0$. \Cref{algorithm:hebbian-learning-pgd} describes a projected gradient descent approach to train a Hopfield network in this manner.

Even if the bias and resistance terms, $B$ and $R^{-1}$, are nonzero, Hebbian learning produces a one-parameter family of matrices $\{W\}_{\eta \in \mathbb{R}}$ and gradients $\{X_{\eta}\}_{\eta \in \mathbb{R}}$. Generically, if $\overline{\eta}$ is a bifurcation value, $X_{\overline{\eta}}$ has one nonhyperbolic singularity corresponding to a saddle-node or a heteroclinic orbit of tangency. Consequently, memory formation is described by ordered sequences of these bifurcations. An example demonstrating a saddle-node bifurcation for a two-memory system trained using the Hebbian learning rule is shown in \colorAutoref{fig:figure-hebbian-bifurcations-hopfield}, and a higher-dimensional example is described in \colorAutoref{fig:supp-fig-algorithm-1}. Gradient-based approaches to training Boltzmann machines are standard, and this characterization applies to them in the zero-noise limit, as with energy-based models.


\subsection{Modern Hopfield networks}
\label{subsection:applications-modern-hopfield-networks}
Modern Hopfield networks were introduced in \cite{krotov2016dense}, where the energy of the classic Hopfield network with binary states was modified to $V = - \sum_{m=1}^{M} F(v^T \xi^m)$, where $F(x) = x^n$ is an $n^{\text{th}}$ order polynomial. The Hopfield model corresponds to $n=2$. These models were shown to achieve a maximum storage capacity proportional to $N^{n-1}$ where $N$ is the number of neurons. \cite{demircigil2017model} further extended the model with polynomial interactions by setting $F(x) = \exp(x)$, which yields exponential storage in the number of neurons ($2^{N/2}$).

The model with exponential interactions was further extended in \cite{ramsauer2020hopfield}. Let $\bm{\xi} = (\xi^1,..., \xi^m)$ be a matrix containing a set of $m$ patterns, where $\xi^m \in \mathbb{R}^d$, and let $C = \max_m || \xi^m ||$. The energy of this network is
\begin{gather*}
    V = \underbrace{-\text{lse}(\beta,\bm{\xi}^Tv)}_{V_1} + \underbrace{\frac{1}{2}v^Tv + \frac{1}{\beta} \log M+ \frac{1}{2} C^2}_{V_2} \\
    \mathrm{where } \qquad \text{lse}(\beta,\bm{\xi}) = \frac{1}{\beta}\log \left( \sum_{m=1}^M \exp (\beta \xi^m)\right) \, .
\end{gather*}
We consider the update rule equivalent to the attention mechanism (\cite{ramsauer2020hopfield}, Section 2),
\begin{equation*}
    v_{t+1} = \bm{\xi}\text{softmax}\left(  \beta \bm{\xi}^T v_t\right) = - \nabla \left( -\text{lse}(\beta,\bm{\xi}^Tv) \right) = -\nabla (V_1) \, ,
\end{equation*}
obtained from a convex-concave procedure, where the energy is a sum of $V_1$ and $V_2$ (\cite{yuille2001concave} and \cite{ramsauer2020hopfield}, Appendix A.1.4).

\paragraph{Transverse boundary.} The update rule $v_{t+1}$ converges to a globally attracting region diffeomorphic to the closed $d$-dimensional disc: $D_C = \{\,x\in \mathbb{R}^d \, : \, ||x|| \leq C \,\}$\footnote{See \cite{ramsauer2020hopfield}, Appendix A.1.4 on global convergence of the update rule $v_{t+1}$.}. From \Cref{proposition:globally-attracting-region-existence}, a compact subset $\tilde{D}_{C+\delta} \subset \mathbb{R}^d$ with boundary $\partial \tilde{D}_{C+\delta}$ transverse to the gradient flow for this update rule can always be constructed from a dense set of arbitrarily small modifications to $\partial D_{C+\delta}$ for $\delta > 0$. 

\paragraph{Morse condition.}
The Jacobian matrix $D(v_{t+1})$ of the update rule is
\begin{align*}
    D(v_{t+1}) &= \beta \bm{\xi}\left(\text{diag}(\bm{p})- \bm{p}\bm{p}^T \right)\bm{\xi}^T=\bm{\xi} J_s \bm{\xi}^T \hspace{1em} \text{with} \hspace{1em} \bm{p} = \text{softmax}(\beta\bm{\xi^T v}) \, ,
\end{align*}
where $J_s$ the Jacobian of the softmax (\cite{ramsauer2020hopfield}, Appendix A.1.5). The rank of $D(v_{t+1})$ is bounded above by $\text{rank}(\bm{\xi})$ and $\text{rank}(J_s)$. Consequently, if the number of linearly independent patterns, say $M$, is less than the dimensionality, $d$, then singularities are degenerate and the network is not structurally stable. The codomain of the softmax is the probability simplex, which requires only $M-1$ parameters to describe. This implies that the nullity of $J_s$ is at least $1$, and by the rank-nullity law, $\text{rank}(J_s)\leq M-1$ when $J_s: \mathbb{R}^M \rightarrow \mathbb{R}^M$ is viewed as a linear map. Suppose that $M=d$ and the patterns are linearly independent so that $\bm{\xi}$ is full rank. Then $\text{rank}(J_s) = d-1$, which implies that $\text{rank}(D(v_{t+1}))\leq d-1$. Therefore, to ensure that all critical points are nondegenerate, it is necessary that there be at least $d+1$ patterns with $d$ of them linearly independent. This is actually sufficient to satisfy the Morse condition, since in this case, $\text{rank}(\bm{\xi} J_s \bm{\xi}^T)=d$.

\paragraph{Smale condition.}
Suppose that $V_1$ is Morse and $\{\beta_1,...,\beta_n\}$ is a set of hyperbolic singularities. The update rule $v_{t+1}$ can be approximated by a gradient-like field adapted to $V_1$ by choosing suitable Morse charts at each critical point and following the proof of \Cref{proposition:nice-metrics-dense}, which is not recapitulated for brevity. The stable and unstable manifolds $W^s(\beta_j)$ and $W^u(\beta_i)$ are transverse when the section $F: \mathcal{P}_{\beta_{i},\beta_{j}}^{1,2} \rightarrow L^2(\mathcal{P}_{\beta_{i},\beta_{j}}^{1,2*}T\tilde{D}_{C+\delta})$ intersects transversally the zero section, where $\tilde{D}_{C+\delta}$ is the closed disc with boundary given by an inclusion map $\tilde{\Psi}_{C+\delta}$. In a coordinate patch at the image of a gradient flow line $\gamma \in \mathcal{F}^{1,2}_{\beta_{i}, \beta_{j}}$ outside adapted subsets of Morse charts for $\beta_{i}$ and $\beta_{j}$, the linearization $DF_A$ is 
\begin{align*}
    DF_A(\gamma)_i &= \frac{d\gamma_i}{dt} + \sum_{j,k} g^{ij} \frac{\partial^2 V_1}{\partial v_j \partial v_k} \gamma_k = \sum_j \xi_{ij} \text{softmax}\left(  \beta \bm{\xi}^T v_t\right)_j + \sum_{k} \left(\bm{\xi} J_s \bm{\xi}^T \right)_{ik} v_{t;k} \,,
\end{align*}
where
\begin{equation*}
    \text{softmax}\left(  \beta \bm{\xi}^T v_t\right)_j = \frac{e^{\left(  \beta \bm{\xi}^T v_t\right)_j}}{\sum_m \left(  \beta \bm{\xi}^T v_t\right)_m} \,
\end{equation*}
and $\gamma_k(t) = v_{t;k}$. The first term of $DF_A(\gamma)_i$ is obtained by substituting the update rule $v_{t+1}$ for $d\gamma_i/dt$, and the second is obtained using $D(v_{t+1})$ in place of $\partial^2V/\partial v_j \partial v_k$.

\begin{figure}[t!]
        \centering
        \includegraphics[width=1\textwidth]{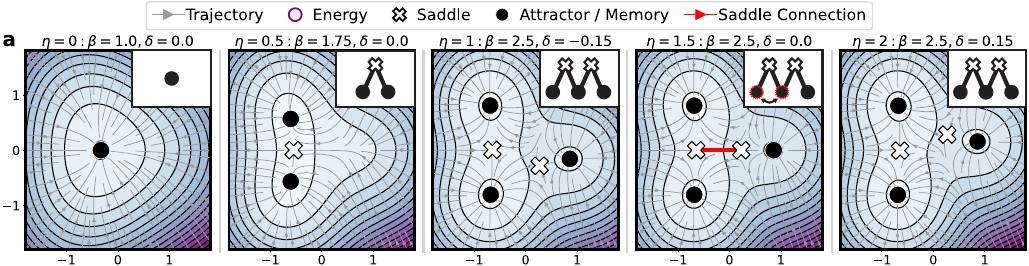}
        \caption{\textbf{Codimension one bifurcations of a modern Hopfield network.} {\textbf{(a)} Trajectories (grey) and critical points of a one-parameter family of networks (left to right) storing three patterns in $\mathbb{R}^2$: $(0.95,0 + \delta(\eta))$, $(-0.7,\sqrt{3}/2)$, and $(0.7,\sqrt{3}/2)$. The inverse temperature $\beta(\eta)$ and perturbation $\delta(\eta)$ are made to vary smoothly with $\eta \in [0,2]$ and constant for $\eta >2$ using a combination of Gaussian cumulative distribution (step) functions. As $\eta$ varies in $[0,2]$, a sequence of bifurcations occur, reflected in the DAGs at each parameter value (insets). Two supercritical saddle-nodes occur from $\eta=0$ to $\eta=1$ as $\beta$ increases, causing memories to form. From $\eta=1$ to $\eta=2$ a heteroclinic flip occurs by passing through an unstable intermediate state with a saddle-saddle connection along which the orbit is tangential (red line, DAG indicates node switching).}
        }
        \label{fig:figure-modern-hopfield-bifurcations}
\end{figure}

\paragraph{Learning dynamics.}
This model is typically an intermediate layer of a deep neural network, where the dynamics are defined in an embedding space. The dynamics can be considered on $\tilde{D}_{C'+\delta}$, where $D_{C'+\delta}$ is a closed disc with radius $C'$, where $C' = \max_{\eta} C$ is the uniform maximum norm of the patterns to store over the gradient descent index $\eta$, and $\tilde{D}_{C'+\delta}$ is constructed as in \Cref{proposition:globally-attracting-region-existence}. Understood as a dynamical system on the closed region $\tilde{D}_{C'+\delta}$, the learning process is generically characterized by the bifurcations of one-parameter families.

\paragraph{Metastability and temperature.} The "inverse temperature" parameter $\beta$ is analogous to a noise scaling parameter. When $\beta \rightarrow 0$ (high temperature), the softmax operation in the update $v_{t+1}$ is evenly distributed over the patterns, producing a single attractor. As $\beta$ increases, the softmax operation becomes sharper, mimicking a low temperature regime. An illustration of the two generic bifurcations that modern Hopfield networks can undergo as pattern positions and the inverse temperature smoothly vary with a parameter (e.g., optimization index or other) is given in \colorAutoref{fig:figure-modern-hopfield-bifurcations}.


\subsection{Denoising diffusion models}
\label{subsection:applications-diffusion-models-modern-hopfield}
Models in this section learn the reverse to diffusion processes that iteratively add noise to samples from a data distribution $p_{\text{data}}$ to some prior distribution $p_T$, for $T \in [0,\infty)$. We consider models whose forward noising processes are Ornstein–Ulhenbeck processes that are solutions to stochastic differential equations in $\mathbb{R}^n$ like
\begin{equation}
\label{equation:Ornstein-Uhlenbeck-forwards}
    d{x^{\epsilon}_t} = -\frac{1}{2}\beta_t x^{\epsilon}_t \,dt + \sqrt{\epsilon_t}\,dw_t \,  ,\hspace{1em} x_0 \sim p_{\text{data}} \, ,
\end{equation}
where $t \mapsto \beta_t$ is a positive weight function and $\epsilon_t$ is a time-dependent noise scale. Written in the form of \eqref{equation:stochastic-ode}, the diffusion coefficient is the $n \times n$ identity matrix. 

Denoising diffusion models (\cite{ho2020denoising,sohl2015deep}) are shown in \cite{song2020score} to be discretizations of \eqref{equation:Ornstein-Uhlenbeck-forwards}, where $\epsilon_t = \beta_t$, and are called variance-preserving. Song and colleagues also define sub-variance preserving models by defining the noise schedule
$\epsilon_t = \beta_t \, \left(1-e^{-2 \int_0^t \beta_s \, ds} \right)$. Likewise, they define variance-exploding models as SDEs with zero drift and a noise schedule $\epsilon_t = d[\sigma^2(t)]/dt$ where $\sigma^2(t)$ is obtained as the continuous limit of a Markov chain with a sequence $\{ \sigma_i \}_{i=1}^N$ of noise scales when $N \rightarrow \infty$ (\cite{song2020score}, Appendix B).

\paragraph{Noising dynamics.} An additional noise scaling parameter $\delta>0$ can be added to the right-hand sides of the stochastic equations. The drift term $\beta_t$ is affine (in fact, linear), and at a fixed time, it is constant, so its only fixed point is the origin. These forwards (noising) processes do not undergo nonlinear bifurcations in the $\delta \rightarrow 0$ limit, but changes in the sign of the eigenvalues of $D\beta_t(x_t)$ can alter the system's stable and unstable directions. The structural stability of these systems as linear vector fields $L: \mathbb{R}^n \rightarrow \mathbb{R}^n$ is simple: \textit{they are structurally stable if and only if they are hyperbolic}.

\paragraph{Generation dynamics.} Reverse processes to equations like \eqref{equation:stochastic-ode} are solutions to reverse stochastic differential equations of the form
\begin{equation}
\label{equation:reverse-diffusion}
    d{x_t^{\epsilon}} =  \left\{ b(x_t^{\epsilon}) - \nabla \cdot \left[ \epsilon^2 \sigma(x_t^{\epsilon})\sigma^*(x_t^{\epsilon}) \right] - \epsilon^2\sigma(x_t^{\epsilon})\sigma^*(x_t^{\epsilon}) \nabla \log p^{\epsilon}_t(x_t^{\epsilon})\right\}\,dt + \epsilon \sigma(x_t^{\epsilon}) \,d\overline{w}_t \, .
\end{equation}
The reverse SDE of \eqref{equation:Ornstein-Uhlenbeck-forwards} becomes
\begin{align}
\label{equation:reverse-diffusion-models-eqn}
    d{x_t^{\epsilon}} &=  \left\{ -\frac{1}{2}\beta_t x^{\epsilon}_t - \epsilon_t  \nabla \log p^{\epsilon}_t(x_t^{\epsilon})\right\}\,dt + \sqrt{\epsilon_t} \,d\overline{w}_t \nonumber \\
    &= - \nabla_x \left( \frac{1}{4} \beta_t||x^{\epsilon}_t||^2 + \epsilon_t \log p^{\epsilon}_t(x^{\epsilon}_t) \right) \, dt + \sqrt{\epsilon_t} \,d\overline{w}_t \, ,
\end{align}
after substituting terms into \eqref{equation:reverse-diffusion}. Evidently, the drift is the negative gradient of a time-varying potential, $V_t(x) = \frac{1}{4} \beta_t||x^{\epsilon}_t||^2 + \epsilon_t \log p^{\epsilon}_t(x^{\epsilon}_t) + c\, $, where $c$ is a constant. In the limit where $\epsilon_t \rightarrow 0$ uniformly, a time-dependent gradient system is obtained.

\begin{figure}[t!]
        \centering
        \includegraphics[width=1\textwidth]{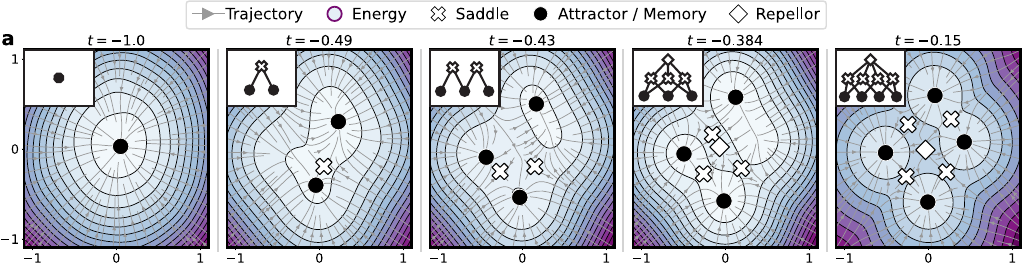}
        \caption{\textbf{Codimension one bifurcations characterize generation dynamics of denoising diffusion models.} {\textbf{(a)} Trajectories (grey) and critical points of a one-parameter family of gradients (left to right) derived from the probability flow ODE of a variance-preserving denoising diffusion model with linear noise schedule trained to generate four centroids in $\mathbb{R}^2$. The model was pretrained to generate a single centroid centered at the origin. As time runs backwards from $-1$ to $0$, a sequence supercritical saddle-node bifurcations occur, reflected in the DAGs at each parameter value (insets). Two supercritical saddle-nodes occur from $t=-1$ to $t=-0.43$. As $t$ continues to increase, another saddle-node occurs, creating a repellor and an index 1 saddle, visualized at $t = -0.384$, followed by an additional saddle node, visualized at $t=-0.15$.}
        }
        \label{fig:figure-diffusion-model-generation}
\vspace{-0.5em}
\end{figure}

It is more common to draw samples from $p_{\text{data}}$ using an ordinary differential equation derived from the forwards Kolmogorov equation, called the probability flow ODE (\cite{song2020score}, Appendix D). For \eqref{equation:stochastic-ode}, this deterministic ODE is
\begin{equation}
\label{equation:probability-flow-ode}
    d{x_t^{\epsilon}} =  \left\{ b(x_t^{\epsilon}) - \frac{1}{2}\nabla \cdot \left[ \epsilon^2 \sigma(x_t^{\epsilon})\sigma^*(x_t^{\epsilon}) \right] - \frac{1}{2}\epsilon^2\sigma(x_t^{\epsilon})\sigma^*(x_t^{\epsilon}) \nabla \log p^{\epsilon}_t(x_t^{\epsilon})\right\}\,dt \, ,
\end{equation}
whose solutions are distributed according to the induced marginal densities $p_t$ of \eqref{equation:stochastic-ode} for all $t$. The probability flow ODE for models with reverse SDEs like \eqref{equation:reverse-diffusion-models-eqn} is
\begin{align}
\label{equation:probability-flow-ode-VP-models}
    d{x_t^{\epsilon}} &=  \left\{ -\frac{1}{2}\beta_t x^{\epsilon}_t  - \frac{1}{2}\beta_t \nabla \log p^{\epsilon}_t(x_t^{\epsilon})\right\}\,dt  \nonumber \\
    &=-  \nabla_x \left( \frac{1}{4} \beta_t||x^{\epsilon}_t||^2 + \frac{1}{2} \epsilon_t \log p^{\epsilon}_t(x^{\epsilon}_t) \right) \, dt \, ,
\end{align}
which is the negative gradient of $V_t(x) =  \frac{1}{4} \beta_t ||x^{\epsilon}_t||^2 + \frac{1}{2} \epsilon_t \log p^{\epsilon}_t(x^{\epsilon}_t)  + c $, whose regularity is determined by the neural network parameterizing the score function, and which is evidently a model of associative memory. These equations describe variance and sub-variance preserving models, while variance-exploding models are defined by removing the quadratic norm-squared terms in \eqref{equation:probability-flow-ode-VP-models}, giving $V_t(x) =\frac{1}{2} \epsilon_t \log p^{\epsilon}_t(x^{\epsilon}_t)  + c $.

Data generation consists of solving the probability flow ODE backwards in time. When the time-varying score function is smooth, samples evolve via a smoothly-varying one-parameter family of gradients. Therefore, the generative process is generically characterized by ordered sequences of the bifurcations for one-parameter families. A denoising diffusion model trained using the score matching objective demonstrates this characterization of the generative process in \colorAutoref{fig:figure-diffusion-model-generation}.

\paragraph{Learning dynamics.}
"Learning" to generate is generically characterized by the bifurcations of two-parameter families. An illustration of the training and generation dynamics of the above model as a two-parameter family is given in \colorAutoref{fig:figure-training-generation-diffusion}.

\section{Conclusion and discussion}
\label{section:discussion-and-conclusion}
This paper describes a global geometric theory to characterize a transition from generation to memory at vanishing noise levels. We show that universal approximation, robustness to parameter and input perturbations, and the learning dynamics of associative memory models can be described generically in terms of Morse-Smale dynamical systems. In parallel, we extend this theory to generative diffusion models by analyzing their behavior in the zero-noise limit, thereby linking deterministic and stochastic formulations through a geometric perspective.

Reliable and stable associative memory satisfies the conditions for gradient fields to be Morse-Smale. Conversely, current models can be universally approximated by this class of systems. Memory retrieval is therefore generically downhill, and associated with each memory landscape is an invariant DAG that defines the connections between rest points (\colorAutoref{fig:figure-generic-properties}). The Morse lemma implies that these models generically satisfy the asymptotic stability property for memory storage. Moreover, they are structurally stable -- their fixed-point and orbit structure is unchanged in a neighborhood of a model in use. This takes the local form of stability global to describe the robustness of these systems to small model perturbations. Pragmatically, Morse-Smale associative memory models can be written as an inverse Riemannian metric times the gradient of a potential, which broadens their applicability to more geometrically structured domains. 

Generative diffusion appears as small random perturbations of associative memory, defined by stochastic differential equations with gradient drift (\colorAutoref{fig:figure-intuition-example-zero-noise}). Their invariant measures are Boltzmann-Gibbs distributions that, in the zero-noise limit, concentrate on the stable manifolds -- specifically, memories -- stored by the drift field. The Morse condition implies that these measures are stable to stochastic and deterministic perturbations on stable manifolds. The Morse-Smale assumptions extend these results globally; though, the image of a Boltzmann-Gibbs distribution under a homeomorphism between topologically equivalent systems may not preserve absolute continuity. Large deviation results from FW theory similarly imply that stochastic generative flows converge flows governing memory recall at vanishing noise levels. We introduced a definition for stochastic flows in the zero-noise limit analogous to the topological equivalence of deterministic flows to capture the stability of generation in this regime (\Cref{subsection:paths-large-deviations}). Together, these results show that the global fixed-point structure of these models generically persists under small stochastic or deterministic perturbations.

Memory formation occurs when the Morse-Smale assumptions are violated. The learning processes of associative memory models appear as one-parameter families of gradients tracing arcs in the space of gradient fields, and memory formation is generically described by only two bifurcations (\colorAutoref{fig:figure-intuition-saddle-node-bistable} and \colorAutoref{fig:figure-intuition-heteroclinic-flip-bifurcation}). Time-dependent models, described by two-parameter families, are characterized by eleven bifurcations (\Cref{subsubsection:two-parameter-families-gradients}). These results also apply to diffusion models that use deterministic probability flow ODEs for generation. Analogously, the definition introduced for equivalence of stochastic flows in the zero-noise limit implies that a broad class of models with time-varying drift generate data and learn through the bifurcations of one- and two-parameter families of gradients. These bifurcations correspond to topological changes in the system’s phase portrait that induce discrete changes in the memory DAG. Notably, memory formation can be understood as a sequence of graph edits: bifurcations create or eliminate rest points (nodes), or modify heteroclinic connections (edges) among nodes. These heteroclinic bifurcations are global and cannot be described by local bifurcation theory. The evolution of a DAG thus provides a combinatorial encoding of the global dynamics across parameter space and offers a natural language for describing the organization and restructuring of memory and generation landscapes.

Our results apply to classic models, including the Hopfield network and the Boltzmann machine, along with more general energy-based models and modern diffusion models (\Cref{section:applications-examples}). We give explicit computations that determine when Hopfield-type networks are not structurally stable (\Cref{subsection:applications-hopfield-boltzmann-models,subsection:applications-modern-hopfield-networks}). Although associative memory is generically structurally stable, even some simple scenarios violate the Morse-Smale conditions for these models, and the results of the modern Hopfield network are directly relevant to the attention mechanism. Nevertheless, it is not clear if the Morse-Smale constraints are computationally tractable to enforce, in general. It is also not clear if the connections made between generation and memory can be used to enhance model performance.

\subsection*{Acknowledgments}
J.H. and Q.M. thank Dmitry Krotov for helpful comments and feedback. J.H. is supported by a National Science Foundation Graduate Research Fellowship under grant no. 1746886.

\newpage
\appendix
\section*{Appendix}
\addtocontents{toc}{\protect\setcounter{tocdepth}{0}} 

\section{Zero-noise limits and small random perturbations proofs}
\label{appendix:strict-isomorphism-weak-convergence}

Two lemmas from
\Cref{subsection:robustness-diffusion-models} are proven below.

\vspace{0.5em}
\noindent
{\bf Proof of \Cref{lemma:strict-isomorphism-preserve-weak-convergence}}. By definition of weak convergence, for any continuous bounded function $f:M \rightarrow \mathbb{R}$,
\begin{equation*}
    \int_M f(x) \,d\mu_{n}(x) \rightarrow \int_M f(x) \,d\mu(x) \,.
\end{equation*}
Set $\nu_n = h_{\#}\mu_n$ and recall that by assumption $\nu = h_{\#}\mu$. Consider a continuous bounded function $g : M \rightarrow \mathbb{R}$. Since $g$ is bounded and continuous, and $h:M \rightarrow M$ is a continuous bijection, the composition $g \circ h: M \rightarrow \mathbb{R}$ is bounded and continuous. Combined with the equalities $\nu_n = h_{\#}\mu_n$ and $\nu = h_{\#}\mu$,
\begin{equation*}
    \int_M g(y) \,d\nu_{n}(y) = \int_M ( g \circ h )(x) \,d\mu_{n}(x) \hspace{1em} \text{and} \hspace{1em} \int_M g(y) \,d\nu(y) = \int_M ( g \circ h )(x) \,d\mu(x) \, ,
\end{equation*}
from the definition of the pushforward measure. By assumption, $\mu_n \rightarrow \mu$ weakly. Therefore,
\begin{equation*}
    \int_M ( g \circ h )(x) \,d\mu_{n}(x) \rightarrow \int_M ( g \circ h )(x) \,d\mu(x) \, ,
\end{equation*}
which immediately implies that
\begin{equation*}
    \int_M g(y) \,d\nu_{n}(y) \rightarrow \int_M g(y) \,d\nu(y) \, .
\end{equation*}
\hfill\BlackBox

\vspace{0.5em}
\noindent
{\bf Proof of \Cref{lemma:metric-equivalence-Morse-Smale}}. That $h: (M, \mathcal{B}, \mu) \rightarrow (M, \mathcal{B}, h_{\#}\mu)$ is a measurable space isomorphism is immediate, since it is a bijective measurable map with measurable inverse. By definition of pushforward measure $h_{\#}\mu(h(A)) = \mu(h^{-1}(h(A)))= \mu(A)$ for $A \in \mathcal{B}$, so $h$ is a measure space isomorphism. It remains to show that $h_{\#}\mu$ is invariant under $\phi_t^Y$. That is, we need to show that $h_{\#}\mu(A)=h_{\#}\mu((\phi_t^Y)^{-1}(A))$. By definition,
\begin{equation}
\label{equation:pushforward-lemma-morse-smale}
    h_{\#}\mu((\phi_t^Y)^{-1}(A)) = \mu\left(h^{-1} ( (\phi_t^Y)^{-1}(A))\right)
\end{equation}
for any $t$. By definition of topological equivalence, there are $t_1,t_2$ so that $h(\phi^X_{t_{1}}(x)) = \phi^Y_{t_{2}}\left( h(x)\right)$ for all $x \in M$. By definition, 
\begin{equation*}
    (\phi_t^Y)^{-1}(A) = \{ y \in M \, : \, \phi_t^Y(y) \in A\} \, .
\end{equation*}
That is, $y \in (\phi_t^Y)^{-1}(A)$ if an only if $\phi_t^Y(y) \in A$. Since $y \in M$ and $h:M \rightarrow M$ is bijective, there exists $x \in M$ such that $y = h(x)$. Then $h(\phi^X_{t_{1}}(x)) = \phi^Y_{t_{2}}\left( h(x)\right) = \phi^Y_{t_{2}}\left( y\right)$. Then $y \in (\phi_t^Y)^{-1}(A)$ if and only if $h(\phi^X_{t_{1}}(x)) \in A$ -- that is, if and only if $\phi^X_{t_{1}}(x) \in h^{-1}(A)$. Applying again the preimage definition, we have that $y \in (\phi_{t_2}^Y)^{-1}(A)$ if and only if for $x$ with $y=h(x)$ we have that $x \in (\phi^X_{t_{1}})^{-1} (h^{-1}(A))$. By construction $x = h^{-1}(y)$, so 
\begin{equation*}
    h^{-1}((\phi_{t_2}^Y)^{-1}(x)) \in A \qquad \text{if and only if}\qquad (\phi^X_{t_{1}})^{-1} (h^{-1}(x)) \in A \,.
\end{equation*}
Therefore,
\begin{equation*}
    (\phi^X_{t_{1}})^{-1} (h^{-1}(A)) = h^{-1}((\phi_{t_2}^Y)^{-1}(A)) \,.
\end{equation*}
Substituting this into \eqref{equation:pushforward-lemma-morse-smale},
\begin{equation*}
    h_{\#}\mu((\phi_{t_2}^Y)^{-1}(A)) = \mu\left((\phi^X_{t_{1}})^{-1} (h^{-1}(A))\right) \,.
\end{equation*}
That $\mu$ is invariant under $\phi_t^X$ means that $\mu(A) = \mu((\phi_t^X)^{-1}(A))$ for $A \in \mathcal{B}$. Therefore,
\begin{equation*}
    h_{\#}\mu((\phi_{t_2}^Y)^{-1}(A)) = \mu\left( h^{-1}(A)\right) = h_{\#}\mu(A) \,.
\end{equation*}
\hfill\BlackBox

\vspace{0.5em}
\noindent
{\bf Proof of \Cref{proposition:pathwise-convergence-preservation}}.
    From the definition of convergence in probability, 
    \begin{equation*}
        \lim_{\epsilon \rightarrow 0}P(||x^{\epsilon}_t - x^*_t || \geq \delta) = 0
    \end{equation*}
    for $\delta >0$. We need to show that,
    \begin{equation*}
        \lim_{\epsilon \rightarrow 0}P(|| h(x^{\epsilon}_t) - y^*_t || \geq \delta') = 0
    \end{equation*}
    for $\delta' >0$. The Heine-Cantor theorem implies that $h$ is uniformly continuous since $M$ is compact. Uniform continuity implies that for any $\delta' > 0 $, there exists a $\delta > 0 $ such that for all $x,y \in M$:
    \begin{equation*}
        ||h(x) - h(y)|| < \delta' \qquad \text{with} \qquad ||x-y|| < \delta \,.
    \end{equation*} 
    Equivalently, for any $\delta'>0$ there exists a $\delta > 0$ so that $||x^{\epsilon}_t - x^*_t || < \delta$ implies that $|| h(x^{\epsilon}_t) - y^*_t || < \delta'$. The contrapositive of this implication implies a bound,
    \begin{equation*}
        P(|| h(x^{\epsilon}_t) - y^*_t || \geq \delta') \leq P(|| x^{\epsilon}_t - x^*_t || \geq \delta) \,.
    \end{equation*}     
    By convergence in probability of the right-hand side $P(|| x^{\epsilon}_t - x^*_t || \geq \delta \rightarrow 0$ when $\epsilon \rightarrow 0$.
\hfill\BlackBox

\vspace{0.5em}
\noindent
{\bf Proof of \Cref{proposition:topological-eq-implies-zero-noise-equivalent}}. 
    Since $\mathcal{X}^{\epsilon}$ are small random perturbations, paths of $x^{\epsilon}_{t_1}$ converge in probability to the trajectories of $\phi_{t_1}^X$; similarly for $\mathcal{Y}^{\epsilon}$. So, (1) is satisfied. We need to show (2), that $h(\Phi_{t_1}^{X, \epsilon}(x,\cdot)) \xrightarrow[]{P} \Phi_{t_2}^{Y, \epsilon}(h(x),\cdot)$. From the definition of topological equivalence, $h(\phi^X_{t_1}(x)) = \phi_{t_2}^Y(h(x))$ for $x\in M$. The left-hand side of this equality well-defined, so \Cref{proposition:pathwise-convergence-preservation} implies that $h(x_{t_1}^{\epsilon}) \xrightarrow[]{P} h(x^*_{t_1})$. Then, $h(\Phi^{X,\epsilon}_{t_1}(x,\cdot)) \xrightarrow[]{P} h(\phi^X_{t_1}(x))$. Moreover, by (1), $\Phi^{Y,\epsilon}_{t_2}(h(x),\cdot) \xrightarrow[]{P} \phi^Y_{t_2}(h(x))$. Substitute $h(\phi^X_{t_1}(x)) = \phi_{t_2}^Y(h(x))$ to obtain $h(\Phi^{X,\epsilon}_{t_1}(x,\cdot)) \xrightarrow[]{P} \phi^Y_{t_2}(h(x))$. So, for any $\delta >0$,
    \begin{align*}
        \lim_{\epsilon \to 0} P \left(  \| \Phi^{Y,\epsilon}_{t_2}(h(x),\cdot) - \phi_{t_2}^{Y, \epsilon}(h(x)) \| \geq \delta \right) &= 0, \hspace{1em} \text{and} \\
        \lim_{\epsilon \to 0} P \left(  \| h(\Phi_{t_1}^{X, \epsilon}(x,\cdot)) - \phi_{t_2}^{Y, \epsilon}(h(x)) \| \geq \delta \right) &= 0 \, .
    \end{align*}
    The norm in (2) satisfies the triangle inequality,
    \begin{equation*}
         \| h(\Phi_{t_1}^{X, \epsilon}(x,\cdot)) - \Phi_{t_2}^{Y, \epsilon}(h(x),\cdot) \| \leq  \| \Phi^{Y,\epsilon}_{t_2}(h(x),\cdot) - \phi_{t_2}^{Y, \epsilon}(h(x)) \| +  \| h(\Phi_{t_1}^{X, \epsilon}(x,\cdot)) - \phi_{t_2}^{Y, \epsilon}(h(x)) \| \,.
    \end{equation*}
    Consequently, the probability $P \left(\| h(\Phi_{t_1}^{X, \epsilon}(x,\cdot)) - \Phi_{t_2}^{Y, \epsilon}(h(x),\cdot) \| \geq \delta \right)$ obtained from the left-hand side is less than or equal to,
    \begin{align*}
        & \qquad P \left( \| \Phi^{Y,\epsilon}_{t_2}(h(x),\cdot) - \phi_{t_2}^{Y, \epsilon}(h(x)) \| +  \| h(\Phi_{t_1}^{X, \epsilon}(x,\cdot)) - \phi_{t_2}^{Y, \epsilon}(h(x)) \|\ \geq \delta \right) \\
        & \leq P \left( \| \Phi^{Y,\epsilon}_{t_2}(h(x),\cdot) - \phi_{t_2}^{Y, \epsilon}(h(x)) \| \geq \alpha \right) +  P\left(\| h(\Phi_{t_1}^{X, \epsilon}(x,\cdot)) - \phi_{t_2}^{Y, \epsilon}(h(x)) \|\ \geq \kappa \right) \, ,
    \end{align*}
    where the second inequality comes from the union bound (Boole's inequality) for some $\alpha,\kappa > 0$. Set $\alpha, \kappa = \delta/2$. Then the two terms in the last inequality are equal to $0$ when $\epsilon \rightarrow 0$. Then $\lim_{\epsilon \to 0}P \left(\| h(\Phi_{t_1}^{X, \epsilon}(x,\cdot)) - \Phi_{t_2}^{Y, \epsilon}(h(x),\cdot) \| \geq \delta \right) = 0$ as desired.
\hfill\BlackBox

\section{Proofs on Morse theory}
\label{appendix:Morse-theory}

A proof of the Stable Manifold theorem for a Morse function is now given. First, a couple of lemmas adapted from \cite{milnor2025lectures} are stated. 

\begin{lemma}
\label{lemma:good-critical-points-milnor}
    Let $K$ be a compact subset of an open set $U \subset \mathbb{R}^n$ and $1 \leq r \leq \infty$. If $f\in C^r(U,\mathbb{R})$ has only nondegenerate critical points in $K$, then there exists an $\eta >0$ such that if $g\in C^r(U,\mathbb{R})$ and $|| \, D^i(f-g)(x) \, || < \epsilon$ for all $x\in K$, then $g$ has only nondegenerate critical points in $K$.
\end{lemma}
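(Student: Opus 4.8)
\textbf{Proof proposal for \Cref{lemma:good-critical-points-milnor}.}

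The plan is to argue by contradiction using compactness, reducing the statement to the continuity of the map that sends a $C^r$ function to the pair consisting of its gradient and its Hessian, evaluated on the compact set $K$. Suppose the conclusion fails. Then for every $\eta > 0$ there is a function $g \in C^r(U,\mathbb{R})$ with $\| D^i(f-g)(x) \| < \eta$ for all $x \in K$ and all $i$ up to $r$ (or at least up to $2$, which is all that is needed), yet $g$ has a degenerate critical point in $K$. Taking $\eta = 1/m$ produces a sequence $g_m$ with $\| D^i(f - g_m)(x)\| < 1/m$ on $K$ and points $x_m \in K$ at which $Dg_m(x_m) = 0$ and $\det D^2 g_m(x_m) = 0$. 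By compactness of $K$, pass to a subsequence with $x_m \to x_* \in K$.

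The key steps are then: (i) From $\| D(f-g_m)(x)\| < 1/m$ uniformly on $K$ we get $Dg_m \to Df$ uniformly on $K$; combined with $x_m \to x_*$ and continuity of $Df$, the relation $Dg_m(x_m) = 0$ passes to the limit to give $Df(x_*) = 0$, so $x_*$ is a critical point of $f$ in $K$. (ii) Similarly $\| D^2(f-g_m)(x)\| < 1/m$ uniformly on $K$ gives $D^2 g_m \to D^2 f$ uniformly, hence $D^2 g_m(x_m) \to D^2 f(x_*)$ (using joint continuity: $\| D^2 g_m(x_m) - D^2 f(x_*)\| \le \| D^2 g_m(x_m) - D^2 f(x_m)\| + \| D^2 f(x_m) - D^2 f(x_*)\|$, the first term bounded by $1/m$ and the second going to $0$ by continuity of $D^2 f$). (iii) Since $\det$ is continuous, $\det D^2 g_m(x_m) = 0$ for all $m$ forces $\det D^2 f(x_*) = 0$. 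Thus $x_* \in K$ is a degenerate critical point of $f$, contradicting the hypothesis that all critical points of $f$ in $K$ are nondegenerate.

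The argument is essentially routine once set up correctly; there is no serious obstacle. The one point requiring a little care is step (ii): one must not merely invoke $D^2 g_m(x_m) \to D^2 f(x_m)$ but combine the uniform estimate on $K$ with the genuine continuity of $D^2 f$ at $x_*$, which is where $f \in C^r$ with $r \ge 2$ is used. (If one only assumed $f \in C^1$ the Hessian would not be continuous and the statement would be false as phrased.) It is also worth noting that only the $i = 1$ and $i = 2$ estimates are actually used, so the hypothesis as stated — controlling all derivatives up to order $r$ — is more than enough; I would remark on this but not weaken the statement, since it is quoted from \cite{milnor2025lectures} in that form. Finally, the same contradiction scheme will be reused verbatim, with $K$ replaced by a finite union of compact coordinate pieces $K_i \subset U_i$ covering $M$, in the proof of \Cref{lemma:zero-noise-continuous-dependence}.
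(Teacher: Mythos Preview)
The paper does not actually prove \Cref{lemma:good-critical-points-milnor}; it is merely stated in Appendix~B as one of two lemmas ``adapted from \cite{milnor2025lectures}'' and then used as input to the proofs of \Cref{theorem:stable-manifold-theorem-morse} and \Cref{lemma:zero-noise-continuous-dependence}. So there is no paper proof to compare against.

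That said, your argument is correct and is the standard one: the contradiction-by-compactness scheme, extracting a convergent subsequence of degenerate critical points $x_m \to x_*$ and passing the conditions $Dg_m(x_m)=0$ and $\det D^2 g_m(x_m)=0$ to the limit via uniform $C^2$-closeness on $K$, is exactly how this is done in Milnor's lectures. Your remark that only the $i=1,2$ estimates are used is apt (and flags that the stated range $1 \le r$ should really read $2 \le r$ for the Hessian to be continuous), and your observation that the same scheme underlies the covering argument in \Cref{lemma:zero-noise-continuous-dependence} matches how the paper deploys the lemma. Nothing to fix.
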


\begin{lemma}
\label{lemma:density-critical-points-milnor}
    Let $d: K \rightarrow K'$ be a diffeomorphism of two compact subsets $K\subset U$ and $K'\subset U'$ for $U \subset \mathbb{R}^n$ and $U' \subset \mathbb{R}^n$. For any $\epsilon>0$, there exists a $\delta>0$ such that if a smooth map $f\in C^{\infty}(U', \mathbb{R})$ satisfies $|| \, D^i(f)(x') \, || < \delta$
    at all points $x'\in K'$, then $f \circ d$ satisfies $|| \, D^i(f\circ d)(x) \, || < \epsilon$
    at all points $x\in K$ for $i=0,1,2,...,r$.
\end{lemma}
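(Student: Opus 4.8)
The plan is to reduce everything to the higher-order chain rule (Faà di Bruno's formula) together with the compactness of $K$. First I would fix, once and for all, a bound on the derivatives of the diffeomorphism: by "diffeomorphism of the compact subsets $K$ and $K'$" we understand that $d$ is the restriction of a $C^r$ diffeomorphism between open neighborhoods of $K$ and $K'$, so each partial derivative $D^j d$ with $1\le j\le r$ is continuous, hence bounded on $K$. Set $C := \max\bigl\{1,\ \max_{1\le j\le r}\sup_{x\in K}\|D^j d(x)\|\bigr\}$; this constant depends only on $d$ and $r$, not on $f$. Note also that $d(x)\in K'$ for every $x\in K$, so the hypothesis on the derivatives of $f$ is available at each point $d(x)$.

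Next I would expand $D^i(f\circ d)(x)$ via Faà di Bruno's formula: for each $1\le i\le r$,
\[
    D^i(f\circ d)(x) \;=\; \sum_{\pi\in\mathcal{P}_i} c_\pi\; D^{|\pi|}f(d(x))\Bigl[\,\textstyle\prod_{B\in\pi} D^{|B|}d(x)\,\Bigr],
\]
where $\mathcal{P}_i$ is the finite set of partitions of $\{1,\dots,i\}$, the $c_\pi$ are combinatorial constants depending only on $\pi$, and each block $B\in\pi$ contributes a factor $D^{|B|}d(x)$ with $1\le|B|\le i\le r$. The key point is that the order $|\pi|$ of the derivative of $f$ appearing never exceeds $i\le r$, so the hypothesis $\|D^{|\pi|}f(d(x))\|<\delta$ always applies. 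Estimating term by term gives $\|D^i(f\circ d)(x)\|\le\bigl(\sum_{\pi\in\mathcal{P}_i}|c_\pi|\bigr)\,C^{\,i}\,\delta =: A_i\,\delta$, where $A_i$ depends only on $i$, $n$, and $d$; the case $i=0$ is immediate since $\|f(d(x))\|<\delta$.

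Finally, put $A := \max\{1,A_1,\dots,A_r\}$ and, given $\epsilon>0$, choose $\delta := \epsilon/(A+1)$. Then for every $x\in K$ and every $i=0,1,\dots,r$ we obtain $\|D^i(f\circ d)(x)\|\le A_i\delta\le A\delta<\epsilon$. Since $\delta$ was chosen using only $\epsilon$, $d$, $r$, and $n$ — and not $f$ — this is exactly the required uniform estimate.

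The main obstacle is bookkeeping rather than conceptual: one must verify carefully that in the Faà di Bruno expansion no derivative of $f$ of order exceeding $i$ ever occurs (so that the hypothesis is always applicable), that the number of terms and the coefficients $c_\pi$ are finite and independent of $f$, and that the products of derivatives of $d$ are uniformly bounded via compactness. A minor technical point, worth stating explicitly, is the meaning of "diffeomorphism of compact subsets" adopted above, which is what makes the derivatives $D^j d$ on $K$ well defined; with that convention the argument goes through as stated. If one prefers to avoid naming Faà di Bruno, the same bound follows by induction on $i$, differentiating the product-of-compositions expression for $D^{i-1}(f\circ d)$ with the product and chain rules, but this only re-derives the same combinatorics.
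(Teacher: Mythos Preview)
Your proposal is correct. Note, however, that the paper does not actually supply its own proof of this lemma: in Appendix~B it merely states Lemmas~\ref{lemma:good-critical-points-milnor} and~\ref{lemma:density-critical-points-milnor} as ``adapted from \cite{milnor2025lectures}'' and then proceeds directly to the proof of Theorem~\ref{theorem:stable-manifold-theorem-morse}. So there is nothing to compare against beyond Milnor's original argument, and your approach---bound the derivatives of $d$ uniformly on the compact set $K$, expand $D^i(f\circ d)$ via the higher-order chain rule, and observe that every $f$-dependent factor is a derivative of order at most $i\le r$ evaluated at a point of $K'$---is exactly the standard one. Your explicit remark about interpreting ``diffeomorphism of compact subsets'' as the restriction of a diffeomorphism of open neighborhoods is a useful clarification that the paper leaves implicit.
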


\vspace{0.5em}
\noindent
{\bf Proof of \Cref{theorem:stable-manifold-theorem-morse}}.
  (1): Let $M^{a} = V^{-1}(-\infty,a] = \{q \in W^s(p) \, : \, V(q) \leq a\}$ and $a,b \in \mathbb{R}$ with $a < b$. Then $M^a$ is diffeomorphic to $M^b$ so long as $V^{-1}[a,b]$ is compact and contains no critical points\footnote{see, e.g., \cite{milnor2016morse}, Theorem 3.1 for an alternative proof which reparametrizes the gradient flow to be constant within $[a,b]$ and vanish elsewhere.}. Set $t_a(q)$ as the unique time where $V(\phi_{t_a(q)}^X(q)) = a$ for $q \in M^b$. Then the assignment $q \mapsto \phi_{t_a(q)}^X(q)$ is a diffeomorphism from $M^b$ to $M^a$ pushing $M^b$ to $M^a$ along the trajectories of the gradient flow.

    Let $B_{\delta} \subset M$ be a coordinate neighborhood of $p$ with radius $\delta > 0$. Let $E^s \oplus E^u$ be the invariant splitting of $T_pM$ into contracting and expanding directions defined by the orthonormal basis and adapted norm (above). We consider (1) for $W^s(p)$. The proof for $W^u(p)$ is analogous. Set $B^s_{\beta} = B_{\beta} \cap E^s$ and $B^u_{\beta} = B_{\beta} \cap E^u$. Choose $\beta$ sufficiently small so that the local stable manifold theorem holds. Choose $\beta >0$ so that for $x \in B^s_{\beta}$, $x_1 = \, \cdot \cdot \cdot \, = x_{\lambda} = 0$. Denote this subset of $B_{\beta}$ by $W_{B_{\beta}}^s$. Since this basis $\left( \frac{\partial}{\partial x_1}, ..., \frac{\partial}{\partial x_n} \right)$ for $T_pM$ is orthonormal, the tangent space $T_p \, W_{B_{\beta}}$ at $p$ is the positive eigenspace of the Hessian $\left( \partial ^2 V / \partial x_j \partial x_i \right)$ at $p$. 
    
    Let $\gamma_x(t) = \phi^X_t(x)$ be a smooth gradient flow map. Order the eigenvalues of the Hessian so the first $\lambda_p$ ones are negative $(\alpha_1,...\alpha_{\lambda_p})$ and $n-\lambda_p$ are positive $(\beta_{\lambda_{p+1}},...,\beta_{\lambda_{n}})$. In $W_{B_{\beta}}^s$,
    \begin{equation*}
        \gamma_i(t) =\left\{ \begin{array}{cc} 
                \gamma_i(0) \,e^{|\alpha_{\lambda_i} |\,t}, & \hspace{5mm} \lambda_i \leq \lambda_p \\
                \gamma_i(0)\,e^{-|\beta_{\lambda_{i}} |\,t}, & \hspace{5mm} \lambda_i > \lambda_p \\
                \end{array} \right. 
    \end{equation*}
    where $\gamma(0)=x$ for $x \in W_{B_{\beta}}^s$ and $\gamma_i$ denotes the $i$-th component of $\gamma$.
    
    This explicit formula for $\gamma_i(t)$ in $W_{B_{\beta}}^s \subset W^s(p)$ shows that it is a smoothly embedded submanifold of $M$ (in accordance with the local stable manifold theorem). Any such open set extends smoothly throughout $W^s(p)$. Let $M^a_s = W_{B_{\beta}}^s \, \cap \, M^a$ and let $W_t^s = \phi_{-t}^X (x)$ for $x \in W_{B_{\beta}}^s$. Then by the above, $\phi^X_t$ applied backwards in time is a diffeomorphism from $M^a_s = W_{B_{\beta}}^s$ to some $W_t^s = M^b_s$. When $t \rightarrow \infty$, the gradient flow traces out larger portions of $W^s(p)$. Since each element of $W^s(p)$ is eventually in $W_{B_{\beta}}^s$,
    \begin{equation*}
        W^s(p) = \bigcup_{b>V(p)} M^b_s = \bigcup_{t} W^s_t \,.
    \end{equation*}
    Since $W_{B_{\beta}}^s \subset W^s(p)$ is an embedded submanifold of the same dimension, we have that it is an open neighborhood of $p$ contained in $W^s(p)$ with tangent space $E^s$. That the tangent space of $W^s(p)$ is $E^s$ follows. That $W^s(p)$ is an embedded submanifold of $M$ with the same regularity as $X$ follows from the regularity of the local stable manifolds composed with $\phi_t^X$. The dimensionality similarly follows. This proves (1).

    (2): The proof follows \cite{milnor2025lectures}, Theorem 2.7 which shows that Morse functions are open and dense. Let $(U_1,h_1),...,(U_k,h_k)$ be a finite covering of $M$ (by compactness). Choose compact sets $K_i \subset U_i$ such that $\bigcup_i K_i$ covers $M$. By \Cref{lemma:good-critical-points-milnor}, there is a neighborhood $\mathcal{N}_i$ of $V \in C^{r+1}(M,\mathbb{R})$ whose elements have no degenerate critical points on $K_i$. Let $\mathcal{N} = \bigcap_i^m \mathcal{N}_i$. Then any $g \in \mathcal{N}$ has no degenerate critical points on $M = \bigcup_i^n K_i$. Therefore, Morse functions are open in $C^{r+1}(M,\mathbb{R})$.

    Now choose a neighborhood $\mathcal{N}$ of $V \in C^{r+1}(M,\mathbb{R})$ and a smooth bump function $\Lambda: M \rightarrow [\,0,1 \,]$ with $\Lambda(x) = 1$ for $x$ in a neighborhood of $K_1$ and $\Lambda(x) = 0$ for $x$ in a neighborhood of the complement $M \setminus U_1$. Then generically, through an application of Sard's theorem, for almost all linear maps $L: \mathbb{R}^n \rightarrow \mathbb{R}$, the map,
    \begin{equation*}
        V_1(x) = V(x) + \Lambda(x) L(h_i(x))
    \end{equation*}
    has nondegenerate critical points on $K_1 \subset U_1$; see \cite{milnor2025lectures}, Lemma A, Page 11.

    The difference between $V_1(x)$ is supported on the compact subset $K = \text{supp}\, \Lambda(x) \subset U_1$. $L(x) = \sum_i l_i x_i$ is linear, so this difference is written,
    \begin{equation*}
       id \circ V_1 \circ h^{-1}_1(x)  - id \circ V \circ h^{-1} (x)
    \end{equation*}
    for $x \in h_1(K)$. Setting the coefficients $l_i$ small enough, it can be ensured that this difference, along with the $r^{\text{th}}$ order derivatives are smaller than any $\epsilon > 0$ on $h_1(K)$. If $\epsilon > 0$ is small enough, $V_1 \in \mathcal{N}$ by \Cref{lemma:density-critical-points-milnor}. Therefore $V_1$ has no degenerate critical points on $K_1$, and by \Cref{lemma:good-critical-points-milnor}, we may choose a neighborhood $\mathcal{N}_1 \subset \mathcal{N}$ so that any $g \in \mathcal{N}_1$ has nondegenerate critical points in $K_1$. This process can be repeated to obtain a $V_2 \in \mathcal{N}_2 \subset \mathcal{N}_1$ that has no degenerate critical points on $K_2 \cup K_1$. Applying this process iteratively, $V_k \in \mathcal{N}_k \subset \mathcal{N}_{k-1} \subset... \subset \mathcal{N}_1 \subset \mathcal{N}$ which has no degenerate critical elements on $K_1 \, \cup ... \cup \, K_k$.
    
    Let $\mathcal{N}$ be an $\eta$-neighborhood of $V \in C^{r+1}(M)$. That is, for each $V'\in \mathcal{N}$ and $i=0,...,r+1$,    
    \begin{equation*}
        || D^i(V' \circ h^{-1})(x)  - D^i( V \circ h^{-1}) (x)|| < \eta \,.
    \end{equation*}
    From above it follows that $\eta$ can be made sufficiently small so that each $V' \in \mathcal{N}$ has nondegenerate critical points on $\bigcup_i^k K_k= M$. The gradient fields $X = -\nabla_gV$ and $X' = -\nabla_gV'$ in the (fixed) Riemannian metric $g$ also satisfy
    \begin{equation*}
        \left|\left| D^i \left(g^{ij}\frac{\partial V \circ h^{-1}}{\partial x^j}\right)(x)  -D^i \left(g^{ij}\frac{\partial V' \circ h^{-1}}{\partial x^j}\right)(x) \right|\right| < \delta
    \end{equation*}
    for $i=0,...,r$ and some $\delta >0$, since derivatives of the metric on compact sets are bounded and so are the partial derivatives $\partial V \circ h^{-1}/\partial x^j$ and $\partial V' \circ h^{-1}/\partial x^j$. This difference is clearly controlled by the difference in the partial derivatives of $V$ and $V'$; both are supported on compact sets. Then as $V' \rightarrow V$ in the $C^{r+1}$ sense, $X' \rightarrow X$ in the $C^r$ topology. By definition, the gradient flow $\phi_t^X$ for $X = -\nabla_gV$ is determined by the unique solutions to $\frac{d}{dt}\phi^X_t(p) = X(\phi^X_t(p))$ given initial conditions; similarly for $\phi_t^{X'}$. From (1), $W^s(p) = \bigcup_{b>V(p)} M^b_s = \bigcup_{t} W^s_t$. That is, $W^s(p)$ is an embedded submanifold, and the inclusion $W^s(p) \hookrightarrow M$ can be taken as gradient flow lines $\gamma_x(t)$ (from the definition of stable manifold). Since any $V' \in \mathcal{N}$ has no degenerate critical points, the stable manifolds for the corresponding gradient system $X'$ are well-defined. Since $V'$ can be made arbitrarily close to $V$ in $\mathcal{N}$, so are their flow lines, and $X'$ can be made so that it has exactly one critical point arbitrary close to that of $X$ with the same index. Denote by $p'$ such a critical point and $W^s(p')$ the corresponding stable manifold for the flow $\phi_t^{X'}$. Then the gradient flow lines of $\phi_t^{X'}$ can be made within $\epsilon$ in the $C^r$ topology to $\phi_t^{X}$ for any $\epsilon>0$. That is, $W^s(p') \rightarrow W^s(p)$ in the $C^r$ sense as $V' \rightarrow V$ in the compact-open $C^{r+1}$ topology. This shows (2). 
\hfill\BlackBox

\section{Applications}
\label{appendix:hopfield}

\subsection{Proofs on generic conditions}
\label{appendix:generic-conditions}
The following lemma is needed for a proof of \Cref{proposition:globally-attracting-region-existence}:
\begin{lemma}[\cite{palis2012geometric}, Chapter 1.3]
\label{lemma:vectors-transverse-palis}
    Let $S \subset \mathbb{R}^n$ be a submanifold and $f \in C^{\infty}(M, \mathbb{R}^n)$. The set of vectors $u \in \mathbb{R}^n$ for which $f + u$ is transversal to $S$ is residual.
\end{lemma}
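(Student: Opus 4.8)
The statement is the classical parametric transversality lemma specialized to the translation family $f_u := f + u$, and the plan is to deduce it from Sard's theorem applied to an auxiliary evaluation map. First I would introduce the smooth map
\[
    F : M \times \mathbb{R}^n \longrightarrow \mathbb{R}^n, \qquad F(x,u) = f(x) + u,
\]
and observe that $F$ is a submersion: its differential $DF_{(x,u)}(v,w) = Df_x(v) + w$ is already surjective through the $w$-slot alone, since the partial derivative in $u$ is the identity. A submersion is transverse to every submanifold, so $F$ is transverse to $S$, and by the preimage theorem $W := F^{-1}(S)$ is a smooth submanifold of $M \times \mathbb{R}^n$ of codimension equal to $\text{codim}\, S$.

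Next I would apply Sard's theorem to the restriction of the second-factor projection $\pi : W \to \mathbb{R}^n$, $\pi(x,u) = u$. Since $M$ is a closed manifold and $\mathbb{R}^n$ is $\sigma$-compact, $W$ is $\sigma$-compact; the critical locus $C \subset W$, where $D\pi$ fails to be surjective, is closed in $W$, hence a countable union of compact pieces $C \cap K_k$ along an exhaustion. Each $\pi(C \cap K_k)$ is a compact set of Lebesgue measure zero by Sard, therefore nowhere dense, so the critical values $\pi(C)$ form a meager set and the regular values $R := \mathbb{R}^n \setminus \pi(C)$ are residual (indeed of full measure).

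The heart of the argument, and the step I expect to require the most care, is the identification of $R$ with the desired set, i.e. that $u$ is a regular value of $\pi$ exactly when $f + u$ is transversal to $S$. For $(x,u) \in W$ write $p = f(x) + u \in S$; using $D(f+u)_x = Df_x$ one computes $T_{(x,u)}W = \{(v,w) : Df_x(v) + w \in T_p S\}$ and $D\pi_{(x,u)}(v,w) = w$. Surjectivity of $D\pi$ at $(x,u)$ then amounts to the assertion that every $w \in \mathbb{R}^n$ lies in $T_p S + \text{Im}(Df_x)$, which is precisely the transversality condition $\text{Im}(D(f+u)_x) + T_p S = \mathbb{R}^n$ at $x$; when the fiber $\pi^{-1}(u)$ is empty the equivalence holds vacuously on both sides. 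Running this equivalence over all $x \in (f+u)^{-1}(S)$ yields $u \in R \iff f + u$ transversal to $S$, and combining it with the Sard step shows that the set of such $u$ is residual, completing the proof.
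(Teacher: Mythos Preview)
Your argument is correct and is precisely the standard parametric transversality proof via Sard's theorem; the paper, however, does not supply its own proof of this lemma at all --- it simply states it with a citation to \cite{palis2012geometric}, Chapter 1.3, and then uses it in the proof of \Cref{proposition:globally-attracting-region-existence}. So there is nothing to compare: you have filled in what the paper left as a reference, and your proof is essentially the one found in that reference.
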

\vspace{0.5em}
\noindent
{\bf Proof of \Cref{proposition:globally-attracting-region-existence}}. Denote by $\Psi_{\delta}: \partial D_{C+\delta} \hookrightarrow \mathbb{R}^n$ the inclusion of the boundary $\partial D_{C+\delta}$ in $\mathbb{R}^n$. We will show that, given a $\delta>0$, the map $\Psi_{\delta}$ can always be modified so that it intersects the gradient flow $\phi_t^X$ transversally. For any $\delta$, the set $D_{C+\delta} \setminus D_C$ contains no critical points. Therefore, defining $\Phi(x,t_x):= \gamma_x(t_x)$ for $x \in \partial D_{C+2\delta}$ gives a diffeomorphism $\Phi(x,t_x): \partial D_{C+2\delta} \rightarrow \partial D_C$\footnote{That $\Phi(x,t_x)$ is a diffeomorphism follows from, e.g., \cite{milnor2016morse}, Theorem 3.1.}. 

\begin{wrapfigure}{r}{0.42\textwidth}
    \vspace{1em}
    \centering
    \includegraphics[width=0.42\textwidth]{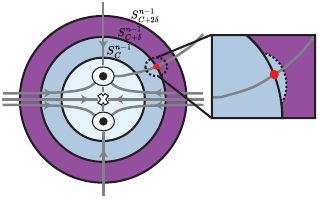}
    \caption{\textbf{Intuition for \Cref{proposition:globally-attracting-region-existence}}. {Critical points (filled black circles, white cross) lie in the interior of the disc $D_C$ (blue, magenta circles) with boundary $\partial D_C \cong S^{n-1}_{C}$ (black exterior). For a $\delta>0$, the disc $D_{C+2\delta}$ is diffeomorphic to $D_C$; flow lines (grey) intersect $D_{C+\delta}$. In a neighborhood of an intersection point (red circle), a bump function modifies $\partial D_{C+\delta}$} (inset) to intersect flow lines transversally.}
    \label{figure:transverse-boundary-construction}
    \vspace{0em}
\end{wrapfigure}  
Then for any $x \in \partial D_{C+2\delta}$, there is a time $t_x$ that the flow line $\gamma_{x}$ intersects $\partial D_{C+\delta}$. That is, $\gamma_{x}(t_x) \, \cap \, S^{n-1}_{C+\delta} = \{p \}$ for some $p \in S^{n-1}_{C+\delta}$. The sphere $S^{n-1}_{C+\delta}$ is compact, so choose a finite cover $(U_1,h_1), ... (U_n,h_n)$ and compact sets $K_i \subset U_i$ so that $K_1,...,K_n$ cover $S^{n-1}_{C+\delta}$. Choose a chart, say $(V_1, y_1)$ with $y_1 = id$ and $y_1: V_1 \rightarrow \mathbb{R}^n$ so that $\Psi_{\delta}(K_1) \subset V_1$. Choose a smooth bump function $\Lambda: M \rightarrow [0,1]$ with $\Lambda(x)=1$ for $x$ in a neighborhood of $K_1$ and $\Lambda(x)=0$ for $x$ in a neighborhood of the complement $S^{n-1}_{C+\delta} \setminus U_1$. Consider the map 
    \begin{equation*}
        \tilde{\Psi}_{\delta}(x) = \Lambda(x)\Psi^u_{\delta}(x) \, ,
    \end{equation*}
with $u \in \mathbb{R}^n$ a vector and $y_1 \circ \Psi^u_{\delta}(x) = y_1 \circ \Psi_{\delta}(x) + u = id \circ \Psi_{\delta}(x) + u $. By \Cref{lemma:vectors-transverse-palis}, there is a vector $u$ with arbitrarily small norm $|| u ||> 0 $ so that $y_1 \circ \Psi_{\delta}(x) + u$ is transversal to $y_1(\phi_t^X)  \subset \mathbb{R}^n$. Therefore $\tilde{\Psi}_{\delta}$ is transversal to $\phi_t^X$ on $K_1$. That $\tilde{\Psi}_{\delta}(x)$ remains one-to-one is clear. The difference between $\tilde{\Psi}_{\delta}(x)$ and $\Psi_{\delta}(x)$,
\begin{equation*}
     y_1 \circ \tilde{\Psi}_{\delta}\circ h_1^{-1}(x) - y_1 \circ \Psi_{\delta}\circ h_1^{-1}(x) = \Lambda \circ h^{-1}(x) + u \, ,
\end{equation*}
is supported on the compact set $K = \text{supp}\, \Lambda(x) \subset U_1$ for $x \in h_1(K)$. Given an initial neighborhood $\mathcal{N}_1$ of $\Psi_{\delta}$ in $C^{\infty}(S^{n-1}_{C+\delta},\mathbb{R}^n)$, it is easy to see that $|| u ||$ can be sufficiently small so that $\tilde{\Psi}_{\delta}(x) \in \mathcal{N}_1$. Repeating this process for a second compact set $K_2$ gives a map $\tilde{\Psi}_{\delta} \in \mathcal{N}_2$ transverse to $\phi_t^X$ on $K_2$. Since $\tilde{\Psi}_{\delta}$ is transverse to $\phi_t^X$ on $U_1$, $\tilde{\Psi}_{\delta} \in \mathcal{N}_1 \, \cap \, \mathcal{N}_2$ is transverse to $\phi_t^X$ on $K_1 \, \cup \, K_2$. Repeating this again produces a map $\tilde{\Psi}_{\delta} \in \mathcal{N}_1 \, \cap ... \cap \  \mathcal{N}_n$ which is transverse to $\phi_t^X$ on $K_1 \, \cup  ... \, \cup K_n$. \hfill\BlackBox

\vspace{0.5em}
\noindent
{\bf Proof of \Cref{proposition:nice-metrics-dense}}.
Since $M$ is compact, $\text{Crit}(V) = \{\beta_1,...,\beta_n\}$ is a finite number of critical elements. Let $(U_1, h_1),..., (U_n, h_n)$ be Morse charts in the neighborhoods of $\beta_1,...,\beta_n$ and $g_{E_1},...,g_{E_n}$ the standard euclidean metric with respect to the coordinates $(y^1,...,y^n)$ in each chart. Since these critical points are isolated, we may assume that $(U_1, h_1),..., (U_n, h_n)$ are non-overlapping. Let $B(\beta_1,\delta)\subset U_1$ be a coordinate ball of radius $\delta >0$ and $B(\beta_1,\epsilon)\subset B(\beta_1,\delta)$ with $\epsilon < \delta$ (which can be arbitrarily small). Choose a smooth ($C^{\infty}$) bump function $\Lambda: M \rightarrow [0,1]$ with $\Lambda(x) = 1$ for $x \in  B(\beta_1,\delta)$ and $\Lambda(x)=0$ for $x$ in a neighborhood of the complement $M \setminus B(\beta_1,\epsilon)$. Then the metric
\begin{equation*}
    g = g_0(y)(1-\Lambda(y)) + g_{E_{1}}(y)\Lambda(y)
\end{equation*}
is a metric on $U_1$, since the set of symmetric positive definite bilinear forms is a convex set and this is convex linear combination. Therefore, the metric $g$ is in standard form near $\beta_1$. By compactness, we may add additional charts $(U_j, h_j), 1\leq j \leq N$ with open images $\Omega_j$ under $h_j$ whose union with those of $(U_i,h_i)$ cover $M$. Then $g$ can be extended to all of $M$ by setting $g = g$ for $x$ in $U_1$ and $g = g_0$ otherwise. We now proceed in iterations. Apply the same procedure above on a Morse chart $(U_2, h_2)$ by choosing suitable coordinate balls $B(\beta_2, \epsilon) \subset B(\beta_2, \delta) \subset U_2$. Since $U_1$ and $U_2$ do not overlap, $g$ is automatically in standard form near $\beta_1$. After modifying it on $U_2$ with the bump function $\Lambda$, it is also in standard form near $\beta_2$. Applying this procedure to all remaining Morse charts produces a metric in standard form for all $\beta_1,...,\beta_n$. So, $g$ is compatible with the Morse charts of $V$.
\hfill\BlackBox

\subsection{Hopfield networks}
The following algorithm describes the projected gradient descent approach to training a Hopfield network using the Hebbian learning rule, followed by an application of it to a higher-dimensional system in \colorAutoref{fig:supp-fig-algorithm-1}.
\begin{figure}[t!]
        \centering
        \includegraphics[width=1\textwidth]{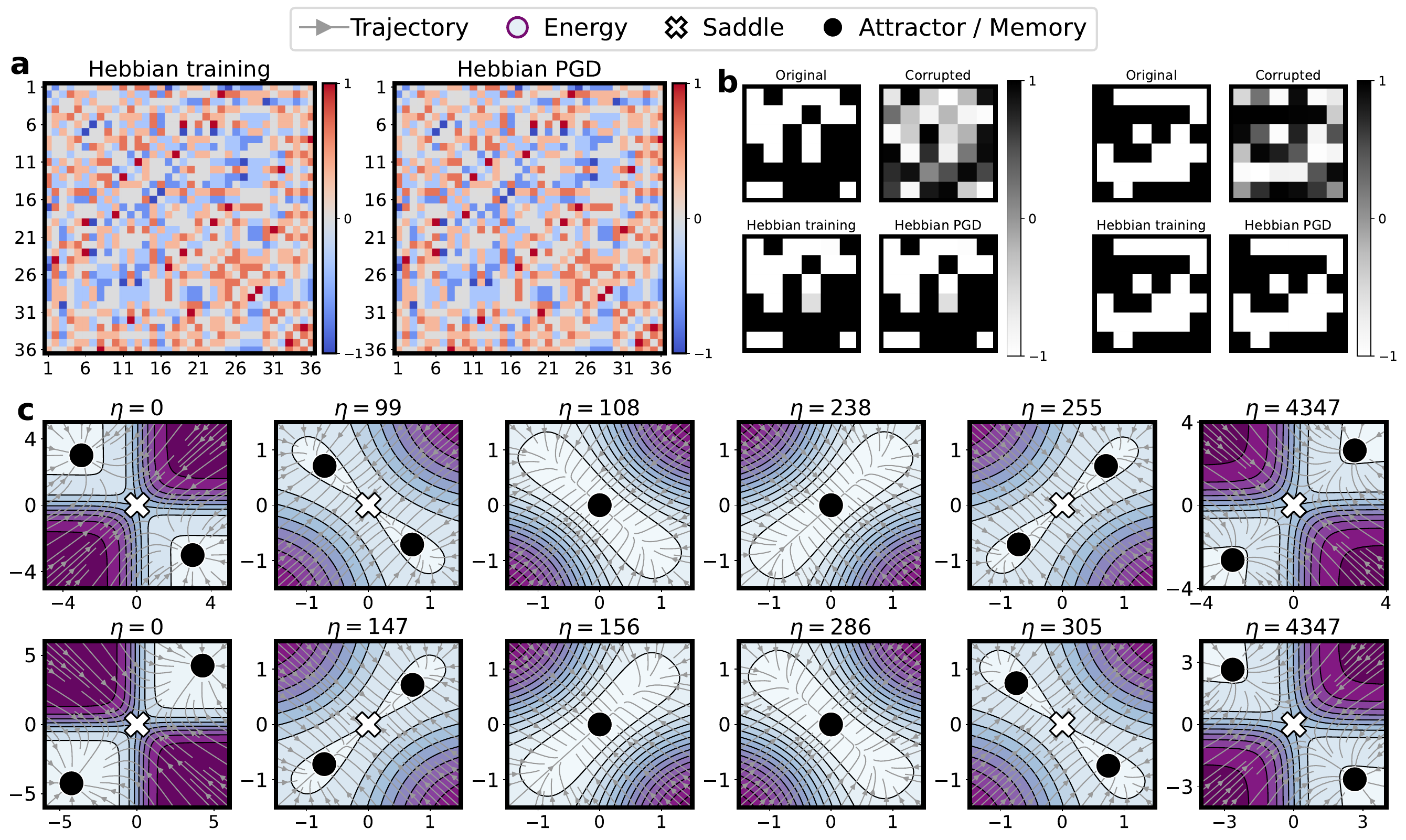}
        \caption{\textbf{Hebbian learning through projected gradient descent (\Cref{algorithm:hebbian-learning-pgd}).} \textbf{(a)} Synaptic connection matrix $W$ obtained by training a continuous-state Hopfield network with $36$ neurons to store six patterns in $\mathbb{R}^{36}$ using the standard Hebbian rule (left) and \Cref{algorithm:hebbian-learning-pgd} (right) -- the learning rate and convergence tolerance were set to $1e^{-3}$ and $1e^{-12}$, while the norm constraint was set as the norm of the weight matrix produced by the standard Hebbian rule. \textbf{(b)} Recall from two initial patterns corrupted with random Gaussian noise shows the concordance between the two approaches in \textbf{a}. Memory recall was performed by setting $R^{-1}=0.125$. \textbf{(c)} Hidden neuron trajectories (grey) and critical points of a one-parameter family of gradients (left to right, indexed by $\eta$) produced by training the network using \Cref{algorithm:hebbian-learning-pgd} and projecting on to two axes (top, axis 5 and 11; bottom, axis 21 and 22). Trajectories and critical points are projections by fixing the states of all other neurons to zero, and do not reflect the full dynamics. As the optimization index increases from the initial state $\eta_0 = 0$ to the final state $\eta_{\text{final}}=4347$, subcritical and supercritical saddle-node bifurcations occur.
        }
        \label{fig:supp-fig-algorithm-1}
\end{figure}

\vspace{1em}
\SetKwInput{KwInput}{Input}
\SetKwInput{KwOutput}{Output}
\begin{algorithm}[H]
\caption{Projected Gradient Descent Hebbian Learning}
\label{algorithm:hebbian-learning-pgd}
\SetAlgoLined
\KwInput{$\{\xi^1, \dots, \xi^M\}$: set of $M$ patterns; \\
         $\eta$: learning rate; \\
         $c$: Frobenius norm bound for projection constraint $\|W\|_F \leq c$; \\
         $\varepsilon$: tolerance for stopping condition.} 
\KwResult{Synaptic weight matrix $W$.}
\vspace{0.5em}
\textbf{Initialization:} \\
1. Randomly initialize weight matrix \( W \in \mathbb{R}^{N \times N} \). \\
2. Symmetrize $W$:
\hspace{1mm} $W \leftarrow \frac{W + W^T}{2}$ \\
3. Project $W$: \\
\If{$\|W\|_F > c$}{
        $W \leftarrow W \cdot \frac{c}{\|W\|_F}$ \tcp*{Project $W$ onto ball of radius $c$}
    }
\vspace{0.5em}
\While{not converged}{
    $W_{\text{prev}} \leftarrow W$ \tcp*{Store weight matrix}
    \vspace{0.5em}
    \textbf{Hebbian rule:} \\
    \For{each pattern $\xi^m$, $m=1, \dots, M$}{
        $W \leftarrow W + \eta \cdot \, \xi^m (\xi^m)^T$ \tcp*{Hebbian rule}
    }
    \vspace{0.5em}
    \textbf{Projection step:} \\
    \If{$\|W\|_F > c$}{
        $W \leftarrow W \cdot \frac{c}{\|W\|_F}$
    }
    \vspace{0.5em}
    \textbf{Stopping condition:} \\
    $\Delta W \leftarrow \|W - W_{\text{prev}}\|_F$ \\
    \If{$\Delta W < \varepsilon$}{
        \textbf{break}
    }
}
\end{algorithm}

\subsection{Denoising diffusion models}
The following figure shows the training and generation dynamics of the probability flow ODE of the denoising diffusion model from \colorAutoref{fig:figure-diffusion-model-generation} as a two-parameter family of gradients. The model was trained using the score matching objective, and the score function was parameterized by a three-layer multilayer perceptron with the softplus activation function and hidden dimensionality of 128. The score was calculated using automatic differentiation.

\begin{figure}[t!]
        \centering
        \includegraphics[width=1\textwidth]{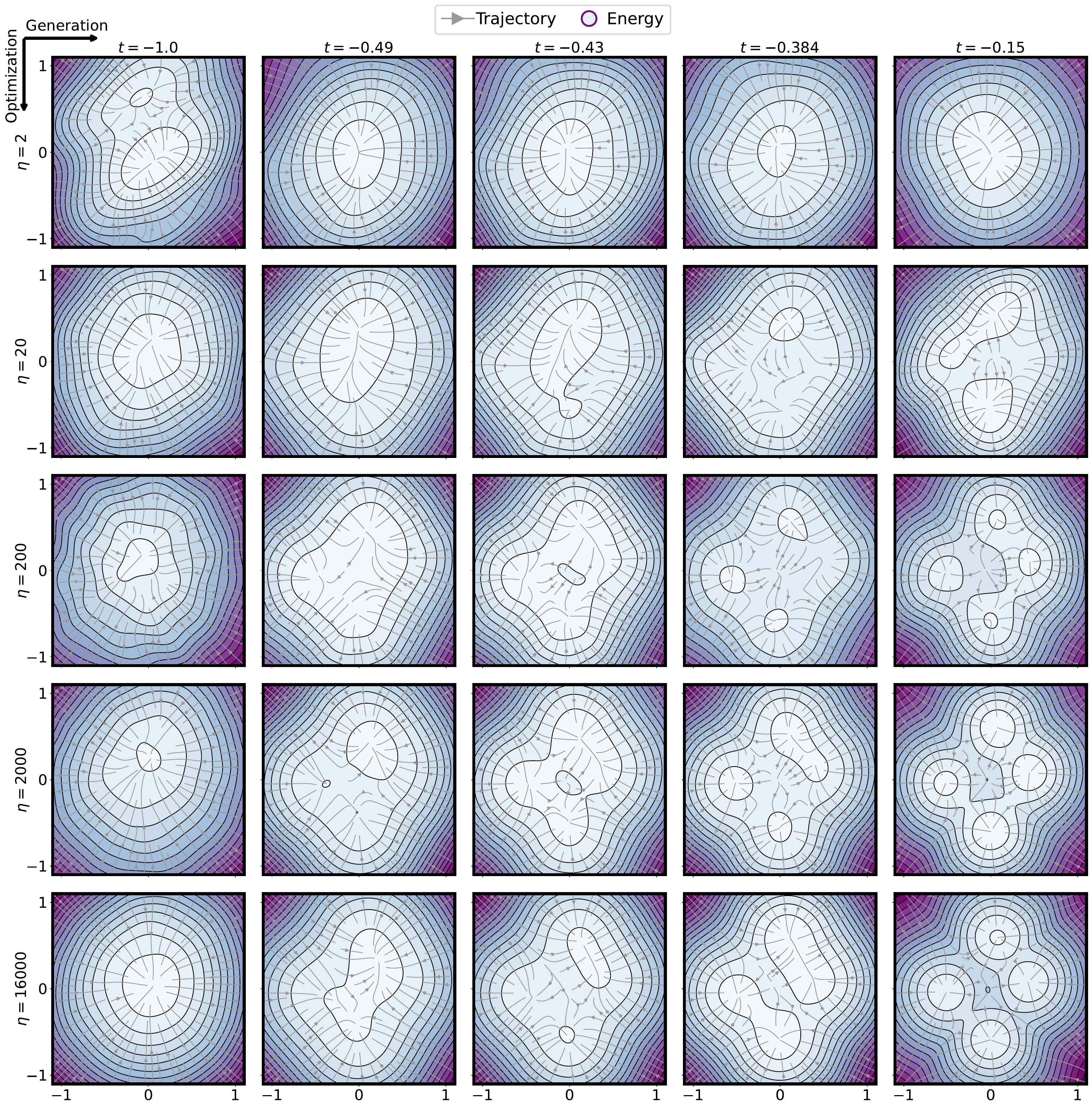}
        \caption{\textbf{Example of diffusion model generation and learning as a two-parameter family of gradients.} {Trajectories (grey) of a two-parameter family of gradients derived from the probability flow ODE of the variance-preserving diffusion model generating a data set with four centroids in $\mathbb{R}^2$ from \Cref{subsection:applications-diffusion-models-modern-hopfield}. The generation dynamics are obtained by solving the probability flow ODE backwards in time, with the energy of the time-varying potential depicted from left to right, indexed by time $t$, while the optimization dynamics are ordered from top to bottom by the gradient descent index $\eta$. Critical points are not shown.}
        }
        \label{fig:figure-training-generation-diffusion}
\end{figure}
\newpage

\bibliography{main}
\end{document}